\documentclass[a4paper]{article}
\usepackage{here}
\usepackage{lipsum}
\usepackage[authoryear]{natbib}
\usepackage{amsmath,amsthm,amssymb}
\usepackage{graphicx}
\usepackage{hhline,url}
\usepackage{authblk}
\usepackage{tikz}
\usepackage{comment}
\usetikzlibrary{positioning}
\usetikzlibrary{calc}

\addtolength{\oddsidemargin}{-70pt}
\addtolength{\textheight}{90pt}
\addtolength{\topmargin}{-50pt}
\addtolength{\textwidth}{140pt}

\newcommand{\acite}{\citet}
\renewcommand{\cite}{\citep}

\newtheorem{thm}{Theorem}[section]
\newtheorem{lem}[thm]{Lemma}
\newtheorem{remark}[thm]{Remark}
\newenvironment{rem}{\begin{remark}\rm}{\end{remark}}

\newtheorem{Definition}[thm]{Definition}
\newenvironment{dfn}{\begin{Definition}\rm}{\end{Definition}}
\newtheorem{Corollary}[thm]{Corollary}
\newenvironment{cor}{\begin{Corollary}}{\end{Corollary}}

\newcommand{\R}{\mathbb{R}}
\newcommand{\Z}{\mathbb{Z}}
\newcommand{\ve}{\varepsilon}

\newcommand{\F}{\mathcal{F}}

\newcommand{\N}{\mathcal{N}}
\newcommand{\dd}{\,\mathrm{d}}

\DeclareMathOperator{\conv}{conv}
\DeclareMathOperator{\sgn}{sgn}
\renewcommand{\hat}{\widehat}
\renewcommand{\tilde}{\widetilde}

\renewcommand{\phi}{\varphi}

\title{On the minimax optimality and superiority of deep neural network learning over sparse parameter spaces}                      

\author[1]{Satoshi Hayakawa\footnote{satoshi\_hayakawa@mist.i.u-tokyo.ac.jp}}
\author[1,2]{Taiji Suzuki\footnote{taiji@mist.i.u-tokyo.ac.jp}}

\affil[1]{Graduate School of Information Science and Technology, The University of Tokyo}
\affil[2]{Center for Advanced Intelligence Project, RIKEN}

\date{}

\providecommand{\keywords}[1]{\textbf{\textit{Keywords---}} #1}

\begin{document}

\maketitle

\begin{abstract}
	Deep learning has been applied to various tasks in the field of machine learning
and has shown superiority to other common procedures such as kernel methods.
To provide a better theoretical understanding of the reasons for its success,
we discuss the performance of deep learning and other methods
on a nonparametric regression problem with a Gaussian noise.
Whereas existing theoretical studies of deep learning have been based mainly on
mathematical theories of well-known function classes such as H\"{o}lder and Besov classes,
we focus on function classes with discontinuity and sparsity,
which are those naturally assumed in practice.
To highlight the effectiveness of deep learning, we compare deep learning with a class of linear estimators representative of a class of shallow estimators.
It is shown that the minimax risk of a linear estimator
on the convex hull of a target function class
does not differ from that of the original target function class.
This results in the suboptimality of linear methods
over a simple but non-convex function class,
on which deep learning can attain nearly the minimax-optimal rate.
In addition to this extreme case,
we consider function classes with sparse wavelet coefficients.
On these function classes, deep learning also attains the minimax rate up to log factors of the sample size,
and linear methods are still suboptimal if the assumed sparsity is strong.
We also point out that the parameter sharing of deep neural networks can
remarkably reduce the complexity of the model in our setting.\\
\keywords{neural network, deep learning, linear estimator, nonparametric regression, minimax optimality}
\end{abstract}


\section{Introduction}

Deep learning has been successfully applied to a number of machine learning problems,
including image analysis and speech recognition
\cite{schmidhuber2015deep,goodfellow2016deep}.
However, 
the rapid expansion of its applications has preceded a thorough theoretical understanding,
and thus the theoretical properties of neural networks and their learning 
have not yet been fully understood.
This paper aims to 
summarize recent developments in theoretical analyses of deep learning
and to provide new approximation and estimation error bounds that theoretically
confirm the superiority of deep learning to other representative methods.

In this section,
we present an overview of the paper;
here,
we prioritize understandability 
over strict mathematical rigor.
Some formal definitions and restrictions, such as for
measurability and integrability, are presented in later sections.


\subsection{Nonparametric regression}
Throughout this paper,
our intent is to demonstrate the superiority of the deep learning approach to other methods.
To do so, we consider a simple nonparametric regression problem
and compare the performance of various approaches in that setting.
The nonparametric regression problem we analyze is formulated as follows:
\begin{quote}
	We observe $n$ i.i.d.\ input--output pairs $(X_i, Y_i)\in[0, 1]^d\times\R$
	generated by the model
	\[
		Y_i=f^\circ(X_i)+\xi_i,\quad i=1,\ldots,n,
	\]
	where $\xi_i$ is an i.i.d. noise independent of inputs.
	The object is to estimate $f^\circ$ from the observed data.
\end{quote}
This problem setting has been commonly used in statistical learning theory
and is not limited to deep learning
\cite{yang1999information,zhang2002wavelet,tsybakov2008}.
In this paper, we assume the noise follows a Gaussian distribution.

In this scenario,
a neural network (architecture) is treated as a set of functions $\F\subset\{f:[0, 1]^d\to\R\}$.
Other estimation methods such as kernel ridge regression and wavelet threshold estimators
are also regarded as such mappings
\cite{bishop2006,donoho1994ideal}.
In this paper, we evaluate the performance of estimators by the expected mean squared error
$\mathrm{E}\left[\|\hat{f}-f^\circ\|_{L^2}^2\right]$ 
(we call this quantity the ``estimation error'' for simplicity) dependent on $n$
following convention \cite{wang2014adaptive,schmidt2017nonparametric,suzuki2018adaptivity},
where the expectation is taken with respect to the training data.
Usually, the $L^2(\mathrm{P}_X)$ norm (where $\mathrm{P}_X$ is the distribution of $X_i$) has been used in existing studies
instead of the Lebesgue $L^2$ norm,
but in evaluation the upper- (and lower-) boundedness of the density is typically assumed,
so for simplicity, we treat $X_i$ as uniformly distributed.

If we fix the set of true functions $\F^\circ$
(called a hypothesis space),
\[
	\sup_{f^\circ\in\F^\circ}\mathrm{E}\left[
		\| \hat{f}-f^\circ \|_{L^2}^2
	\right]
\]
is the worst-case performance of the estimator $(X_i, Y_i)_{i=1}^n\mapsto\hat{f}$.
We are interested in its asymptotic convergence rate
with respect to $n$, the sample size.
The minimax rate is determined by the convergence rate of
\[
	\inf_{(X_i, Y_i)\mapsto\hat{f}}\sup_{f^\circ\in\F^\circ}
	\mathrm{E}\left[
		\| \hat{f}-f^\circ \|_{L^2}^2
	\right],
\]
where $\inf$ is taken over all possible estimators.
We compare this to the convergence rate of
fixed (with respect to $n$) sequences of estimators
determined by some learning procedure such as deep learning
to evaluate how efficient the estimation method is.

As a competitor of deep learning,
a class of ``linear estimators'' is considered.
Here, we say an estimator is linear if it depends linearly on the outputs
$Y_i$; it is expressed as
\[
	\hat{f}(x)=\sum_{i=1}^nY_i\phi_i(x;X_1,\ldots,X_n).
\]
This estimator class includes several practical estimators such as
kernel ridge regression and the Nadaraya--Watson estimator.
The minimax rate in the class of linear estimators can be slower under some settings;
e.g.,
\begin{align*}
	\inf_{(X_i, Y_i)\mapsto\hat{f}}\sup_{f^\circ\in\F^\circ}
	\mathrm{E}\left[
		\| \hat{f}-f^\circ \|_{L^2}^2
	\right]\le
	n^{-\gamma}
	\inf_{\hat{f}:\text{linear}}\sup_{f^\circ\in\F^\circ}
	\mathrm{E}\left[
		\| \hat{f}-f^\circ \|_{L^2}^2
	\right]
\end{align*}
holds for some $\gamma>0$
(for most cases, we consider only the polynomial order).
Such situations were reported earlier by several authors  \cite{korostelev1993minimax,donoho1998minimax,zhang2002wavelet}.
In terms of deep learning analysis,
a comparison of deep learning with linear methods has been performed by \acite{imaizumi2018deep}. 
The present paper also shows the suboptimality of linear methods (Table \ref{table:1})
for sparse function classes, which we define later.
\begin{table*}[h]
	\centering
	\caption{Estimated error bound: deep learning vs. linear methods}
	\label{table:1}
	\begin{tabular}{|c||c|c|c|}
	\hline
		Target class & $B_{p, q}^s([0,1])\ (p<2)$
		& $\mathcal{I}^0_\Phi$ & $\mathcal{K}^p_\Psi\ (p<1)$
		\rule[-2mm]{0mm}{7mm}\\
		\hhline{|=#=|=|=|}
		Deep learning
		& $\tilde{\mathrm{O}}(n^{-\frac{2s}{2s+1}})$ &
		\begin{tabular}{c}
			$\tilde{\mathrm{O}}(n^{-1})$ \rule[-2mm]{0mm}{7mm} \\
			(Sec. \ref{revsec1})
		\end{tabular}
		& \begin{tabular}{c}
			$\tilde{\mathrm{O}}(n^{-\frac{2\alpha}{2\alpha+1}})\ 
			(\alpha=\frac1p-\frac12)$  \rule[-2mm]{0mm}{7mm} \\
			(Sec. \ref{revsec2})
		\end{tabular}
		\\\hline
		Reference & \acite{suzuki2018adaptivity} & 
		\multicolumn{2}{|c|}{This work} \rule[0mm]{0mm}{4mm}\\
		\hhline{|=#=|==|}
		Linear methods & 
		$\Omega(
			n^{-\frac{2\gamma}{2\gamma+1}
		})$ $(\gamma=s+\frac12-\frac1p)$ 
		& \multicolumn{2}{|c|}{
		\begin{tabular}{c}
		$\Omega(n^{-\frac12})$\rule[-2mm]{0mm}{7mm} \\
		(Sec. \ref{sec:3.2}, \ref{revsec1}, \ref{revsec2})
		\end{tabular}}
		\\\hline
		Reference & 
		\begin{tabular}{c}
			\acite{donoho1998minimax}\\
			\acite{zhang2002wavelet}
		\end{tabular} & 
		\multicolumn{2}{|c|}{This work} \rule[0mm]{0mm}{6mm}\\\hline
	\end{tabular}
	\\
	\vspace{2mm}
	{\small
	{\sf Note.}
	More precisely,
	the actual target function classes are the unit balls of the classes as written.
	$\tilde{\mathrm{O}}$ means $\mathrm{O}$ up to poly-log factors, and
	for each class shown in this table, deep learning attains the minimax-optimal rate
	in the sense of $\tilde{\mathrm{O}}$.
	$B_{p, q}^s$ denotes the Besov space with parameters $(s, p, q)$,
	and the parameters are additionally required to satisfy
	$s>1/p$ and $p, q\ge1$ or else $s=p=q=1$.}
\end{table*}

Our main contribution here is that we find a quite simple and natural
function class $\mathcal{I}_\Phi^0$
for which deep learning attains nearly the optimal rate, whereas linear methods are not able to converge faster than
the suboptimal rate $\mathrm{O}(n^{-1/2})$.
In the next subsection, we explain
how to treat and analyze deep learning in the context of statistical learning theory.


\subsection{Related work on estimation of deep neural networks}

Deep neural networks have a structure of alternating
linear (or affine) transformations and nonlinear transformations;
i.e., in one layer $x$ is transformed to $\rho(Wx-v)$,
where $W$ is a matrix and $v$ is a vector and $\rho$ is a nonlinear function
called an activation function.
It is known that the repeated operation of this transformation
gives a nice approximation of a wide class of nonlinear functions.

Traditionally, sigmoidal functions
have been commonly used as activation functions:
\[
	\sigma:\R\to\R,\quad \text{with}\quad
	\lim_{t\to\infty}\sigma(t)=1,\ \lim_{t\to-\infty}\sigma(t)=0.
\]
It is known that the set of functions realized by
shallow networks with continuous sigmoidal activation
is dense in any $L^p$ space unless the number of parameters is not limited
\cite{cybenko1989approximation}.
However, a similar result has also been shown for non-sigmoidal activation cases
\cite{sonoda2017neural}.
In particular, the Rectified Linear Unit (ReLU) activation function $\rho(x)=\max\{x, 0\}$
has shown practical performance \cite{glorot2011deep} and is now widely used.

Basically, deep learning trains a network by minimizing
the empirical risk with some regularization:
\[
	\text{minimize}\  \frac1n\sum_{i=1}^n(f(X_i)-Y_i)^2+\lambda(f)\quad
	\text{subject to}\ f\in\F,
\]
where $\lambda(f)$ is the regularization term and
$\F$ is the set of functions
that are realizations of a specific neural network architecture.
This optimization is usually carried out by Stochastic Gradient Descent (SGD)
or a variant of it, and the output is not necessarily the global minimum
\cite{goodfellow2016deep}.
In the present paper, however, we do not treat this optimization aspect, and we assume an ideal optimization.

The number of parameters in deep learning tends to be much larger than the sample size,
and hence without any regularization, deep models can overfit the training sample.
To overcome this issue,
existing studies have utilized sparse regularization
to obtain networks with a small number of nonzero parameters.
This enables us to 
obtain a tight estimate of the error bounds
using the result of approximation error analysis
(see Section \ref{sec:2.2}).

\acite{yarotsky2017error} reported the effectiveness of ReLU activation
in terms of approximation ability,
and the result has been exploited in estimation theory for deep learning
\cite{schmidt2017nonparametric,suzuki2018adaptivity}.
Their target function classes are H\"{o}lder space $C^s$ and Besov space $B_{p, q}^s$,
which are compatible with functional analysis or
the theory of differential equations.
\acite{schmidt2017nonparametric} also pointed out that
deep learning is superior to linear estimators
when the target function is of the form $f^\circ(x)=g(w^\top x)$,
with $g$ having some H\"{o}lder smoothness.
In addition, \acite{imaizumi2018deep} treated estimation theory for piecewise smooth functions
using the approximation theory described in \acite{petersen2018optimal}.
This paper investigates new target classes
to demonstrate the effectiveness of deep learning with ReLU activation.


\subsection{Contribution of this paper}\label{sec:contribution}

The situations where deep learning shows speriority to linear estimators
have been studied with particular theories:
\begin{itemize}
	\item
		$f^\circ(x)=g(w^\top x)$ with $g$ being smooth \cite{schmidt2017nonparametric},
	\item
		piece-wise smooth functions \cite{imaizumi2018deep},
	\item
		Besov spaces in a certain rage of parameters \cite{suzuki2018adaptivity}.
\end{itemize}
These situations can be understood mathematically through
the non-convexity of models.
To highlight this property,
we introduce a sparse function class (and therefore non-convex)
and investigate the generalization ability of deep learning and linear estimators
over the space to see how sparsity (non-convexity) affects the estimation error.
We indeed show that linear estimators have a critical disadvantage when the target class
is non-convex (Section \ref{chap:3}).
In contrast, deep learning is shown to have the optimality
over the sparse function classes we define (Section \ref{chap:5}).
These classes are natural in the sense we describe in the next paragraph.

\begin{table*}[h]
	\centering
	\caption{Correspondence between mathematical features and the real-world things}
	\label{table:rev1}
	\begin{tabular}{|c||c|c|}
		\hline
			Feature & Real-world counterpart & Examples \\
		\hhline{|=#=|=|}
			nonparametric regression & real-world estimation problems
			& \begin{tabular}{c}
				image denoising, \\
				classification problem,
				style transfer
			\end{tabular}\\
		\hline
			basis function & localized pattern
			& \begin{tabular}{c}
				pronunciation of vowels,\\
				painting style,
				fractal structure in nature
			\end{tabular}\\
		\hline
			\begin{tabular}{c}
				sparse combination \\
				of basis functions
			\end{tabular}
			& real data
			& \begin{tabular}{c}
				speech of one person, \\
				a painting,
				a noisy picture 
			\end{tabular}\\
		\hline
			sparsity
			& \begin{tabular}{c}
				low dimensionality \\
				of actual information
			\end{tabular}
			& \begin{tabular}{c}
				natural language, \\
				handwriting alphabets
			\end{tabular}\\
		\hline
	\end{tabular}
\end{table*}

The major difference between existing studies and this work is that
we assume an explicit sparsity of target classes,
which are defined parametrically.
This kind of scenario seems to occur in practice; for example, 
speech data for a specific person
are supposed to be a sparse linear combination of the person's pronunciation of each letter,
and paintings by a specific painter
may be regarded as combinations of patterns (see Table \ref{table:rev1}).
Indeed,
when we carry out the wavelet expansion to a natural image,
its relatively few large coefficients can well reconstruct the original image,
as is exploited in the field of compressive sensing \cite{candes2008}.

\begin{figure}[h]
	\hspace{-1.25cm}
	\begin{tikzpicture}[every node/.style={rectangle,rounded corners,text centered, text width=10cm}]
    	\node[draw](basis){
			basis function $\psi$\\
			\begin{minipage}{0.45\hsize}
				\includegraphics[width=4cm]{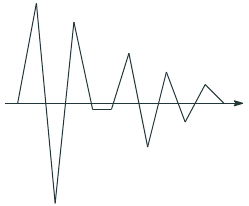}
			\end{minipage}
			\begin{minipage}{0.45\hsize}
				\vspace{-0.9mm}
				\includegraphics[width=4cm]{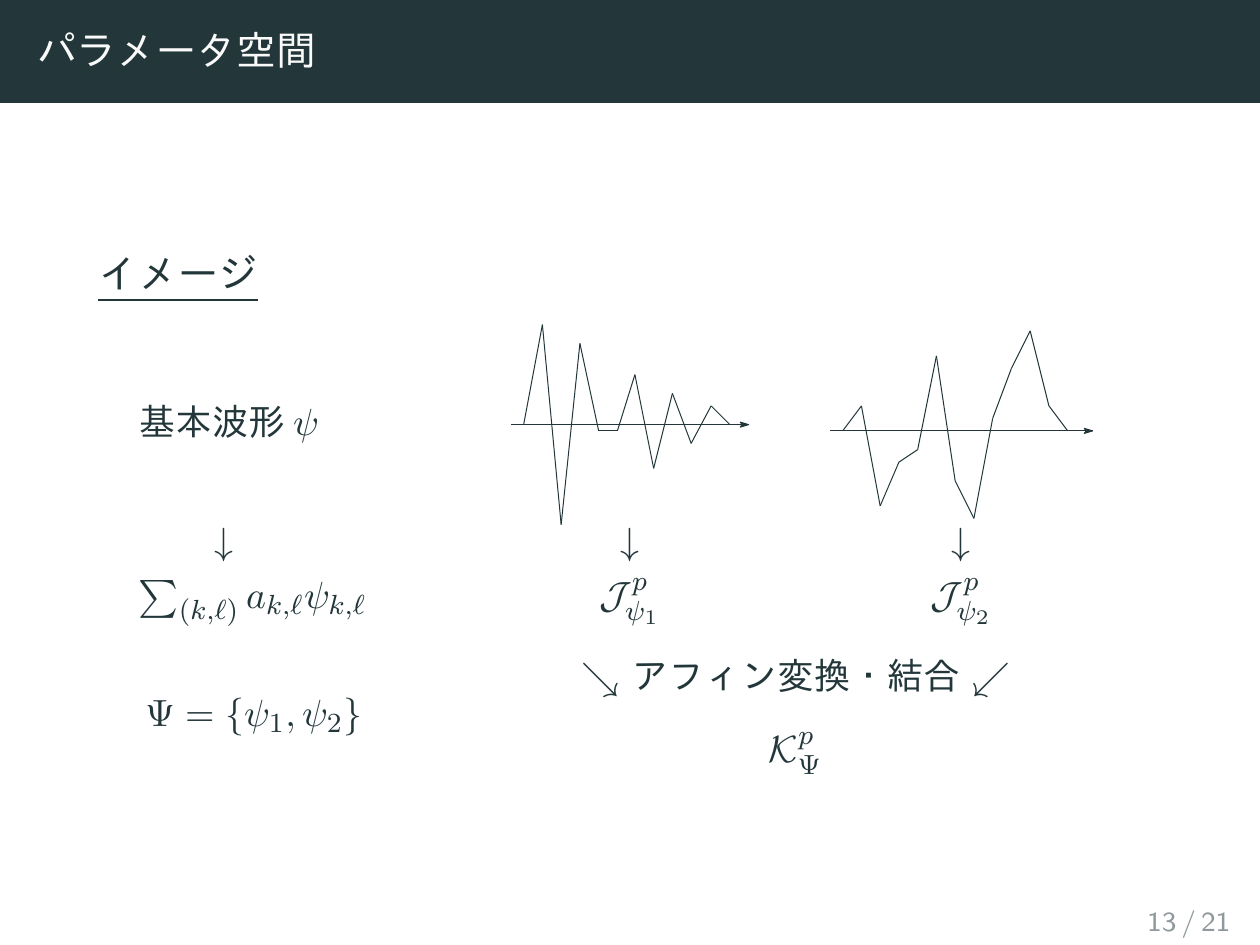}
			\end{minipage}\\
			\begin{minipage}{0.45\hsize}
				\centering$\psi_1$
			\end{minipage}
			\begin{minipage}{0.45\hsize}
				\centering$\psi_2$
			\end{minipage}
    	};
    	\node[below=2.5cm of $(basis.south east)!0.733!(basis.south west)$](p1){
    		\Large$\mathcal{J}^p_{\psi_1}$
    	};
    	\draw[->]($(basis.south east)!0.733!(basis.south west)$])
    	--node[draw, fill=white, text width=3.5cm, sharp corners]{infinite \\ sparse combination}(p1);
    	\node[below=2.5cm of $(basis.south east)!0.267!(basis.south west)$](p2){
    		\Large$\mathcal{J}^p_{\psi_2}$
    	};
    	\draw[->]($(basis.south east)!0.267!(basis.south west)$])
    	--node[draw, fill=white, text width=3.5cm, sharp corners]{infinite \\ sparse combination}(p2);
    	\node[below=5.85cm of $(basis.south east)!0.05!(basis.south west)$](kp){
    		\Large$\mathcal{K}^p_\Psi$
    	};
    	\draw[->](p1)--(kp);
    	\draw[->](p2)--(kp);
    	\node[below=4.33cm of $(basis.south east)!0.2!(basis.south west)$, draw, fill=white, text width=3.5cm, sharp corners]{finite combination};
    	\node[below=5.85cm of $(basis.south east)!0.95!(basis.south west)$](i0){
    		\Large$\mathcal{I}^0_{\Psi}$
    	};
    	\draw[->]($(basis.south east)!0.95!(basis.south west)$)--(i0);
    	\node[below=4.33cm of $(basis.south east)!0.8!(basis.south west)$, draw, fill=white, text width=3.5cm, sharp corners](i0text){finite combination};
	\end{tikzpicture}
	\caption{Interpretation of function classes}\label{fig:rev2}
\end{figure}

We reflect this property to define
new function classes (Section \ref{chap:4}).
In particular, we introduce a function class $\mathcal{I}_\Phi^0$ and $\mathcal{K}_\Psi^p$
(and $\mathcal{J}_\psi^p$ as well)
with a parameter
$p>0$ controlling the ``sparsity'' of the function class.
Figure \ref{fig:rev2} gives an interpretation of the motivations and definitions of these function classes.
More precisely, $p$ controls the sparsity of coefficients of
``infinite sparse combination'' appearing in Figure \ref{fig:rev2}
through a criterion called ``weak $\ell^p$ norm'' 
(see Definition \ref{def:weak_el_p} for details).
Here, ``combination" roughly means linear combination,
but precisely includes affine transform in the input; i.e.,
the combination of $f_1,\ldots, f_m$ can be generally expressed as $f=\sum_{i=1}^m c_if_i(A_i\cdot-b_i)$.

On the basis of our new function classes,
we show that deep learning is superior to linear methods and actually
attains nearly the minimax-optimal rate over each target class
(Table \ref{table:1}).
As an extreme case (corresponding to the sparsity level $p=0$),
we treat the class of piecewise constant functions,
for which the convergence rate of the linear estimators is $\Omega(n^{-1/2})$,
whereas deep learning attains the near-minimax rate $\tilde{\mathrm{O}}(n^{-1})$.
This quite simply demonstrates the scenario described in \acite{imaizumi2018deep}.
For $0<p<1$, we also show that deep learning attains the nearly minimax-optimal rate
$\tilde{\mathrm{O}}(n^{-\frac{2\alpha}{2\alpha+1}})$
in estimating a function in $\mathcal{K}^p_\Psi$, where $\alpha=1/p-1/2$.
Here, we have $\frac{2\alpha}{2\alpha+1}>\frac12$, 
and the difference between deep and linear becomes larger as $p$ becomes smaller
(i.e., as the sparsity becomes stronger).
$K_\Psi^p$ has another parameter, $\beta$,
which controls the rate of decay of coefficients of the function class.
Surprisingly, we even find that the minimax rate of linear estimators can become arbitrarily slow
under the same sparsity $p$ (and the same order of covering entropy)
if the value of $\beta$ is varied.
Although we do not yet have the upper bound
for the convergence rate of deep learning over the range of parameter values producing this situation (see Theorem \ref{linear_not_efficient}
and Remark \ref{rem:nolinear}),
this indicates that the difference between deep learning
and linear estimators could be arbitrary large. 
These differences essentially arise from the non-convexity of the model.
That is, as the non-convexity of the model becomes stronger, the difference becomes larger.

In addition, we see that deep learning takes advantage of wavelet expansions with sparsity
because a neural network can efficiently approximate functions of the form
$\sum_ic_if(A_i \cdot-b_i)$
if its subnetwork can approximate the ``basis'' function $f$ precisely
as is also mentioned in \acite{bolcskei2017optimal}.
From this perspective,
we see that parameter sharing, mentioned in Section \ref{sec:p_share},
is also effective.
It can also be said that
this paper expands the approximation theory argued in \acite{bolcskei2017optimal}
to estimation theory over sparse parameter spaces.

Thus, the contribution of this paper is summarized as follows:
\begin{itemize}
	\item
		To deal with sparsity in machine learning,
		we define function classes $\mathcal{I}_\Phi^0$ and $\mathcal{K}_\Psi^p$
		with a parameter $p$ controlling the sparsity.
		We also consider the nonparametric regression problem on
		these target classes
		and derive the minimax lower bounds of estimation error.
	\item
		We consider linear estimators, which are a competitor of deep learning,
		investigating them by evaluating their estimation error
		over sparse target classes.
		We show that linear estimators can only attain suboptimal rates on sparse and
		non-convex models
		and even become arbitrarily slow under the same sparsity
		with other parameters varying.
		This also gives a unified understanding of existing studies which describe situations
		where deep learning is superior to linear methods.
	\item
		To demonstrate the learning ability of the deep ReLU network on sparse spaces,
		we construct sparse neural networks that nearly attain minimax-optimal rates.
		It is also shown that parameter sharing in the construction of neural networks is effective
		on sparse target classes.
\end{itemize}

We give a brief overview of each section in the following.

In Section \ref{chap:2},
we introduce general methods used in statistical learning theory,
presenting our own proofs or arguments to the maximum extent possible.
Section \ref{sec:2.2} presents an information-theoretic way to obtain a lower bound
for the minimax rate.
Also, the method for evaluating an estimation error
by using an approximation error is given.
Evaluations of linear minimax rates are given in Section \ref{chap:3}.
We prove that linear estimators cannot distinguish between a function class and its convex hull,
and as a consequence linear minimax rates can be rather slower than
ordinal minimax rates.
Section \ref{chap:4} provides the definitions of our own target function classes.
The $\ell^0$ norm and the $w\ell^p$ quasi-norm of coefficients in linear combinations are introduced
as indicators of sparsity.
The minimax lower bounds for the defined classes are also given
(which are revealed to be optimal up to log factors in the section that follows).
In Section \ref{chap:5},
we show that deep learning attains the nearly minimax rate for defined function classes.
In addition, we propose that parameter sharing can be a means of reducing complexities
in regularized networks.
Finally, Section \ref{chap:6} provides a summary and presents future directions for this work.

\subsection{Notation}
We use the following notation throughout the paper.
\begin{itemize}
	\item
		$\|\cdot\|_\infty$ and $\|\cdot\|_0$ are defined as
		\[
			\|v\|_\infty:=\max_{1\le i\le m}|v_i|,\quad
			\|v\|_0:=\left|\{
				1\le i\le m \mid
				v_i\ne0
			\}\right|
		\]
		for a vector $v=(v_1,\ldots,v_m)^\top\in\R^m$.
		They are defined similarly for real matrices.
	\item
		As a natural extension of $\|\cdot\|_0$,
		$\|a\|_{\ell^0}$ denotes the number of nonzero terms in the sequence
		$a=(a_i)_{i=1}^\infty$.
	\item
		For $p>0$ and a real sequence $a=(a_i)_{i=1}^\infty$,
		the $\ell^p$ norm of $a$ is defined as
		\[
			\|a\|_{\ell^p}:=\left(\sum_{i=1}^\infty |a_i|^p\right)^{1/p},
		\]
		and $\ell^p$ denotes the set of all real sequences with a finite $\ell^p$ norm.
\end{itemize}

\section{General theories in statistical estimation}\label{chap:2}

\subsection{General settings and notation}

Let us consider the following regression model.
We observe i.i.d. random variables $(X_i, Y_i)$ generated by
\begin{align}
	Y_i=f^\circ(X_i)+\xi_i,\qquad i=1,2,\ldots,n.
	\label{gauss}
\end{align}
Here, each $\xi_i$ is an observation noise independent of other variables.
In this paper, we use settings such that
	each $X_i$ is $d$-dimensional and uniformly distributed on $[0, 1]^d$,
	each $Y_i$ is one-dimensional, and $\xi_i$'s are i.i.d. centered Gaussian variables with variance $\sigma^2$ ($\sigma>0$).
For simplicity,
we sometimes use the notation $X^n:=(X_1, \ldots, X_n)$,
$Y^n:=(Y_1,\ldots, Y_n)$, and $Z^n:=(X_i, Y_i)_{i=1}^n$.


\begin{rem}
	In the following,
	we often write only $\hat{f}$ to indicate an estimator
	where we should write $(X_i, Y_i)^n\mapsto\hat{f}$
	(this mapping is supposed to be measurable).
	For example, $\inf_{(X_i, Y_i)_{i=1}^n\mapsto\hat{f}\in\F}$ is simply denoted by
	$\inf_{\hat{f}\in\F}$.
	In addition, for the case $\F=L^2([0, 1]^d)$, we omit $\F$ and simply write $\inf_{\hat{f}}$.
\end{rem}

To evaluate the quality of estimators,
we need to adopt some evaluation criteria.
For a fixed $f^\circ$ and a function $f\in L^2([0,1]^d)$,
we have
\begin{align*}
	\mathrm{E}[(f(X)-Y)^2]
	&=\mathrm{E}[(f(X)-f^\circ(X))^2]-2\mathrm{E}[\xi(f(X)-f^\circ(X))]+\mathrm{E}[\xi^2]\\
	&=\mathrm{E}[(f(X)-f^\circ(X))^2]+\sigma^2\\
	&=\|f-f^\circ\|_{L^2}^2+\sigma^2.
\end{align*}
This implies that the magnitude of the expected error $\mathrm{E}[(f(X_i)-Y_i)^2]$ depends only 
on that of the $L^2$ distance $\|f-f^\circ\|_{L^2}^2$.
This leads to the following definition for a performance criterion.

\begin{dfn}
The {\it$L^2$ risk} for an estimator $\hat{f}$ is defined as
\[
	R(\hat{f}, f^\circ)
	:=\mathrm{E}\left[\|\hat{f}-f^\circ\|_{L^2}^2\right].
\]
For a model $\F^\circ\subset L^2([0, 1]^d)$,
the {\it minimax $L^2$ risk} over $\F^\circ$ is defined as
\[
	\inf_{\hat{f}}\sup_{f^\circ\in\F^\circ}R(\hat{f}, f^\circ)
	=\inf_{\hat{f}}\sup_{f^\circ\in\F^\circ}\mathrm{E}\left[\|\hat{f}-f^\circ\|_{L^2}^2\right],
\]
where $\hat{f}$ runs over all estimators (measurable functions).
\end{dfn}

We evaluate the quality of an estimator $\hat{f}$ by this $L^2$ risk
and compare it with the minimax-optimal risk.

\begin{rem}
	We omit $n$ from the notation because it is treated as a constant when we consider
	a single regression problem.
	However, as $n$ goes to $\infty$,
	the minimax risk converges to $0$,
	and in this paper we are interested in the convergence rate of the minimax risk.
\end{rem}


\subsection{Relationships between complexity and minimax risk}\label{sec:2.2}

In this section, we introduce a key procedure to evaluate the minimax risk.
To do so, we define complexity measures called {\it $\ve$-entropy}
($\ve$ may be taken place by any positive real number),
which formally represent complexities of (totally bounded) metric spaces.
This kind of complexity of $\F^\circ$ profoundly affects the convergence rate of the minimax risk
\cite{yang1999information}.

\begin{dfn}\cite{van1996weak,yang1999information}
	For a metric space $(S, d)$ and $\ve>0$,
	\begin{itemize}
		\item a finite subset $T$ is called {\it $\ve$-packing}
		if $d(x, y)>\ve$ holds for any $x, y\in T$ with $x\ne y$,
		and the logarithm of the maximum cardinality of an $\ve$-packing subset
		is called the {\it packing $\ve$-entropy} and is denoted by $M_{(S, d)}(\ve)$;
		\item a finite set $U\subset \overline{S}$
		is called {\it $\ve$-covering}
		if for any $x\in S$ there exists $y\in U$ such that $d(x, y)\le\ve$,
		and the logarithm of the minimum cardinality of an $\ve$-covering set
		is called the {\it covering $\ve$-entropy} and is denoted by $V_{(S, d)}(\ve)$.
	\end{itemize}
	Here, $\overline{S}$ is the completion of $S$ with respect to the metric $d$.
\end{dfn}

The concept of $\ve$-entropy is useful to obtain a lower bound of the minimax risk
of some function class $\F^\circ$.
Let $\F^\circ\subset L^2([0, 1]^d)$ be the class of true functions,
equipped with the $L^2$ metric.
For simplicity,
let $V(\ve)=V_{(\F^\circ, \|\cdot\|_{L^2})}(\ve)$
and $M(\ve)=M_{(\F^\circ, \|\cdot\|_{L^2})}(\ve)$.
Then, the following theorem holds.

\begin{thm}{\rm\cite[Theorem 1]{yang1999information}}\label{yang_barron}
In the Gaussian regression model, 
suppose there exist $\delta, \ve>0$ such that
\[
	V(\ve)\le\frac{n\ve^2}{2\sigma^2},\quad
	M(\delta)\ge \frac{2n\ve^2}{\sigma^2}+2\log2.
\]
Then we have
\begin{align*}
	\inf_{\hat{f}}\sup_{f\in\F^\circ}\mathrm{P}_f
	\left(\|{\hat{f}-f}\|_{L^2}\ge\frac{\delta}2\right)
	\ge \frac12,\quad
	\inf_{\hat{f}}\sup_{f\in\F^\circ}
	\mathrm{E}_f\left[\|{\hat{f}-f}\|_{L^2}^2\right]
	\ge\frac{\delta^2}8,
\end{align*}
where $\mathrm{P}_f$ is the probability law with $f^\circ=f$,
and $\mathrm{E}_f$ is the expectation determined by $\mathrm{P}_f$.
\end{thm}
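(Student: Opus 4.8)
The plan is to carry out the classical information-theoretic reduction of estimation to multiple hypothesis testing, using Fano's inequality together with the covering-based mutual information bound of Yang and Barron. First I would fix a $\delta$-packing $\{f_1,\dots,f_N\}\subset\F^\circ$ of maximal cardinality, so that $\log N=M(\delta)$ and in particular $N\ge 4$ since $M(\delta)\ge 2\log2$. Let $J$ be uniformly distributed on $\{1,\dots,N\}$ and, conditionally on $J=j$, let $Z^n$ be generated by the Gaussian model \eqref{gauss} with $f^\circ=f_j$. Given an arbitrary estimator $\hat{f}$, define the test $\hat{J}$ to be any index $j$ minimizing $\|\hat{f}-f_j\|_{L^2}$ (ties broken arbitrarily). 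Since the $f_j$ are pairwise more than $\delta$ apart, the triangle inequality shows that $\|\hat{f}-f_J\|_{L^2}<\delta/2$ forces $\hat{J}=J$, hence $\mathrm{P}_{f_j}\!\left(\|\hat{f}-f_j\|_{L^2}\ge\delta/2\right)\ge\mathrm{P}_{f_j}(\hat{J}\ne j)$ for every $j$. It therefore suffices to show that the average testing error $\frac1N\sum_j\mathrm{P}_{f_j}(\hat{J}\ne j)$ is at least $\tfrac12$.

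By Fano's inequality, $\frac1N\sum_j\mathrm{P}_{f_j}(\hat{J}\ne j)\ge 1-\frac{I(J;Z^n)+\log2}{\log N}$, so the crux is an upper bound on the mutual information $I(J;Z^n)$. Here I would use the Yang--Barron argument: writing $\bar{P}=\frac1N\sum_j\mathrm{P}_{f_j}$ for the marginal law of $Z^n$, one has $I(J;Z^n)=\frac1N\sum_j\mathrm{KL}(\mathrm{P}_{f_j}\,\|\,\bar{P})\le\frac1N\sum_j\mathrm{KL}(\mathrm{P}_{f_j}\,\|\,Q)$ for \emph{any} reference measure $Q$ (the difference being $\mathrm{KL}(\bar{P}\,\|\,Q)\ge0$). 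I would take $Q=\frac1{|G|}\sum_{g\in G}\mathrm{P}_g$, where $G\subset\overline{\F^\circ}\subset L^2$ is a minimal $\ve$-covering set of $(\F^\circ,\|\cdot\|_{L^2})$, so $|G|=e^{V(\ve)}$. For each $j$ choose $g(j)\in G$ with $\|f_j-g(j)\|_{L^2}\le\ve$; since $Q$ dominates $|G|^{-1}\mathrm{P}_{g(j)}$, we get $\mathrm{KL}(\mathrm{P}_{f_j}\,\|\,Q)\le\log|G|+\mathrm{KL}(\mathrm{P}_{f_j}\,\|\,\mathrm{P}_{g(j)})=V(\ve)+\mathrm{KL}(\mathrm{P}_{f_j}\,\|\,\mathrm{P}_{g(j)})$.

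The remaining ingredient is the Gaussian Kullback--Leibler identity. Since $\mathrm{P}_f$ is the law of $n$ i.i.d.\ pairs with $X_i$ uniform on $[0,1]^d$ and $Y_i\mid X_i\sim\N(f(X_i),\sigma^2)$, conditioning on $X^n$, applying the one-dimensional Gaussian formula coordinatewise, and then averaging over $X^n$ gives $\mathrm{KL}(\mathrm{P}_f\,\|\,\mathrm{P}_g)=\frac{n}{2\sigma^2}\|f-g\|_{L^2}^2$. Hence $\mathrm{KL}(\mathrm{P}_{f_j}\,\|\,\mathrm{P}_{g(j)})\le\frac{n\ve^2}{2\sigma^2}$, and combining with the first hypothesis $V(\ve)\le\frac{n\ve^2}{2\sigma^2}$ yields $I(J;Z^n)\le V(\ve)+\frac{n\ve^2}{2\sigma^2}\le\frac{n\ve^2}{\sigma^2}$. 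Plugging this into Fano and using $\log N=M(\delta)\ge\frac{2n\ve^2}{\sigma^2}+2\log2$ gives $\frac1N\sum_j\mathrm{P}_{f_j}(\hat{J}\ne j)\ge 1-\frac{n\ve^2/\sigma^2+\log2}{2n\ve^2/\sigma^2+2\log2}=\frac12$, which proves the probability statement. The expectation statement then follows from Markov's inequality: for each $f$, $\mathrm{E}_f[\|\hat{f}-f\|_{L^2}^2]\ge(\delta/2)^2\,\mathrm{P}_f(\|\hat{f}-f\|_{L^2}\ge\delta/2)$, and taking the supremum over $f\in\F^\circ$ gives the bound $\delta^2/8$.

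I expect the main obstacle to be the mutual information bound — specifically, recognizing that the naive pairwise bound $I(J;Z^n)\le\frac1{N^2}\sum_{j,k}\mathrm{KL}(\mathrm{P}_{f_j}\,\|\,\mathrm{P}_{f_k})$ is too crude (it ignores the covering structure entirely and would essentially force $\delta\asymp\ve$), and instead introducing the covering mixture $Q$ so that the metric entropy enters only additively as $\log|G|=V(\ve)$. The rest — the packing-to-testing reduction, Fano's inequality, the Gaussian KL identity, and the concluding Markov step — is routine.
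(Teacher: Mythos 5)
Your proof is correct, and it follows the same information-theoretic route as the source this theorem is quoted from: the paper itself gives no proof of Theorem \ref{yang_barron} (it cites \citet[Theorem 1]{yang1999information}), and your argument --- maximal $\delta$-packing plus Fano's inequality, with the mutual information bounded by mixing over an $\ve$-covering so that $I(J;Z^n)\le V(\ve)+\frac{n\ve^2}{2\sigma^2}$, and the Gaussian KL identity of Lemma \ref{KL-L2} --- is exactly the Yang--Barron argument, with the constants working out to the stated $\frac12$ and $\frac{\delta^2}{8}$.
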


The above theorem states the relationship between
the complexity of {\it the function class} and the {\it lower} bound of the minimax risk.
On the other hand, the following theorem is relates the complexity of {\it the estimator} and the {\it upper} bound
of the generalization error (and the minimax risk, at the same time).
The following is also useful for evaluating the convergence rate
of the empirical risk minimizer of some explicit model, such as neural networks.

\begin{thm}{\rm\cite[Lemma 4]{schmidt2017nonparametric}}\label{gen_eval}
	In the Gaussian regression model (\ref{gauss}),
	let $\hat{f}$ be the empirical risk minimizer, taking values in $\F\subset L^2([0, 1]^d)$.
	Suppose every element $f\in\F$ satisfies $\|f\|_{L^\infty}\le F$ for some fixed $F>0$.
	Then, for an arbitrary $\delta>0$,
	if $V_{(\F, \|\cdot\|_{L^\infty})}(\delta)\ge 1$, then
	\begin{align*}
		R(\hat{f}, f^\circ)
		\le 4\inf_{f\in\F}\|f-f^\circ\|_{L^2}^2
		+C\biggr(\frac{(F^2+\sigma^2)V_{(\F, \|\cdot\|_{L^\infty})}(\delta)}n
		+(F+\sigma)\delta\biggl)
	\end{align*}
	holds, where $C>0$ is an absolute constant.
\end{thm}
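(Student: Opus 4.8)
The plan is to establish an oracle inequality for the empirical risk minimizer by a standard peeling argument on the Gaussian empirical process and then integrate the resulting deviation bound. Write $\|g\|_n^2:=\frac1n\sum_{i=1}^n g(X_i)^2$ for the empirical norm, and fix $f^*\in\F$ with $\|f^*-f^\circ\|_{L^2}^2\le\inf_{f\in\F}\|f-f^\circ\|_{L^2}^2+1/n$ (the extra $1/n$ is harmless). The starting point is the defining inequality of the ERM: $\sum_i(Y_i-\hat f(X_i))^2\le\sum_i(Y_i-f^*(X_i))^2$. Substituting $Y_i=f^\circ(X_i)+\xi_i$ and expanding the squares, the $\sum_i\xi_i^2$ terms cancel and one is left with
\[
	\|\hat f-f^\circ\|_n^2\le\|f^*-f^\circ\|_n^2+\frac2n\sum_{i=1}^n\xi_i\big(\hat f(X_i)-f^*(X_i)\big).
\]
The whole difficulty is to control the last term, an empirical process indexed by $\hat f\in\F$, and then to pass from $\|\cdot\|_n$ to the population norm $\|\cdot\|_{L^2}$.

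For the empirical process term I would condition on $X^n$, so that $\frac1n\sum_i\xi_ih(X_i)$ is, for each fixed $h$, a centered Gaussian with variance $\frac{\sigma^2}{n}\|h\|_n^2$. Let $\{g_1,\dots,g_N\}$ be a minimal $\delta$-cover of $\F$ in $\|\cdot\|_{L^\infty}$, so $N=\exp(V_{(\F,\|\cdot\|_{L^\infty})}(\delta))$, and pick $g_j$ with $\|\hat f-g_j\|_{L^\infty}\le\delta$. Splitting $\hat f-f^*=(g_j-f^*)+(\hat f-g_j)$, the second piece contributes at most $\frac{2\delta}{n}\sum_i|\xi_i|$, which by $\mathrm{E}|\xi_i|\le\sigma$ and concentration is $\mathrm{O}(\sigma\delta)$ with high probability; this is the source of the $(F+\sigma)\delta$ term. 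For the first piece one takes a union bound over the $N$ cover elements: a maximal inequality for Gaussians gives, for each $t>0$,
\[
	\max_{1\le j\le N}\Big|\tfrac1n\sum_i\xi_i\big(g_j(X_i)-f^*(X_i)\big)\Big|
	\lesssim\sigma\Big(\max_j\|g_j-f^*\|_n\Big)\sqrt{\frac{\log N+t}{n}}
\]
with probability $\ge1-e^{-t}$. Choosing $t\asymp n\ve^2/\sigma^2$ for a threshold $\ve^2\asymp(F^2+\sigma^2)V_{(\F,\|\cdot\|_{L^\infty})}(\delta)/n$, and using $\sigma\|g_j-f^*\|_n\sqrt{(\log N+t)/n}\le\tfrac14\|g_j-f^*\|_n^2+\mathrm{O}(\sigma^2(\log N+t)/n)$, converts the multiplicative bound into an additive one.

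The remaining and most delicate step is to replace the empirical norms $\|\hat f-f^\circ\|_n$, $\|f^*-f^\circ\|_n$, $\|g_j-f^*\|_n$ by their $L^2$ counterparts. Each summand $g(X_i)^2$ is bounded by $(2F)^2$ and has expectation $\|g\|_{L^2}^2$, so Bernstein's inequality gives $\big|\|g\|_n^2-\|g\|_{L^2}^2\big|\lesssim F\|g\|_{L^2}\sqrt{t/n}+F^2t/n$ with probability $\ge1-e^{-t}$ for each fixed $g$. To make this uniform over $g=\hat f-f^\circ$ with $\hat f\in\F$, I would run a peeling (slicing) argument: partition $\F$ into shells $\{f:2^{k}r_0<\|f-f^\circ\|_{L^2}\le2^{k+1}r_0\}$, apply the cover-plus-Bernstein bound on each shell with the union-bound budget paid per shell, and use that on the $k$-th shell the fluctuations are controlled by $2^kr_0$ times a factor summable in $k$. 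This is where one must track constants so that the eventual self-bounding inequality $\|\hat f-f^\circ\|_{L^2}^2\le\tfrac12\|\hat f-f^\circ\|_{L^2}^2+(\text{rest})$ can be solved for $\|\hat f-f^\circ\|_{L^2}^2$; the accumulated constants are exactly what produce the factor $4$ in front of $\inf_{f\in\F}\|f-f^\circ\|_{L^2}^2$ and the absolute constant $C$. Combining the three ingredients gives, on an event of probability $\ge1-e^{-t}$,
\[
	\|\hat f-f^\circ\|_{L^2}^2\le4\inf_{f\in\F}\|f-f^\circ\|_{L^2}^2+C'\Big(\frac{(F^2+\sigma^2)(V_{(\F,\|\cdot\|_{L^\infty})}(\delta)+t)}{n}+(F+\sigma)\delta\Big),
\]
and integrating this tail bound over $t\ge0$ (using that $\|\hat f-f^\circ\|_{L^2}$ admits a deterministic bound, so the integral converges and the $t$-contribution is of order $(F^2+\sigma^2)/n$, absorbed into $C$) yields the claimed bound on $R(\hat f,f^\circ)=\mathrm{E}[\|\hat f-f^\circ\|_{L^2}^2]$. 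The hypothesis $V_{(\F,\|\cdot\|_{L^\infty})}(\delta)\ge1$ is used only to absorb stray additive constants into the entropy term. I expect the peeling step — reconciling empirical and population norms uniformly while keeping the constant in front of the approximation error down to $4$ — to be the main obstacle; the rest is bookkeeping with Gaussian tail bounds and AM–GM.
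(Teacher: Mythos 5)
Your plan is essentially sound, but it takes a genuinely different route from the paper's proof. You derive a high-probability oracle inequality (cover in $\|\cdot\|_{L^\infty}$, conditional Gaussian maximal inequality for the noise term, Bernstein plus peeling over $L^2$-shells to pass from $\|\cdot\|_n$ to $\|\cdot\|_{L^2}$, then solve a self-bounding inequality and integrate the tail in $t$). The paper instead works entirely in expectation and never peels: it compares $R(\hat f,f^\circ)$ with the expected empirical risk by symmetrizing against a ghost sample $X_1',\dots,X_n'$ and bounding $\mathrm{E}[T^2]$ for the maximum $T$ of sums $\sum_i g_j(X_i,X_i')/r_j$ normalized by $r_j=\max\{A,\|f_j-f^\circ\|_{L^2}\}$ — the floor $A$, fixed at the end, does the job your peeling does — via direct integration of a Bernstein tail; it then bounds the expected empirical excess risk using the ERM property and $\mathrm{E}[\max_j\ve_j^2]\le 4\sigma^2(\log N+1)$ for the self-normalized Gaussians $\ve_j$, a moment-generating-function/Jensen argument. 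The factor $4$ arises there simply as the product of the two factors of $2$ from the two stages, with explicit constants, whereas in your route it must be extracted from the peeling bookkeeping, which you have not carried out; your approach buys a stronger intermediate product (a deviation bound, not just an expectation bound) and modularity, at the price of that bookkeeping and the extra tail integration. Two small points to fix if you execute it: the displayed maximal inequality with $\max_j\|g_j-f^*\|_n$ pulled out of the maximum is too lossy (that maximum can be of order $F$); you need the self-normalized version $\max_j\lvert\frac1n\sum_i\xi_i(g_j-f^*)(X_i)\rvert/\|g_j-f^*\|_n\lesssim\sigma\sqrt{(\log N+t)/n}$, which your subsequent AM--GM step implicitly assumes and which is exactly the paper's $\ve_j$ device; and, like the paper, you implicitly need $f^\circ$ bounded by $F$ (or a comparable constant) to make the $4F\delta$-type approximation steps and the deterministic bound used in the final integration legitimate.
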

This sort of evaluation has been obtained earlier in
\cite{gyorfi2006distribution,koltchinskii2006local,gine2006concentration}
and the proof for this is essentially the same as earlier ones.
For completeness, however, we give a proof for this assertion
in the appendix (Section \ref{ap-2}).\footnote{We noticed some technical flaws
in an earlier version of the proof of \acite{schmidt2017nonparametric},
so we include the proof in the appendix for completeness.}

\section{Suboptimality of linear estimators}\label{chap:3}

We consider linear estimators as a competitor to deep learning,
and in this section, we characterize their suboptimality by the convexity of the target model.
Linear estimators, represented by kernel methods,
are classically applied to regression problems.
Indeed,
some linear estimators have minimax optimality
over smooth function classes
such as
H\"{o}lder classes and Besov classes with some constraint on their parameters
\citep[with fixed design:][]{donoho1998minimax,tsybakov2008}.
However,
as has been pointed out in the literature \cite{korostelev1993minimax,imaizumi2018deep},
linear estimators can attain only suboptimal rates with function classes having discontinuity.
We here show that
the suboptimality of linear estimators arises
even with a quite simple target class.
Our first contribution is to point out that
the concept of the convex hull gives the same explanation to such suboptimality
for several target classes,
and based on that argument, we then show that linear estimators perform suboptimally
even on a quite simple target class.

\subsection{Linear estimators and its minimax risk}

\begin{dfn}
	The estimation scheme $(X_i, Y_i)_{i=1}^n\mapsto \hat{f}$ is called {\it linear} if $\hat{f}$
	has the form
	\[
		\hat{f}(x)=\sum_{i=1}^nY_i\varphi_i(x; X^n),
	\]
	where we suppose $\mathrm{E}\left[\|\phi_i(\cdot;X^n)\|_{L^2}^2\right]<\infty$.
	Also, we call an estimator $\hat{f}$ {\it affine} if $\hat{f}$ has the form
	\[
		\hat{f}(x)=\hat{f}_L(x)+\phi(x;X^n),
	\]
	where $\phi$ has the same condition as $\phi_i$, and $\hat{f}_L$ is a linear estimator. 
\end{dfn}

\begin{rem}
	The condition $\mathrm{E}\left[\|\phi_i(\cdot;X^n)\|_{L^2}^2\right]<\infty$
	may be replaced by a weaker version.
	This actually assures that
	\begin{itemize}
		\item
			$\phi_i(\cdot;X^n)\in L^2([0, 1]^d)$ holds almost surely;
		\item
			$\mathrm{E}\left[\phi_i(x;X^n)^2\right]<\infty$ holds almost everywhere.
	\end{itemize}
	The latter condition is only needed in the justification of (\ref{eq:zhang1}).
\end{rem}

A linear estimator is of course an affine estimator as well.
Linear or affine estimators are classically used often; they include
linear (ridge) regression, the Nadaraya--Watson estimator,
and kernel ridge regression
\cite{tsybakov2008,bishop2006,friedman2001}.
For example,
the estimator given by kernel ridge regression
can be explicitly written as
\[
	\hat{f}(x):=(k(x, X_1), \ldots, k(x, X_n))(K+\lambda I_n)^{-1} (Y_1,\ldots,Y_n)^\top,
\]
where $\lambda$ is a positive constant, $k:[0, 1]^d\times [0,1]^d\to\R$
is a positive semi-definite kernel, and
the matrix $K\in\R^{n\times n}$ is defined as
$K:=(k(X_i, X_j))_{i,j}$.
We can see that the difference in performance between deep learning and linear estimators
becomes large in a non-convex model,
which can be explained by the following theorem.
(This theorem can also be seen as a generalization of \acite[Theorem 5]{cai2004minimax}.)

Let $\conv(\F^\circ)$ denote the convex hull of $\F^\circ$; i.e.,
	\[
		\conv(\F^\circ):=\left\{
			\sum_{i=1}^kt_if_i
			\,\middle|\,
				t_1,\ldots, t_k\ge0,\ \sum_{i=1}^kt_i=1,\ 
				f_1,\ldots,f_k\in\F^\circ,\ k\ge1
		\right\}.
	\]
Notice that the $\conv(\F^\circ)$ is larger than the original set $\F^\circ$.
Let $\overline{\conv}(\F^\circ)$ be the closure of $\conv(\F^\circ)$
with respect to the $L^2$ metric (caller closed convex hull).
The following assertion holds.

\begin{thm}\label{thm:l2}
	For affine methods, the minimax risk over $\F^\circ$ coincides with the minimax risk
	over $\overline{\conv}(\F^\circ)$; i.e., the following equality holds:
	\[
		\inf_{\hat{f}:\rm{affine}}\sup_{f^\circ\in\F^\circ}R(\hat{f}, f^\circ)
		=\inf_{\hat{f}:\rm{affine}}\sup_{f^\circ\in\overline{\conv}(\F^\circ)}R(\hat{f}, f^\circ).
	\]
\end{thm}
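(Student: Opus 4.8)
The plan is to show two inequalities. Since $\F^\circ \subseteq \overline{\conv}(\F^\circ)$, the supremum over the larger set is at least the supremum over the smaller one, so
\[
	\inf_{\hat{f}:\rm{affine}}\sup_{f^\circ\in\F^\circ}R(\hat{f}, f^\circ)
	\le
	\inf_{\hat{f}:\rm{affine}}\sup_{f^\circ\in\overline{\conv}(\F^\circ)}R(\hat{f}, f^\circ)
\]
is immediate. The substance is the reverse inequality: every affine estimator that does well uniformly over $\F^\circ$ automatically does well uniformly over $\overline{\conv}(\F^\circ)$. The key structural fact to exploit is that for a \emph{fixed} affine estimator $\hat{f}$, the risk $R(\hat{f}, f^\circ)$, viewed as a function of $f^\circ \in L^2$, is convex. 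Indeed, writing $\hat{f}(x) = \sum_i Y_i \phi_i(x;X^n) + \phi(x;X^n)$ and plugging in $Y_i = f^\circ(X_i) + \xi_i$, the estimator decomposes as $\hat{f} = A_{X^n} f^\circ + (\text{noise term depending on } \xi, X^n) + \phi$, where $A_{X^n}$ is the linear map $f^\circ \mapsto \sum_i f^\circ(X_i)\phi_i(\cdot;X^n)$. Hence $\hat f - f^\circ$ is an affine function of $f^\circ$ (for each realization of $(X^n,\xi^n)$), and $\|\hat f - f^\circ\|_{L^2}^2$ is a convex function of $f^\circ$; taking expectations preserves convexity, so $f^\circ \mapsto R(\hat{f}, f^\circ)$ is convex on $L^2([0,1]^d)$.

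Given this convexity, the reverse inequality follows in two steps. First, for any convex combination $g = \sum_{j=1}^k t_j f_j$ with $f_j \in \F^\circ$, convexity gives $R(\hat{f}, g) \le \sum_j t_j R(\hat{f}, f_j) \le \max_j R(\hat{f}, f_j) \le \sup_{f^\circ \in \F^\circ} R(\hat{f}, f^\circ)$. Thus $\sup_{g \in \conv(\F^\circ)} R(\hat{f}, g) \le \sup_{f^\circ \in \F^\circ} R(\hat{f}, f^\circ)$. Second, I extend from $\conv(\F^\circ)$ to its $L^2$-closure: for $g \in \overline{\conv}(\F^\circ)$, pick $g_m \in \conv(\F^\circ)$ with $g_m \to g$ in $L^2$, and use continuity of $f^\circ \mapsto R(\hat f, f^\circ)$ in the $L^2$ metric. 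Continuity holds because $R(\hat f, g) = \mathrm{E}\|\hat f - g\|_{L^2}^2$ and, by the triangle inequality in $L^2$, $|\,\|\hat f - g_m\|_{L^2} - \|\hat f - g\|_{L^2}\,| \le \|g_m - g\|_{L^2} \to 0$; the integrability hypothesis $\mathrm{E}[\|\phi_i(\cdot;X^n)\|_{L^2}^2] < \infty$ (and the analogous one for $\phi$) guarantees $R(\hat f, \cdot)$ is finite and the passage to the limit under the expectation is legitimate (e.g.\ Fatou, or a uniform $L^1$ bound on $\|\hat f - g_m\|_{L^2}^2$ since $\|g_m\|_{L^2}$ is bounded). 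Taking $\inf$ over affine $\hat f$ on both sides of $\sup_{g\in\overline{\conv}(\F^\circ)} R(\hat f,g) \le \sup_{f^\circ\in\F^\circ} R(\hat f, f^\circ)$ completes the argument.

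The main obstacle I anticipate is not conceptual but a matter of care with the analytic bookkeeping: verifying that all the expectations in sight are finite and that the limiting argument for the closure step is fully justified under only the stated second-moment assumption on the $\phi_i$. In particular one must confirm that $R(\hat f, f^\circ) < \infty$ for every $f^\circ \in L^2$ — which uses $f^\circ \in L^\infty$ is \emph{not} assumed, so one needs $\mathrm{E}[(\sum_i f^\circ(X_i)\phi_i(x;X^n))^2]$ to be controlled, and here the fact that $X_i$ is uniform on $[0,1]^d$ lets one bound $\mathrm{E}[f^\circ(X_i)^2\,\|\phi_i\|^2]$ via Cauchy–Schwarz together with independence considerations; this is the kind of "technical condition" the surrounding remark alludes to. A secondary subtlety worth a sentence is that $\overline{\conv}(\F^\circ)$ need not be bounded in $L^\infty$ even if $\F^\circ$ is, but since the whole statement is phrased in $L^2$ risk this causes no trouble. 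Everything else — the convexity of the squared $L^2$ norm, Jensen's inequality for the convex combination, and monotonicity of $\sup$ under set inclusion — is routine.
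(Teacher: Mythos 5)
The first half of your argument --- that for a fixed affine estimator $\hat f$ the map $f^\circ \mapsto R(\hat f, f^\circ)$ is convex, because after substituting $Y_i = f^\circ(X_i)+\xi_i$ the difference $\hat f - f^\circ$ is affine in $f^\circ$ for each realization of $(X^n,\xi^n)$ --- is exactly the paper's proof of the convex-hull identity (\ref{l_add}), and that part is fine.

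The passage to the closure, however, has a genuine gap. You compare $R(\hat f, g_m)$ with $R(\hat f, g)$ via $\bigl|\,\|\hat f - g_m\|_{L^2}-\|\hat f - g\|_{L^2}\bigr|\le\|g_m-g\|_{L^2}$, but this treats $\hat f$ as the same random function in both risks, which it is not: in $R(\hat f,g_m)$ the estimator is computed from data $Y_i=g_m(X_i)+\xi_i$, in $R(\hat f,g)$ from $Y_i=g(X_i)+\xi_i$. The difference of the two prediction errors therefore contains the extra term $\sum_{i=1}^n\bigl(g_m(X_i)-g(X_i)\bigr)\phi_i(\cdot;X^n)$, and controlling its second moment requires bounding quantities such as $\mathrm{E}\bigl[(g_m(X_i)-g(X_i))^2\|\phi_i(\cdot;X^n)\|_{L^2}^2\bigr]$, in which the two factors are dependent. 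Under the only stated assumption $\mathrm{E}\bigl[\|\phi_i(\cdot;X^n)\|_{L^2}^2\bigr]<\infty$, convergence $g_m\to g$ in $L^2$ gives no pointwise control of $g_m(X_i)-g(X_i)$, and a Cauchy--Schwarz split would demand fourth moments that are not assumed; so genuine $L^2$-continuity of $R(\hat f,\cdot)$ is not available, and your ``uniform $L^1$ bound'' does not materialize. The repair is to ask for much less than continuity: extract a subsequence $g_{m_k}\to g$ almost everywhere (possible since $g_m\to g$ in $L^2$), note that then $g_{m_k}(X_i)\to g(X_i)$ almost surely because $X_i$ is uniformly distributed, and apply Fatou's lemma to the nonnegative integrand in (\ref{eq:l1})--(\ref{eq:l2}) to conclude $R(\hat f,g)\le\sup_k R(\hat f,g_{m_k})\le\sup_{f^\circ\in\conv(\F^\circ)}R(\hat f,f^\circ)$. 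This lower-semicontinuity argument is precisely how the paper completes the closure step (Lemma \ref{Fatou}), and it needs no moment conditions beyond those stated. (Your side worry about finiteness of $R(\hat f,f^\circ)$ is immaterial: the inequalities hold in $[0,\infty]$.)
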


\begin{proof}
	We only prove the assertion
	\begin{align}
		\inf_{\hat{f}:\rm{affine}}\sup_{f^\circ\in\F^\circ}R(\hat{f}, f^\circ)
		=\inf_{\hat{f}:\rm{affine}}\sup_{f^\circ\in\conv(\F^\circ)}R(\hat{f}, f^\circ).
		\label{l_add}
	\end{align}
	For the case of $\overline{\conv}(\F^\circ)$, see the appendix (Section \ref{convproof}).
	
	Fix an estimator $\hat{f}$ and let
	\begin{align}
		\hat{f}(x)=\phi(x;X^n)+\sum_{i=1}^nY_i\phi_i(x;X^n).
		\label{eq:l0}
	\end{align}
	For $f^\circ, g^\circ\in\F^\circ$ and $t\in(0, 1)$, let $h^\circ:=tf^\circ+(1-t)g^\circ$. Then
	\begin{align}
		R(\hat{f}, h^\circ)
		=\mathrm{E}\left[
			\int_{[0,1]^d}\left(\hat{f}(x)-h^\circ(x)\right)^2\dd x
		\right]
		=\int_{[0,1]^d}\mathrm{E}\left[\left(\hat{f}(x)-h^\circ(x)\right)^2\right]\dd x
		\label{eq:l1}
	\end{align}
	holds by Fubini's theorem (the integrated value is nonnegative).
	By the convexity of the square, we have
	\begin{align}
		\left(\hat{f}(x)-h^\circ(x)\right)^2
		&=\left(\phi(x;X^n)+\sum_{i=1}^nY_i\phi_i(x;X^n)-h^\circ(x)\right)^2\nonumber\\
		&=\Biggl(\phi(x;X^n)+\sum_{i=1}^n\xi_i\phi_i(x;X^n)
		+
		\sum_{i=1}^nh^\circ(X_i)\phi_i(x;X^n)-h^\circ(x)\Biggr)^2\label{eq:l2}\\
		&\le t\Biggl(\phi(x;X^n)+\sum_{i=1}^n\xi_i\phi_i(x;X^n)+
		\sum_{i=1}^nf^\circ(X_i)\phi_i(x;X^n)-f^\circ(x)\Biggr)^2\nonumber\\
		&\quad+(1-t)\Biggl(\phi(x;X^n)+\sum_{i=1}^n\xi_i\phi_i(x;X^n)+
		\sum_{i=1}^ng^\circ(X_i)\phi_i(x;X^n)-g^\circ(x)\Biggr)^2\nonumber\\
		&=t\left(\hat{f}(x)\bigg|_{Y_i=f^\circ(X_i)+\xi_i}-f^\circ(x)\right)^2
		+(1-t)\left(\hat{f}(x)\bigg|_{Y_i=g^\circ(X_i)+\xi_i}-g^\circ(x)\right)^2.\nonumber
	\end{align}
	Here, notice that $\hat{f}$ is dependent
	on whether we choose $f^\circ$, $g^\circ$, or $h^\circ$.
	Therefore, we integrate this inequality to obtain
	\[
		R(\hat{f},h^\circ)\le tR(\hat{f}, f^\circ)+(1-t)R(\hat{f}, g^\circ).
	\]
	This means that $R(\hat{f}, \cdot)$ is a convex functional,
	and so $\text{LHS}\ge\text{RHS}$ holds in (\ref{l_add}).
	Since it is clear that $\text{LHS}\le\text{RHS}$, the equality of (\ref{l_add}) holds.
\end{proof}

\begin{rem}
	Indeed, \acite{donoho1990minimax} and \acite{donoho1998minimax} pointed out that
	the convex hull in the above assertion can be replaced by the {\it quadratic hull},
	which is generally larger than a convex hull in a similar setting (fixed design).
	However, their propositions require the assumption
	of fixed design and  {\it orthosymmetricity}
	with some wavelet expansion.
	Hence, we have explicitly noted Theorem \ref{thm:l2} under milder conditions.
\end{rem}

By this theorem,
we see that linear estimators hardly achieve the minimax rate
in a non-convex model.
This also explains the difference between deep learning and linear methods
argued in \acite{schmidt2017nonparametric},
\acite{imaizumi2018deep} and \acite{suzuki2018adaptivity}
in a unified manner.
In the following section,
we demonstrate a simple example where linear estimator are suboptimal.


\subsection{Functions of bounded total variation}\label{sec:3.2}

Let us consider a specific function class as a simple but instructive example, 
a class whose convex hull becomes larger in terms of the covering entropy.
In addition, the convex hull is dense in $\mathrm{BV}(C)$ (defined below),
over which linear estimators can only attain a suboptimal rate.

\begin{dfn}\label{def:jk}
	For $k\ge 1$ and $C>0$,
	define
	\[
		J_k(C)
		:=\left\{
		a_0+\sum_{i=1}^k a_i1_{[t_i, 1]}
		\,\middle|\,
		t_i\in(0, 1],\ 
		|a_0|\le C,\
		\sum_{i=1}^k|a_i|\le C
		\right\}
	\]
	as functions from $[0, 1]$ to $\R$ with jumps occurring at most $k$ times.
\end{dfn}

We can also understand $J_k(C)$ as the set of piece-wise constant functions.

\begin{figure}[h]
	\centering\includegraphics[width=0.7\hsize]{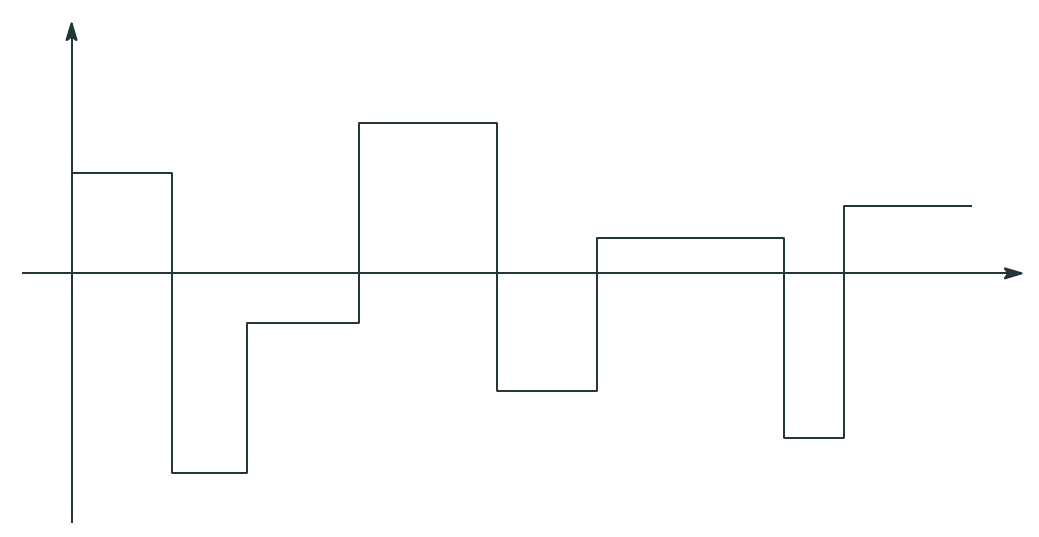}
	\caption{An example of piece-wise constant functions}
\end{figure}

We next introduce a function class well-known in the field of real analysis,
which is indeed related to $J_k$'s (Lemma \ref{lem:ll}).

\begin{dfn}
	For any real numbers $a<b$ and a function $f:[a, b]\to\R$,
	define the {\it total variation} of $f$ on $[a, b]$ as
	\[
		\mathrm{TV}^f([a, b])
		:=\sup_{
			M\ge1,\ 
			a=t_0<\cdots<t_M=b}
			\sum_{i=0}^{M-1}|f(t_{i+1})-f(t_i)|.
	\]
	Also, for $C>0$, define the set of functions with bounded total variation as
	\[
		\mathrm{BV}(C):=\left\{
			f:[0, 1]\to\R
			\,\middle|\,
			|f(0)|\le C,\ 
			\mathrm{TV}^f([0,1])\le C
		\right\}.
	\]
\end{dfn}

\begin{rem}
	The condition $|f(0)|\le C$ is needed to bound the size of the set,
	and it may be replaced by other similar bounding conditions such as
	$\sup_{t\in[0,1]}|f(t)|\le C$ or $\int_0^1|f(t)| \dd t$
	$\le C$
	\citep[e.g.,][]{donoho1993unconditional}.
	These conditions are equivalent up to constant multiplications of $C$ (i.e.,
	$\mathrm{BV}(C)\subset \mathrm{BV}'(\alpha C)
	\subset \mathrm{BV}(\beta C)$ holds for some $\alpha, \beta>0$,
	where $\mathrm{BV}'$ is a set defined with another constraint).
	Hence, we adopt $|f(0)|\le C$ for simplicity of arguments.
\end{rem}

Then, we have the following assertion.
The proof is given in the appendix (Section \ref{rev_lem:ll})
	
\begin{lem}\label{lem:ll}
	$\overline{\conv}(J_k(C))\supseteq \mathrm{BV}(C)$ holds for each $k\ge1$ and $C>0$.
\end{lem}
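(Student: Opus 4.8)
The plan is to show that an arbitrary $f \in \mathrm{BV}(C)$ can be written as an $L^2$-limit of convex combinations of elements of $J_k(C)$. First I would reduce to approximating $f$ by step functions: given a partition $0 = t_0 < t_1 < \cdots < t_M = 1$, the step function $g$ that takes the value $f(t_{j-1})$ on $[t_{j-1}, t_j)$ (and $f(1)$ at the endpoint) converges to $f$ in $L^2$ as the mesh goes to $0$, because $f$, being of bounded variation, is regulated — it has one-sided limits everywhere and at most countably many discontinuities, so $g \to f$ pointwise a.e.\ and boundedly (all functions here are bounded by $2C$ in sup-norm, say, since $|f(0)| \le C$ and $\mathrm{TV}^f \le C$), and dominated convergence finishes the $L^2$ claim. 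So it suffices to show each such step function $g$ lies in $\overline{\conv}(J_k(C))$; in fact I will show it lies in $\conv(J_M(C))$ directly, which is even stronger and avoids needing the closure for this part.

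Next, the main computation: write $g$ in the telescoping ``jump'' form. Set $a_0 := f(0)$ and $a_i := f(t_i) - f(t_{i-1})$ for $i = 1, \ldots, M-1$ (note $g$ is constant $= f(1)$ on the last interval but we can fold that in). Actually it is cleaner to write $g = f(0) + \sum_{i=1}^{M-1} (f(t_i) - f(t_{i-1})) \, 1_{[t_i, 1]}$, so that $g$ is literally an element of the defining family for $J_{M-1}$ except that the constraint is on the sum of $|a_i|$: here $\sum_{i=1}^{M-1} |a_i| = \sum_{i=1}^{M-1} |f(t_i) - f(t_{i-1})| \le \mathrm{TV}^f([0,1]) \le C$ and $|a_0| = |f(0)| \le C$. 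Hence $g \in J_{M-1}(C) \subseteq \conv(J_{M-1}(C))$ — so in this formulation we do not even need the convex hull at all for the step functions themselves! The convex hull and its closure enter only in passing to the limit: $\mathrm{BV}(C) \subseteq \overline{\bigcup_k J_k(C)}^{\,L^2}$, and since for each fixed $k$ we want the statement $\overline{\conv}(J_k(C)) \supseteq \mathrm{BV}(C)$, I need to handle the fact that the approximating step functions have growing numbers of jumps $M-1 > k$.

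This is where the convex hull does real work, and it is the step I expect to be the main obstacle to state cleanly. The point is that $J_M(C) \subseteq \conv(J_k(C))$ for all $M \ge k$: a function with $M$ jumps and total jump-size $\le C$ is a convex combination of functions each having at most $k$ jumps (with appropriately scaled jump sizes). Concretely, one can group the $M$ jumps into $\lceil M/k \rceil$ blocks of at most $k$ consecutive jumps; writing $g = a_0 + \sum_{\text{blocks } B} h_B$ where $h_B := \sum_{i \in B} a_i 1_{[t_i,1]}$, and letting $s_B := \sum_{i \in B} |a_i|$ with $S := \sum_B s_B \le C$, one sets weights $\lambda_B := s_B / C$ (adding a slack weight $\lambda_0 := 1 - S/C \ge 0$ assigned to the constant function $a_0$, which lies in $J_0(C) \subseteq J_k(C)$) and writes $g = \lambda_0 \cdot a_0 + \sum_B \lambda_B \cdot \big( a_0 + (C/s_B) h_B \big)$; each function $a_0 + (C/s_B) h_B$ has at most $k$ jumps and total jump-size $(C/s_B) \cdot s_B = C$, hence lies in $J_k(C)$. (Degenerate blocks with $s_B = 0$ are simply dropped.) Thus every step function $g$ of the above kind lies in $\conv(J_k(C))$, so the $L^2$-limit $f$ lies in $\overline{\conv}(J_k(C))$, which is the claim. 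The only genuinely fiddly points are the bookkeeping on the weights summing to one and the handling of the final interval and of zero-size blocks, but none of these is conceptually hard; the regulated-function/dominated-convergence argument for the approximation is standard.
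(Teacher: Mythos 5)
Your proof is correct, and while it shares the paper's two-step skeleton, it swaps out both key ingredients, so a comparison is worthwhile. Both arguments reduce the lemma to: (a) every step function whose constant part and total jump size are bounded by $C$ lies in $\conv(J_k(C))$, and (b) every $f\in\mathrm{BV}(C)$ is an $L^2$-limit of such step functions. For (a) the paper first reduces to $k=1$ (since $J_1(C)\subset J_k(C)$) and observes that $\conv(J_1(C))=\bigcup_{m\ge1}J_m(C)$, which immediately absorbs step functions with arbitrarily many jumps; your block-grouping convex combination proves the equivalent inclusion $J_M(C)\subset\conv(J_k(C))$ directly for general $k$ --- same content, a bit more bookkeeping (minor slip: $J_0(C)$ is not defined in the paper, but the constant $a_0$ is trivially in $J_k(C)$ by taking all $a_i=0$). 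The genuine divergence is in (b): the paper invokes the Jordan decomposition $f=f(0)+f^+-f^-$ into monotone parts (its Lemma on bounded-variation functions) and samples each monotone part on a uniform grid, which yields the quantitative bound $\|f_k-f\|_{L^2}^2\le 2C^2/k$ with $f_k\in J_{2k}(C)$; you instead sample $f$ itself at the partition points and argue convergence qualitatively, using that a BV function is bounded by $2C$, has at most countably many discontinuities, and then dominated convergence. Your route works precisely because the sampled increments $f(t_i)-f(t_{i-1})$ are still controlled by $\mathrm{TV}^f([0,1])\le C$, so the approximants respect the jump budget --- that is the one point that needed checking and you checked it. What the paper's decomposition buys is an explicit rate and no appeal to measure-theoretic facts about discontinuity sets; what yours buys is avoiding the Jordan decomposition entirely. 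Since only density is needed, either argument suffices, and the remaining cosmetic issues (the value on the last interval, the $J_0$ notation) are immaterial.
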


From the above, it follows that linear estimators cannot distinguish $J_k(C)$ and $\mathrm{BV}(C)$
in terms of minimax convergence rates (Theorem \ref{thm:l2}).

Since it is known that the unit ball of $B_{1, 1}^1([0, 1])$ is included
in $\mathrm{BV}(C)$ for some $C>0$ \cite{peetre1976new},
the following theorem can be seen as a special case of Theorem $1$ in \acite{zhang2002wavelet}
(see also Table \ref{table:1}).

\begin{thm}\label{thm:bv}
	There exists a constant $c>0$ dependent only on $C$ such that
	\[
		\inf_{\hat{f}:\rm{linear}}\sup_{f^\circ\in \mathrm{BV}(C)}
		R(\hat{f}, f^\circ)
		\ge cn^{-1/2}
	\]
	holds.
\end{thm}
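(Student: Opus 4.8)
The plan is to establish the lower bound $\inf_{\hat f:\mathrm{linear}}\sup_{f^\circ\in\mathrm{BV}(C)}R(\hat f,f^\circ)\ge cn^{-1/2}$ by combining the convex-hull invariance (Theorem~\ref{thm:l2}) with a direct construction of a hard finite subfamily inside $J_k(C)$ whose size we can tune with $n$. By Lemma~\ref{lem:ll} and Theorem~\ref{thm:l2}, for linear (hence affine) estimators the minimax risk over $\mathrm{BV}(C)$ equals the minimax risk over $J_k(C)$ for every $k$; so it suffices to produce, for a suitable $k=k(n)$, a lower bound of order $n^{-1/2}$ for the linear minimax risk over $J_k(C)$ itself. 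Alternatively — and this is the route I would actually take, since it is self-contained — I would bound the linear minimax risk over $\mathrm{BV}(C)$ directly from below using a specific hard subproblem, invoking the convex-hull statement only as conceptual motivation for why linearity is the obstruction.

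The concrete hard instance: fix a large integer $m$ and consider the $2^m$ functions $f_\epsilon = \tfrac{C}{2m}\sum_{j=1}^m \epsilon_j\, \psi_j$, where $\epsilon\in\{-1,+1\}^m$ and $\psi_j$ is a Haar-type bump (or a difference of two step functions $1_{[t_j,1]}-1_{[t_{j+1},1]}$) supported on the $j$-th dyadic subinterval of $[0,1]$, with $\|\psi_j\|_{L^2}\asymp m^{-1/2}$ and disjoint supports. Each such $f_\epsilon$ lies in $\mathrm{BV}(C)$ (its total variation is $\le C$ by construction) and in $J_{2m}(C)$. For two sign vectors differing in $r$ coordinates, $\|f_\epsilon-f_{\epsilon'}\|_{L^2}^2\asymp C^2 r/m^{2}$. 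Now I would restrict any linear estimator $\hat f(x)=\sum_i Y_i\phi_i(x;X^n)$ to this family: because the $\psi_j$ have disjoint supports and the family is a scaled hypercube, the estimation problem decouples (coordinate-wise, after projecting onto $\mathrm{span}\{\psi_j\}$) into $m$ essentially independent one-dimensional sign-estimation problems, each with effective signal strength $\asymp C/m$ and effective noise (from the $n$ samples landing in an interval of width $1/m$, plus the $L^2$ normalization) of order $\sigma/\sqrt{n/m}\cdot m^{-1/2}=\sigma/\sqrt n$. A linear estimator cannot exploit the hard sparsity/discreteness here; the classical bias–variance calculation for linear estimators on such a solid hyperrectangle (as in Donoho–Liu–MacGibbon / Pinsker-type reasoning, and as used by \acite{zhang2002wavelet}) shows each coordinate contributes risk at least of order $\min\{(C/m)^2, \sigma^2/n\}\cdot m^{-1}$ in $L^2$. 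Summing over $m$ coordinates gives total linear risk $\gtrsim m\cdot\min\{(C/m)^2, \sigma^2/n\}\cdot m^{-1} = \min\{C^2/m^2,\ \sigma^2/n\}$; choosing $m\asymp \sqrt{n}$ balances the two terms and yields the rate $n^{-1/2}$.

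The main obstacle is making the ``linear estimators cannot beat the bias–variance tradeoff on a hyperrectangle'' step fully rigorous in the random-design, Lebesgue-$L^2$ setting used here, rather than the fixed-design / sequence-model setting of Zhang or Donoho. There are two ways to handle this. The clean way is to quote Theorem~\ref{thm:bv}'s own advertised justification: the excerpt already notes that the unit ball of $B^1_{1,1}([0,1])$ sits inside $\mathrm{BV}(C)$ and that the statement then follows as a special case of Theorem~1 of \acite{zhang2002wavelet} — so one legitimate ``proof'' is simply to verify the Besov embedding $B^1_{1,1}([0,1])\hookrightarrow \mathrm{BV}(c\,C)$ (standard, via the wavelet characterization of $B^1_{1,1}$ and $\|f'\|_{L^1}\le$ sum of $|{\rm wavelet\ coeffs}|$) and invoke that theorem. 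The more self-contained way is to carry out the hypercube argument above and absorb the random-design issue by conditioning on $X^n$ and using that with the $\min\{\cdot,\cdot\}$ bound the ``bad'' event (an interval receiving far fewer than $n/m$ points) has controlled probability when $m\lesssim\sqrt n$. I would present the short route (Besov embedding plus citation) as the proof, with the direct hypercube construction sketched in a remark to keep the paper self-contained, since the convex-hull machinery of Theorem~\ref{thm:l2} plus Lemma~\ref{lem:ll} is precisely what lets us reduce to this already-known lower bound without re-deriving it.
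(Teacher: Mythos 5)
Your final recommendation --- verify that the unit ball of $B^1_{1,1}([0,1])$ embeds into $\mathrm{BV}(C)$ and then invoke Theorem~1 of \acite{zhang2002wavelet} --- is exactly how the paper handles Theorem~\ref{thm:bv}: the statement is presented as a special case of Zhang's theorem after citing the inclusion (via \acite{peetre1976new}), with no separate proof supplied, and the convex-hull machinery (Theorem~\ref{thm:l2}, Lemma~\ref{lem:ll}) is used only afterwards to transfer the bound from $\mathrm{BV}(C)$ down to $J_k(C)$ in Corollary~\ref{cor:jk}. So on the route you actually propose to present, you agree with the paper.

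The self-contained hypercube argument that you describe at length (and would keep as a remark), however, is genuinely flawed and should be dropped. The family $f_\epsilon=\tfrac{C}{2m}\sum_j\epsilon_j\psi_j$ is the vertex set of a solid orthosymmetric hyperrectangle in sequence space, and on such quadratically convex sets linear estimators are within a constant factor of the \emph{full} minimax risk (Ibragimov--Khasminskii, Donoho--Liu--MacGibbon), so a dense hypercube can never certify linear suboptimality; this is the very point of the convexity discussion in Section~\ref{chap:3}. The arithmetic confirms it: with your normalization each coordinate is $\theta_j=\pm\tfrac{C}{2m}m^{-1/2}$, so the per-coordinate linear risk is $\asymp\min\{C^2/m^3,\sigma^2/n\}$ and the total is $\min\{C^2/m^2,\,m\sigma^2/n\}$, maximized at $m\asymp n^{1/3}$ with value of order $n^{-2/3}$ --- the ordinary minimax rate, not $n^{-1/2}$; your own displayed bound $\min\{C^2/m^2,\sigma^2/n\}$ is even smaller, never exceeding $\sigma^2/n$, so the claim that $m\asymp\sqrt{n}$ ``balances'' the terms to $n^{-1/2}$ is incorrect. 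The correct hard instance is sparse rather than dense: a single bump of height $\asymp C$ and width $1/m$ (hence $\|f^\circ\|_{L^2}^2\asymp C^2/m$) with $m\asymp\sqrt{n}$, placed at a location where the linear weights necessarily have small variance, which forces a large bias; this is precisely the random-design refinement of Zhang's proof carried out for Theorem~\ref{linear_not_efficient} in Section~\ref{ap-35}, and taking $\beta=1$ there yields the $n^{-1/2}$ rate. Either cite Zhang as the paper does, or adapt that single-spike argument if you want self-containment.
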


The following corollary is one of the main results in this paper.

\begin{cor}\label{cor:jk}
	For $k=1,2,\ldots$,
	there exists a constant $c>0$ dependent only on $C$ such that
	\[
		\inf_{\hat{f}:\rm{linear}}\sup_{f^\circ\in J_k(C)}
		R(\hat{f}, f^\circ)
		\ge cn^{-1/2}
	\]
	holds.
\end{cor}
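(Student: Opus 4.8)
The plan is to obtain the bound by chaining the three results already established: Theorem~\ref{thm:l2}, Lemma~\ref{lem:ll}, and Theorem~\ref{thm:bv}. The first step is to note that the convexity argument in the proof of Theorem~\ref{thm:l2} applies verbatim to linear estimators, since a linear estimator is simply an affine one with $\phi\equiv 0$ in~(\ref{eq:l0}). Concretely, for each fixed linear $\hat f$ the inequality derived there gives $R(\hat f,h^\circ)\le tR(\hat f,f^\circ)+(1-t)R(\hat f,g^\circ)$, so $f^\circ\mapsto R(\hat f,f^\circ)$ is a convex functional on $L^2([0,1])$; combined with its continuity (the ingredient used for the closure, cf.\ the appendix, Section~\ref{convproof}) this yields, for every linear $\hat f$,
\[
	\sup_{f^\circ\in J_k(C)}R(\hat f,f^\circ)
	=\sup_{f^\circ\in\overline{\conv}(J_k(C))}R(\hat f,f^\circ).
\]
Taking the infimum over all linear estimators preserves this equality, so
\[
	\inf_{\hat f:\rm{linear}}\sup_{f^\circ\in J_k(C)}R(\hat f,f^\circ)
	=\inf_{\hat f:\rm{linear}}\sup_{f^\circ\in\overline{\conv}(J_k(C))}R(\hat f,f^\circ).
\]

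The second step is to invoke Lemma~\ref{lem:ll}, which says $\mathrm{BV}(C)\subseteq\overline{\conv}(J_k(C))$ for every $k\ge1$. Since enlarging the supremum set can only increase the worst-case risk, the right-hand side above is at least $\inf_{\hat f:\rm{linear}}\sup_{f^\circ\in\mathrm{BV}(C)}R(\hat f,f^\circ)$, and the latter is bounded below by $cn^{-1/2}$ with a constant $c>0$ depending only on $C$, by Theorem~\ref{thm:bv}. Concatenating the two displays with this bound gives the claimed inequality, and the constant inherits its dependence (only on $C$, not on $k$) directly from Theorem~\ref{thm:bv} together with the fact that Lemma~\ref{lem:ll} holds uniformly in $k$.

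The corollary is therefore essentially a formal consequence of the preceding machinery, and no new estimate is required. The only point needing a word of care — and the nearest thing to an obstacle — is the transfer of Theorem~\ref{thm:l2} from the class of affine estimators to its subclass of linear estimators, together with the passage from $\conv$ to $\overline{\conv}$: both are handled cleanly because the convexity (and continuity) of $R(\hat f,\cdot)$ is established \emph{pointwise} in $\hat f$, so the set equality of the worst-case risks holds before the infimum is taken and hence survives it. I would state this explicitly rather than merely quoting Theorem~\ref{thm:l2}, to avoid the (harmless but real) mismatch between "affine" in the cited statement and "linear" in the corollary.
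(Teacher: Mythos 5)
Your proposal is correct and follows essentially the same route as the paper, which likewise deduces the corollary directly from Theorem~\ref{thm:l2}, Lemma~\ref{lem:ll}, and Theorem~\ref{thm:bv} (with the same constant $c$ as in Theorem~\ref{thm:bv}). Your explicit remark that the convexity of $R(\hat f,\cdot)$ holds pointwise in $\hat f$, so the equality of worst-case risks transfers from affine to the subclass of linear estimators, is a careful spelling-out of what the paper leaves implicit rather than a different argument.
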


\begin{proof}
	The result is clear from Theorem \ref{thm:l2}, Lemma \ref{lem:ll}, and Theorem \ref{thm:bv}.
	We can of course take the same $c$ as in Theorem \ref{thm:bv}.
\end{proof}
	
\begin{rem}
	On the one hand, the minimax-optimal rate 
	of the unit ball of $B^1_{1, 1}([0,1])$ is $\tilde{\Theta}(n^{-2/3})$
	(Table \ref{table:1}),
	whereas the counterpart of $J_k(C)$ is $\tilde{\Theta}(n^{-1})$ as is attained by deep learning
	(proved later; see Corollary \ref{cor:trivial}).
	On the other hand, the fact that the unit ball of $B^1_{1, 1}([0, 1])$
	is included in $\mathrm{BV}(C)$ implies that
	the linear minimax rate of $\mathrm{BV}(C)$
	is not faster than that of $B^1_{1, 1}([0,1])$'s unit ball.
	Since $\mathrm{BV}(C)$ and $J_k(C)$ have the same linear minimax rate,
	$J_k(C)$ is a quite extreme example, even in comparison with $B^1_{1, 1}([0,1])$.
\end{rem}

\section{Sparse target function classes}\label{chap:4}

As $J_k(C)$'s given in Section \ref{sec:3.2} are too simple,
we consider sparse target function classes
which are generalizations of $J_k(C)$'s
and are also related to wavelets.
We investigate the performance of deep learning and other methods
over these sparse classes.
The minimax lower bound for each class is also given
by applying the arguments in Section \ref{sec:2.2}.
Sparsity well characterizes the spaces whose convex hulls
are much larger than the original spaces,
a property that is essential for the proofs that were given in Section \ref{chap:3}.


\subsection{The $\ell^0$-bounded affine class}

The definition of the following class
is inspired by the concept of ``affine class'' treated in \acite{bolcskei2017optimal}.

\begin{dfn}\label{dfn:p=0}
	Given a set $\Phi\subset L^2([0, 1]^d)$
	with $\|\phi\|_{L^2}=1$ for each $\phi\in\Phi$
	along with constants $n_s\in\Z_{>0}$ and $C>0$,
	we define an {\it $\ell^0$-bounded affine class $\mathcal{I}_\Phi^0$} as
	\[
		\mathcal{I}_\Phi^0(n_s, C):=
		\left\{
			\sum_{i=1}^{n_s} c_i\phi_i(A_i\cdot-b_i)
			\,\middle|\,
				|\det A_i|^{-1},\ \|A_i\|_\infty,\ 
				\|b_i\|_\infty,\ |c_i|\le C,\ 
				\phi_i\in\Phi,\ i=1,\ldots,n_s
		\right\}.
	\]
\end{dfn}

If we adopt a jump-type function $\phi\in\Phi$ such as $\phi=1_{[0, 1/2)}$,
we can see $\mathcal{I}_\Phi^0(n_s, C)$ as a generalization
of $J_k(C)$'s defined in the previous section.
The condition for $c_i$ is also regarded as $\|c\|_{\ell^0}\le n_s$,
where the $\ell^0$ norm is used as the most extreme measurement of sparsity
\cite{raskutti2011minimax,wang2014adaptive}.

Let us derive a minimax lower bound for this class.
Although the proof for this assertion can easily be given
by applying the argument appearing in \acite{tsybakov2008},
we provide it in the appendix (Section \ref{ap-3}).

\begin{thm}\label{thm:weak_lower_bound}
	There exists a constant $C_0>0$ depending only on $\sigma^2$ such that
	\[
		\inf_{\hat{f}}\sup_{f^\circ\in\mathcal{I}_\phi^0}
		R(\hat{f}, f^\circ)
		\ge\frac{C_0}n
	\]
	holds for each $n\ge 1$.
\end{thm}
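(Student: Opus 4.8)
The plan is to apply the information-theoretic lower bound of Theorem \ref{yang_barron} (Yang--Barron) to the class $\mathcal{I}_\Phi^0$. To do so I need a well-separated finite subfamily inside $\mathcal{I}_\Phi^0$ whose packing entropy $M(\delta)$ is large while the covering entropy $V(\ve)$ of $\mathcal{I}_\Phi^0$ (or of the relevant subfamily) is small at the matching scale. Since we only need the rate $n^{-1}$, a \emph{constant-dimensional} subfamily already suffices: I would fix a single $\phi\in\Phi$ and a single admissible affine map $A,b$, and consider the one-parameter (or few-parameter) family $\{c\,\phi(A\cdot-b) : c\in S\}$ for a suitable finite set $S\subset[-C,C]$. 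Because $\|\phi\|_{L^2}=1$, we have $\|c\phi(A\cdot-b)-c'\phi(A\cdot-b)\|_{L^2} = |c-c'|\,\||\det A|^{-1/2}\|\cdot(\text{const})$, so an arithmetic progression of $O(1)$ points in $S$ gives an $\delta$-packing of cardinality, say, $2$ (hence $M(\delta)\ge\log 2$) for $\delta$ of order a fixed constant $\delta_0$ depending only on $C$ and the affine constraints. Simultaneously, this two-point family is its own $0$-cover, so $V(\ve)\le\log 2$ for all $\ve>0$.

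Next I would choose $\ve$ and $\delta$ to satisfy the two hypotheses of Theorem \ref{yang_barron}, namely $V(\ve)\le n\ve^2/(2\sigma^2)$ and $M(\delta)\ge 2n\ve^2/\sigma^2+2\log 2$. With $V(\ve)\le\log 2$, the first inequality holds as soon as $n\ve^2\ge 2\sigma^2\log 2$, i.e.\ $\ve^2$ of order $\sigma^2/n$. For the second, I need $M(\delta)\ge 2n\ve^2/\sigma^2+2\log 2$; taking $\ve^2 = c_1\sigma^2/n$ with $c_1$ a small absolute constant makes the right-hand side at most $(2c_1+2)\log 2$, and by enlarging the packing set to have cardinality $2^{m}$ for a fixed $m=O(1)$ — i.e.\ taking $m$ equally spaced values $c$ in $[-C,C]$ with $m$ chosen so that $M(\delta)=m\log 2$ exceeds $(2c_1+2)\log 2$ while still keeping the minimal pairwise $L^2$-separation $\delta$ of order $C$ times the (fixed) affine factor — both inequalities hold with $\delta=\delta_0$ a constant independent of $n$. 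Then Theorem \ref{yang_barron} yields $\inf_{\hat f}\sup_{f^\circ\in\mathcal{I}_\Phi^0}R(\hat f,f^\circ)\ge\delta_0^2/8$... but that is a constant, not $C_0/n$; the resolution is that I should instead \emph{not} fix $\ve$ at the constant scale but let the separation shrink.

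Let me restate the correct scaling. I take the finite subfamily $\{c_j\phi(A\cdot-b)\}_{j=1}^{N}$ with $c_j$ on a grid of spacing $\eta$ in $[-C,C]$, so $N\asymp C/\eta$, giving $M(\delta)\asymp\log(C/\eta)$ with separation $\delta\asymp\eta$ (up to the fixed factor $|\det A|^{-1/2}$), and $V(\ve)\le\log N\asymp\log(C/\eta)$ as well since the family is finite. The Yang--Barron conditions become roughly $\log(C/\eta)\lesssim n\ve^2/\sigma^2$ and $\log(C/\eta)\gtrsim n\ve^2/\sigma^2$, which are compatible when $\ve^2\asymp\sigma^2\log(C/\eta)/n$; for consistency with the packing scale we need $\delta\gtrsim\ve$, i.e.\ $\eta^2\gtrsim\sigma^2\log(C/\eta)/n$. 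Choosing $\eta\asymp\sqrt{\sigma^2/n}$ (so $\log(C/\eta)\asymp\log n$, absorbed as a constant factor against a slightly larger $\eta$, or one simply takes $\eta = c_2\sqrt{\sigma^2/n}$ and checks the inequalities directly) gives $\delta^2\asymp\sigma^2/n$, hence the lower bound $\delta^2/8\gtrsim C_0/n$ with $C_0$ depending only on $\sigma^2$ (the dependence on $C$ and on the admissible affine constants being absorbed since those are fixed once we pick one legal $(\phi,A,b)$, e.g.\ $A=I$, $b=0$). The main obstacle is purely bookkeeping: verifying that a single triple $(\phi,A,b)$ satisfying the constraints $|\det A|^{-1},\|A\|_\infty,\|b\|_\infty\le C$ exists (it does, e.g.\ $A=I$, $b=0$, provided $C\ge 1$; for smaller $C$ a trivial rescaling argument works) and checking that the finite grid of scalars $c_j\in[-C,C]$ is genuinely an $\ve$-covering and $\delta$-packing at the claimed scales, together with the elementary constants in matching the two Yang--Barron inequalities. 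No deep estimate is needed; the content is entirely in setting up the constant-complexity parametric subfamily and reducing to the classical $n^{-1}$ parametric rate.
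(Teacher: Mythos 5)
There is a genuine gap in the final step: Theorem \ref{yang_barron} cannot produce the parametric rate $n^{-1}$ from the essentially one-dimensional subfamily you construct, and the place where you ``absorb constants'' is exactly where the argument breaks. The theorem needs $V(\varepsilon)\le n\varepsilon^2/(2\sigma^2)$ \emph{and} $M(\delta)\ge 2n\varepsilon^2/\sigma^2+2\log 2$; whenever you certify the first condition, the second demands a packing entropy at scale $\delta$ that is at least (roughly) four times the covering entropy at scale $\varepsilon$, plus $2\log 2$. For your grid $\{c_j\phi\}$ both entropies are logarithmic, $V(\varepsilon)\approx\log(C/\varepsilon)$ and $M(\delta)\approx\log(C/\delta)$, so the factor $4$ is not an absorbable constant: it forces $\log(C/\delta)\gtrsim 4\log(C/\varepsilon)$, i.e.\ $\delta\lesssim\varepsilon^4/C^3$. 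The smallest $\varepsilon$ allowed by the first condition has $\varepsilon^2\asymp\sigma^2\log n/n$, so the theorem can only give $\delta^2/8\asymp n^{-4}$ up to logs, not $C_0/n$. At your proposed scales ($\delta\asymp\eta\asymp\sqrt{\sigma^2/n}$, $\varepsilon^2\asymp\sigma^2\log(C/\eta)/n$) the second condition reads roughly $\tfrac12\log n\ge 2\log n+2\log 2$, which is false; and for a two-point (or any family where you bound $V(\varepsilon)$ by $\log N$) it is outright impossible, since then $M(\delta)\le\log N$ while $2n\varepsilon^2/\sigma^2+2\log2\ge 4\log N+2\log 2$. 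This is the standard lossiness of the global-entropy (Yang--Barron) lower bound on parametric classes, and you cannot escape it by invoking the entropy of the full class either, because the statement must hold for every admissible $\Phi$, e.g.\ a singleton, in which case $\mathcal{I}_\Phi^0$ really is a low-dimensional family.

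The paper proves the theorem by a two-point Le Cam/Tsybakov testing argument instead: inside $\{c\phi:|c|\le C\}$ it takes $f_n^{\pm}:=\pm\frac{1}{2\sqrt n}\phi$ and lower-bounds the error of distinguishing them via the likelihood ratio $\dd\mathrm{P}_{f_n^+}/\dd\mathrm{P}_{f_n^-}$, showing it exceeds $t=e^{-2/\sigma^2}$ with probability at least $1/4$; this step uses the symmetry of $\sum_i\xi_i\phi(X_i)$ together with Markov's inequality for $n^{-1}\sum_i\phi(X_i)^2$, which is needed precisely because $\phi$ is only $L^2$-normalized (not bounded) and the design is random. The conclusion is $\inf_{\hat f}\sup_{f^\circ} R(\hat f,f^\circ)\ge e^{-2/\sigma^2}/(32n)$. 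Your reduction to a single admissible triple (take $A=I$, $b=0$, with the rescaling remark when $C<1$) and the choice of separation $\asymp n^{-1/2}$ are exactly right; what must change is the final tool: replace Theorem \ref{yang_barron} by such a two-point likelihood-ratio bound (or any Le Cam-type two-hypothesis argument), and the claimed $C_0/n$ bound follows.
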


\begin{rem}
	If the set $\Phi$ is simple enough; e.g., a finite set of
	piece-wise constants or piece-wise polynomials,
	then this rate can be almost attained by nonlinear estimators.
	Indeed, in such cases deep learning attains the rate $\mathrm{O}(n^{-1}(\log n)^3)$
	(Theorem \ref{thm:first_order}).
	However, with a single noncontinuous $\phi\in\Phi$,
	linear estimators become suboptimal;
	the linear minimax rate is lower-bounded by $\mathrm{\Omega}(n^{-1/2})$
	(Corollary \ref{cor:jk}).
\end{rem}


\subsection{The $w\ell^p$-bounded function classes}

The function class we treat in the previous section seems too simple
to approximate the real-world data.
Therefore, we are going to consider larger function classes with ``sparsity''.
We introduce concepts for measuring the sparsity of function classes
in order to present a simple treatment of several sparse spaces.
These concepts were introduced and discussed previously in
\acite{donoho1993unconditional}, \acite{donoho1996unconditional}, and
\acite{yang1999information}.

\begin{dfn}\label{def:weak_el_p}
	For a sequence $a=(a_i)_{i=1}^\infty\in\ell^2$,
	let each $|a|_{(i)}$ denote the $i$-th largest absolute value of terms in $a$.
	For $0<p<2$, the {\it weak $\ell^p$ norm} of $a$ is defined as
	\begin{align}
		\|a\|_{w\ell^p}:=\sup_{i\ge 1}i^{1/p}|a|_{(i)}.
		\label{eq:sp1}
	\end{align}
\end{dfn}

Here, notice that $\|\cdot\|_{w\ell^p}$ is not a norm, as $(|a|_{(i)})_{i=1}^\infty$
is a permutation of $(|a_i|)_{i=1}^\infty$.
However, we call it a ``weak $\ell^p$ norm'' following the notation used in
\acite{donoho1993unconditional} and \acite{donoho1996unconditional}. 

\begin{dfn}\label{p_tail_comp}
	Given an orthonormal set $\phi=(\phi_i)_{i=1}^\infty\subset L^2([0, 1]^d)$ and
	constants $C_1, C_2, \beta>0$ and $0<p<2$, we define a {\it sparse $\ell^p$-approximated
	set} $\mathcal{I}_\phi^p$ as
	\[
		\mathcal{I}^p_\phi(C_1, C_2, \beta):=\left\{
			\sum_{i=1}^\infty a_i\phi_i
			\,\middle|\,
			\|a\|_{w\ell^p}\le C_1,\ 
			\sum_{i=m+1}^\infty a_i^2\le C_2m^{-\beta},\ 
			m=1,2,\ldots
		\right\}.
	\]
\end{dfn}

The constraint $\sum_{i=m+1}^\infty a_i^2\le Cm^{-\beta}$
is called {\it$\beta$-minimally tail compactness} of $(a_i)_{i=1}^\infty$,
which is required to make the set compact in the $L^2$ metric.

\begin{rem}
	To represent sparsity, the $\ell^p$ norm of coefficients is also used
	\cite[see, e.g.,][]{raskutti2011minimax,wang2014adaptive}.
	Note here that $\|a\|_{\ell^p}\le C$ implies $\|a\|_{w\ell^p}\le C$.
	Indeed, $\|a\|_{\ell^p}\le C$ means that for each $i$,
	\[
		i^{1/p}|a|_{(i)}
		\le\left(\sum_{j=1}^i|a|_{(j)}^p\right)^{1/p}
		\le\left(\sum_{j=1}^\infty|a_j|^p\right)^{1/p}
		\le C.
	\]
	Thus, a weak $\ell^p$ ball contains an ordinary $\ell^p$ ball.
	In addition, consider the case in which $d=1$ and $\phi$ is an orthonormal basis
	generated by a wavelet in $C^r([0, 1])$
	with $r\in\Z_{>0}$ satisfying $r>\alpha:=1/p-1/2$.
	Then, the Besov norm $\|\cdot\|_{B^\alpha_{p, p}}$
	of a function is equivalent to the $\ell^p$-norm $\|\cdot\|_{\ell^p}$ of wavelet coefficients
	\cite[Theorem 2]{donoho1998minimax}.
	In this case, $\mathcal{I}^p_\phi$ may be just a slight expansion of existing space,
	but our main interest is the case in which $\phi$ has a discontinuity (e.g., when $\phi$
	is defined by the Haar wavelet),
	which makes things different.
	Furthermore, notice that Besov spaces with such parameters are omitted
	in Table \ref{table:1} (see the note;
	the upper bounds are given in \acite{suzuki2018adaptivity} for a wider range of parameters,
	but the range for the given lower bounds for linear estimators is limited).
\end{rem}

Hereinafter, we fix $p$, $C_1$, $C_2$, and $\beta$ and 
often write $\mathcal{I}^p_\phi(C_1, C_2, \beta)$ as $\mathcal{I}^p_\phi$
if there is no confusion;
therefore, constants appearing in the following may depend on these values.
In the following arguments,
we first derive a minimax lower bound for $\mathcal{I}_\phi^p$,
and then we introduce a broader function class that is well approximated by neural networks.

To use Theorem \ref{yang_barron}, we exploit the following lemma.
As stated in Section \ref{sec:2.2},
the covering entropy of the function class is important.
The proof of the lemma is given in the appendix
(Section \ref{lem:rev_up_low}).

\begin{lem}\label{lem:sp1}
	Let $\alpha:=1/p-1/2$, and suppose $\beta$ satisfies $\beta\le2\alpha$.
	Then there exists a constant $C_\mathrm{low}, C_\mathrm{up}>0$ such that
	\[
		C_\mathrm{low}\ve^{-1/\alpha}
		\le V_{(\mathcal{I}_{\phi}^p, \|\cdot\|_{L^2})}(\ve)
		\le C_\mathrm{up}\ve^{-1/\alpha}(1+\log(1/\ve))
	\]
	holds for each $\ve>0$.
\end{lem}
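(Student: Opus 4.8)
The plan is to estimate the covering entropy of $\mathcal{I}_\phi^p$ by reducing it to a purely sequence-space problem, since $\phi$ is orthonormal and hence the $L^2$ geometry of $\mathcal{I}_\phi^p$ is isometric to the geometry of the associated coefficient sequences in $\ell^2$. Thus it suffices to bound the covering number of the set $S := \{a \in \ell^2 : \|a\|_{w\ell^p}\le C_1,\ \sum_{i>m}a_i^2\le C_2 m^{-\beta}\ \forall m\}$ in the $\ell^2$ metric. The key quantitative fact I would use is the elementary bound on the tail of a weak-$\ell^p$ sequence: if $\|a\|_{w\ell^p}\le C_1$ then $|a|_{(i)}\le C_1 i^{-1/p}$, so $\sum_{i>m}|a|_{(i)}^2 \le C_1^2\sum_{i>m}i^{-2/p}\lesssim C_1^2 m^{-(2/p-1)} = C_1^2 m^{-2\alpha}$ (using $2/p-1 > 1$, i.e. $p<1$; for $1\le p<2$ one instead leans on the tail-compactness hypothesis, which is why $\beta\le 2\alpha$ is assumed so the weak-$\ell^p$ tail is not the binding constraint). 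This tells me that truncating a sequence in $S$ to its first $m$ coordinates costs at most $O(m^{-\alpha})$ in $\ell^2$.

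For the upper bound I would proceed by the standard truncate-and-discretize scheme. Given $\ve>0$, choose a truncation level $m \asymp \ve^{-1/\alpha}$ so that the discarded tail has $\ell^2$-norm $\le \ve/2$. On the retained block of $m$ coordinates, the sequence lies in a weak-$\ell^p$ ball, which is contained in a fixed multiple of the $\ell^p$-ball of radius $\asymp m^{1/p - 1/p}$... more carefully, the weak-$\ell^p$ constraint confines the block to a set of $\ell^2$-radius $O(1)$ that is itself "sparse-ish", and a net of it at scale $\ve/2$ has log-cardinality $O(m\log(1/\ve)) = O(\ve^{-1/\alpha}\log(1/\ve))$ by a volumetric / entropy estimate for $\ell^p$-balls of the Schütt or Kühn type (or simply by covering each coordinate to precision $\ve/\sqrt{m}$ and counting, which already gives the $m\log(1/\ve)$ bound after accounting for the decay of the sorted coordinates). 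Combining the net on the block with the single tail element gives a covering of $\mathcal{I}_\phi^p$ at scale $\ve$ with log-cardinality $\le C_{\mathrm{up}}\ve^{-1/\alpha}(1+\log(1/\ve))$.

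For the lower bound I would exhibit a large packing set. Fix $m\asymp \ve^{-1/\alpha}$ and consider sequences supported on coordinates $1,\dots,m$ with each entry in $\{0,\pm c m^{-1/p}\}$ for a suitable small constant $c$; such sequences satisfy the weak-$\ell^p$ constraint (the $i$-th largest magnitude is at most $cm^{-1/p}\le c|a|_{(i)}\cdot(i/m)^{... }$, checked directly since all nonzero entries are equal and there are at most $m$ of them), and they satisfy tail-compactness trivially since they are finitely supported. By a Varshamov–Gilbert argument there are $2^{\Omega(m)}$ such sequences that are pairwise $\ell^2$-separated by $\gtrsim m^{1/2}\cdot m^{-1/p} = m^{-\alpha}\asymp\ve$. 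Hence $V_{(\mathcal{I}_\phi^p,\|\cdot\|_{L^2})}(\ve)\ge M_{(\cdot)}(\ve)\gtrsim m \asymp \ve^{-1/\alpha}$, giving $C_{\mathrm{low}}\ve^{-1/\alpha}$. I expect the main obstacle to be the upper bound's logarithmic factor: one has to be careful that covering a weak-$\ell^p$ ball in $\R^m$ at scale comparable to its own diameter does not cost more than $O(m\log(1/\ve))$ bits, rather than the naive $O(m\log(m/\ve^2))$ one might first write down — keeping the $\log$ to a single power of $\log(1/\ve)$ requires exploiting the sorted-decay structure, e.g. splitting the $m$ coordinates into dyadic blocks $[2^j, 2^{j+1})$ on which the magnitude is at most $C_1 2^{-j/p}$ and allocating the discretization precision per block, then summing a geometric series. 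The packing lower bound and the tail truncation are comparatively routine.
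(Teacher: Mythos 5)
Your lower bound is essentially the paper's own argument: the paper places a $k$-dimensional hypercube of side $\asymp k^{-1/p}$ inside the coefficient set and invokes a hypercube entropy lemma of Donoho (Lemma \ref{lem:sp2}), and your Varshamov--Gilbert family of $\{0,\pm c\,m^{-1/p}\}$-valued vectors is the same construction in different clothing. Two small slips there: the inequality $V(\ve)\ge M(\ve)$ is stated in the wrong direction (packing entropy dominates covering entropy at the same scale; you want a $2\ve$-separated family, i.e.\ $V(\ve)\ge M(2\ve)$, which only changes constants), and the tail-compactness of your codewords is \emph{not} ``trivial from finite support'': for $m'<m$ you need $c^2m^{-2\alpha}\le C_2(m')^{-\beta}$, which is precisely where the hypothesis $\beta\le 2\alpha$ enters, exactly as in the paper's verification that the hypercube lies in $\mathcal{I}_\phi^p$. (Also, your case split $p<1$ versus $1\le p<2$ is unnecessary: the sorted-tail bound $\sum_{j>m}|a|_{(j)}^2\lesssim m^{-2\alpha}$ needs only $2/p>1$, i.e.\ $p<2$.)

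The genuine gap is in the upper bound. You truncate once, at level $m\asymp\ve^{-1/\alpha}$, and assert the discarded tail has norm $\le\ve/2$. If ``the retained block of $m$ coordinates'' means the first $m$ coordinates by index, the only control on $\sum_{i>m}a_i^2$ is the tail-compactness bound $C_2m^{-\beta}$, and since $\beta\le2\alpha$ this is of order $\ve^{\beta/\alpha}\gg\ve^2$ unless $\beta=2\alpha$: the weak-$\ell^p$ constraint bounds the \emph{sorted} tail, not the index tail, because the few large coefficients may sit at arbitrarily late indices. If instead you retain the $m$ largest coefficients, the truncation error is indeed $O(\ve^2)$, but then a covering element must also encode \emph{where} those coefficients sit, and without a second cutoff that support ranges over infinitely many indices. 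The repair is exactly the two-scale truncation in the paper's proof: first use tail-compactness to discard all indices beyond $\lceil k^{2\alpha/\beta}\rceil$ (squared error $C_2k^{-2\alpha}$), then keep the $k$ largest of the remaining coefficients (squared error $\lesssim k^{-2\alpha}$ by weak $\ell^p$), quantize them at scale $k^{-1/2-\alpha}$, and pay the support-selection entropy $\log\binom{\lceil k^{2\alpha/\beta}\rceil}{k}\lesssim k\log k$; summing the counts gives $C_0k(1+\log k)$ and, with $k\asymp\ve^{-1/\alpha}$, the claimed bound. Your concern about dyadic blocks to keep the logarithm to a single power is a red herring — quantizing each of $m$ kept coordinates to precision $\ve/\sqrt m$ already costs only $m\log(C\sqrt m/\ve)=O(m\log(1/\ve))$ — the missing ingredient is the support-selection step driven by tail-compactness, not the per-coordinate precision.
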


Next, we derive a nearly tight minimax lower bound for $\mathcal{I}_\phi^p$.
In this case, ``nearly'' means ``up to log factors.''

\begin{thm}\label{thm:p>0}
	There exists a constant $C=C(p, C_1, C_2)>0$ such that
	\[
		\inf_{\hat{f}}\sup_{f^\circ\in\mathcal{I}_\phi^p}
		R(\hat{f}, f^\circ)
		\ge C n^{-\frac{2\alpha}{2\alpha+1}}(\log n)^{-\frac{4\alpha^2}{2\alpha+1}}
	\]
	holds for each $n\ge 2$.
\end{thm}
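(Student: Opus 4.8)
The plan is to derive the bound directly from the information-theoretic inequality of Theorem \ref{yang_barron}, feeding it with the two-sided entropy estimate of Lemma \ref{lem:sp1} (which is stated under the hypothesis $\beta\le2\alpha$, assumed throughout). The one auxiliary observation needed is that a maximal $\ve$-packing set of a metric space is automatically an $\ve$-covering set, so $M(\ve)\ge V(\ve)$ for every $\ve>0$; combined with Lemma \ref{lem:sp1} this provides both the upper estimate $V(\ve)\le C_\mathrm{up}\ve^{-1/\alpha}(1+\log(1/\ve))$ and the lower estimate $M(\delta)\ge V(\delta)\ge C_\mathrm{low}\delta^{-1/\alpha}$, which are exactly the two quantities appearing in the hypotheses of Theorem \ref{yang_barron}. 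All constants below may depend on $p,C_1,C_2$.

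First I would choose $\ve$ at the threshold where the covering constraint $V(\ve)\le n\ve^2/(2\sigma^2)$ becomes satisfiable. Since $V$ is nonincreasing while $\ve\mapsto n\ve^2/(2\sigma^2)$ is increasing, it suffices to find $\ve$ with $C_\mathrm{up}\ve^{-1/\alpha}(1+\log(1/\ve))\le n\ve^2/(2\sigma^2)$, i.e. $\ve^{(2\alpha+1)/\alpha}\ge 2\sigma^2 C_\mathrm{up}(1+\log(1/\ve))/n$. Setting $\ve=a\,(\log n/n)^{\alpha/(2\alpha+1)}$ for a suitable constant $a=a(\sigma,C_\mathrm{up},\alpha)$ makes the left-hand side of order $\log n/n$ while $1+\log(1/\ve)$ remains of order $\log n$, so the inequality holds for all $n$ large enough.

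With this $\ve$ fixed, one has $2n\ve^2/\sigma^2+2\log 2\le C'\,n^{1/(2\alpha+1)}(\log n)^{2\alpha/(2\alpha+1)}$ for large $n$, so the packing constraint $M(\delta)\ge 2n\ve^2/\sigma^2+2\log 2$ is implied by $C_\mathrm{low}\delta^{-1/\alpha}\ge C'\,n^{1/(2\alpha+1)}(\log n)^{2\alpha/(2\alpha+1)}$, which holds for $\delta=c\,n^{-\alpha/(2\alpha+1)}(\log n)^{-2\alpha^2/(2\alpha+1)}$ with $c=(C_\mathrm{low}/C')^{\alpha}$. Theorem \ref{yang_barron} then gives $\inf_{\hat f}\sup_{f^\circ\in\mathcal{I}_\phi^p}R(\hat f,f^\circ)\ge\delta^2/8=\tfrac{c^2}{8}\,n^{-2\alpha/(2\alpha+1)}(\log n)^{-4\alpha^2/(2\alpha+1)}$, which is the claimed inequality with $C=c^2/8$; the finitely many small values of $n\ge2$ are absorbed by decreasing $C$, since the minimax risk over $\mathcal{I}_\phi^p$ is strictly positive for every fixed $n$ (it contains, e.g., $0$ and $\ve_0\phi_1$ at $L^2$ distance $\ve_0>0$, so Le Cam's two-point bound applies).

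I do not expect a genuine obstacle in this theorem: the work is essentially bookkeeping, and the real content has already been put into Lemma \ref{lem:sp1}. The only point requiring care is that a single choice of $\ve$ must simultaneously satisfy the covering constraint through the upper bound on $V$ — which carries the extra $\log(1/\ve)$ factor — and leave the packing constraint slack enough, via the lower bound on $M$, that $\delta$ attains the stated order. It is precisely the logarithmic gap between the upper and lower entropy estimates in Lemma \ref{lem:sp1} that forces the $(\log n)^{-4\alpha^2/(2\alpha+1)}$ loss in the final rate, so sharpening that lemma would be the only route to a cleaner bound.
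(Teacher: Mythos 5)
Your proposal is correct and follows essentially the same route as the paper's proof: fix $\ve\asymp(\log n/n)^{\alpha/(2\alpha+1)}$ to satisfy the covering condition via the upper entropy bound of Lemma \ref{lem:sp1}, use the maximal-packing-is-a-covering observation to get $M(\delta)\ge V(\delta)$, choose $\delta\asymp n^{-\alpha/(2\alpha+1)}(\log n)^{-2\alpha^2/(2\alpha+1)}$ to meet the packing condition, and invoke Theorem \ref{yang_barron}. The only cosmetic difference is that you absorb small $n$ by shrinking the constant (justified by a two-point bound), whereas the paper tunes the constants $c$ and $C'$ so the inequalities hold directly for all $n\ge2$.
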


\begin{proof}
	In this proof, we write the $\ve$-entropies of $\mathcal{I}_\phi^p$ simply as
	$V(\ve)$ and $M(\ve)$.
	
	First,
	let $\displaystyle\ve_n:=c\left(\frac{\log n}n\right)^{\frac{\alpha}{2\alpha+1}}$
	for some constant $c>0$.
	Then by Lemma \ref{lem:sp1}, we have
	\begin{align*}
		V(\ve_n)
		\le C_\mathrm{up}c^{-1/\alpha}\left(\frac{\log n}n\right)^{-\frac1{2\alpha+1}}
		\left(1+\frac{\alpha}{2\alpha+1}(\log n-\log\log n)\right)
		\le
		c_\mathrm{up}c^{-1/\alpha}n^{\frac1{2\alpha+1}}(\log n)^{\frac{2\alpha}{2\alpha+1}},
	\end{align*}
	where $c_\mathrm{up}>0$ is some constant independent of $\ve_n$, and we have used $n\ge 2$.
	Thus we have
	\begin{align}
		\frac{V(\ve_n)}{n\ve_n^2}
		\le
		c_\mathrm{up}c^{-2-1/\alpha}\le\frac1{2\sigma^2}
		\label{eq:sp4}
	\end{align}
	for a sufficiently large $c$.
	
	Second, notice that $M(\ve)\ge V(\ve)$ holds.
	Indeed, given a maximal $\ve$-packing of $\mathcal{I}_\phi^p$,
	the maximality implies that the set also satisfies the condition for being an $\ve$-covering.
	Now, let $\delta_n:=C'n^{-\frac{\alpha}{2\alpha+1}}(\log n)^{-\frac{2\alpha^2}{2\alpha+1}}$
	for some constant $C'>0$.
	Then we have, by Lemma \ref{lem:sp1},
	\begin{align}
		M(\delta_n)
		\ge V(\delta_n)
		\ge C_\mathrm{low}C'^{-1/\alpha}n^{\frac1{2\alpha+1}}
		(\log n)^{\frac{2\alpha}{2\alpha+1}}
		\ge C'^{-1/\alpha}c_\mathrm{low}\left(\frac{2n\ve_n^2}{\sigma^2}+2\log2\right)
		\label{eq:sp5}
	\end{align}
	for some constant $c_\mathrm{low}>0$ independent of $C'$,
	where we have used $n\ge 2$.
	
	By (\ref{eq:sp4}), (\ref{eq:sp5}), and Theorem \ref{yang_barron},
	for a sufficiently small $C'$,
	we have
	\[
		\inf_{\hat{f}}\sup_{f\in\mathcal{I}_\phi^p}\mathrm{E}
		\left[ \|f-\hat{f}\|_{L^2}^2 \right]\ge
		\frac18C'^2 n^{-\frac{2\alpha}{2\alpha+1}}(\log n)^{-\frac{4\alpha^2}{2\alpha+1}},
	\]
	and the proof is complete.
\end{proof}

\begin{rem}
	This minimax lower bound is nearly tight, especially for the wavelet case treated in the following section.
	Indeed, deep learning (if necessary, with parameter sharing)
	achieves the rate $\mathrm{O}(n^\frac{2\alpha}{2\alpha+1}(\log n)^3)$,
	for a broad range of wavelets (Theorem \ref{thm:main_p>0}, \ref{p_share_own}).
\end{rem}


\subsection{Sparsity conditions for wavelet coefficients}\label{sec:sparse_wavelet}

In this subsection, we apply the argument in the previous subsection to orthogonal wavelets.
They already have a broad application area from engineering and physics to pure and applied mathematics,
especially in signal processing, numerical analysis, and so on
\cite{ingrid1992ten}.
Because wavelets are the mathematical model of localized patterns,
they are suitable for our motivation in setting function classses
(see Section \ref{sec:contribution} and Table \ref{table:rev1}).
For simplicity, we only consider the $1$-dimensional case in this section,
and the treatment of multi-dimensional cases is deferred to the appendix (Section \ref{multi-wave}).

\begin{dfn}
	Let $\psi:[0, 1]\to\R$ be a function with $\|\psi\|_{L^2}=1$.
	For such a function,
	we define, for integers $k, \ell$,
	\[
		\psi_{k, \ell}(x):=2^{k/2}\psi(2^kx-\ell),
		\quad
		k\ge0,\ 0\le\ell<2^k,
	\]
	where $\psi$ is treated as $0$ outside $[0, 1]$.
	Also, $\psi$ is called an {\it orthogonal wavelet}
	if $\psi$ satisfies
	\[
		\int_0^1\psi_{k, \ell}(x)\psi_{k', \ell'}(x)\dd x=0
	\]
	for all $(k, \ell)\ne(k',\ell')$.
\end{dfn}

\begin{dfn}\label{def:j}
	Given a $1$-dimensional orthonormal wavelet $\psi(x)$
	and constants $C_1, C_2, \beta>0$ and $0<p<2$, define
	\begin{align*}
		\mathcal{J}_\psi^p(C_1, C_2, \beta)
		:=\left\{
			\sum_{k\ge0,\, 0\le\ell<2^k}
			a_{k, \ell}\psi_{k,\ell}
			\,\middle|\,
			\|a\|_{w\ell^p}\le C_1,\ 
			\sum_{k\ge m}a_{k, \ell}^2\le C_22^{-\beta m},\ 
			m=0,1,\ldots
		\right\}.
	\end{align*}
\end{dfn}

For the difinition of multi-dimensional cases,
see Definition \ref{def:j-multi}.

\begin{rem}\label{inclusion}
	If we consider the lexical order on $(k, \ell)$'s,
	then $\mathcal{J}_\psi^p(C_1,C_2,\beta)$ is revealed to be $\beta$-minimally tail compact
	(see Definition \ref{p_tail_comp}).
	Thus, $\mathcal{I}_\phi^p\subset\mathcal{J}_\psi^p$ holds with some modification of constants
	and they have the same degree of sparsity.
\end{rem}

	In the following, we introduce the class $\mathcal{K}^p_\Psi$
	as an expansion of $\mathcal{J}^p_\psi$.
	Though the following definition is treating the $d$-dimensional case,
	considering only the case $d=1$ is sufficient for readers to understand the essential properties.
	For multi-dimensional cases, see also Section \ref{multi-wave}.

\begin{dfn}\label{def:k}
	Let $\Psi\subset L^2([0, 1]^d)$ consist of orthonormal wavelets.
	Then, for an integer $n_s>0$ and constants $C_1, C_2, C_3, \beta>0$ and $0<p<2$,
	define
	\begin{align*}
		\mathcal{K}_\Psi^p(n_s, C_1, C_2, C_3, \beta)
		:=\left\{
			\sum_{j=1}^{n_s}f_j(A_j\cdot -b_j)
			\,\middle|\,
			\begin{array}{c}
				A_j\in\R^{d\times d},\ b_j\in\R^d,\ 
				|\det A_j|^{-1}, \|A_j\|_\infty,
				\|b_j\|_\infty\le C_3,\\
				f_j\in\mathcal{J}^p_{\psi_j}(C_1, C_2, \beta),\ 
				\psi_j\in\Psi,\ j=1, \ldots, n_s
			\end{array}
		\right\}.
	\end{align*}
\end{dfn}

\begin{rem}
	By Remark \ref{inclusion} (and Remark \ref{inclusion-multi})
	the bound given in Theorem \ref{thm:p>0}
	is also the minimax lower bound for $\mathcal{K}_\Psi^p$.
	Moreover,
	$J_k(C)$, introduced in Section \ref{chap:3},
	is included in $K^p_\Psi$, with $k\le n_s$ and a specific $\Psi$ such as one
	containing the Haar wavelet.
	Thus, the bounds given in the previous sections are still applicable to this function class.
	Indeed, as described in Figure \ref{fig:rev2},
	we define $\mathcal{K}^p_\Psi$ as an expansion of $\mathcal{I}^0_\Psi$ as well.
\end{rem}

\subsection{Suboptimlaity of linear estimators on $\mathcal{J}_\psi^p$ and $\mathcal{K}_\psi^p$}

For a wavelet $\psi$ with compact support,
linear estimator shows suboptimality on $\mathcal{J}_\psi^p$ and $\mathcal{K}_\psi^p$,
as is shown in Theorem \ref{linear_not_efficient}.
This can be regarded as a stronger version of Corollary \ref{cor:jk}
and shows the non-efficiency of linear estimators in sparse classes.
The proof of this theorem is similar to that of Theorem 1 in \acite{zhang2002wavelet}
and is given in the appendix (Section \ref{ap-35}).

\begin{thm}\label{linear_not_efficient}
	Let $d=1$ and $\psi$ be a bounded and compactly supported wavelet.
	For any constants $C_1, C_2, \beta>0$ and $0<p<2$,
	there exists a constant $C$ dependent only on $C_1$, $C_2$, and $\beta$
	such that
	\[
		\inf_{\hat{f}:\mathrm{linear}}
		\sup_{f\in \mathcal{J}_\psi^p(C_1, C_2, \beta)}
		R(\hat{f}, f^\circ)
		\ge Cn^{-\frac\beta{1+\beta}}
	\]
	holds for each $n\ge1$.
	This bound also holds on $\mathcal{K}_\Psi^p$.
\end{thm}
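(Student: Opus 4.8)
The plan is to apply Theorem~\ref{thm:l2} together with a careful computation of the closed convex hull of $\mathcal{J}_\psi^p(C_1,C_2,\beta)$, reducing the linear lower bound to a minimax lower bound over a much larger (convex) function class, which in turn is handled by the information-theoretic machinery of Section~\ref{sec:2.2}. Concretely, by Theorem~\ref{thm:l2} the linear minimax risk over $\mathcal{J}_\psi^p$ equals the affine minimax risk over $\overline{\conv}(\mathcal{J}_\psi^p)$, and the affine minimax risk is at least the (ordinary) minimax risk over that closed convex hull minus trivialities; so it suffices to lower bound $\inf_{\hat f}\sup_{f^\circ\in\overline{\conv}(\mathcal{J}_\psi^p)}R(\hat f,f^\circ)$ by $Cn^{-\beta/(1+\beta)}$. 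The heuristic is that $\mathcal{J}_\psi^p$ contains, for each fixed resolution level $m$, functions supported on a single dyadic interval at that scale with wavelet coefficient of size $\sim 1$ (permissible since the weak-$\ell^p$ constraint only controls the sorted coefficients, and a single nonzero coefficient of bounded size is always allowed); convex combinations of $\sim 2^m$ such ``bumps'' at level $m$, each with weight $2^{-m}$, produce functions whose level-$m$ energy is $\sim 2^{-m}$, matching the $\beta$-tail-compactness budget only when $\beta$ is small — and the convex hull is therefore rich at each scale. The key point is that $\overline{\conv}(\mathcal{J}_\psi^p)$ contains an $L^2$-ball (of radius $\sim$ a constant times $2^{-\beta m/2}$, say) inside the span of the $2^m$ wavelets at level $m$, for every $m$.

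The next step is to extract from this a hypercube-type (Varshamov--Gilbert / Assouad) subfamily suitable for Theorem~\ref{yang_barron}, or more directly to compute covering and packing entropies of $\overline{\conv}(\mathcal{J}_\psi^p)$. Fixing a resolution $m$ and working in the $2^m$-dimensional coordinate subspace spanned by $\{\psi_{m,\ell}\}_{0\le\ell<2^m}$, the convex hull contains a Euclidean ball of radius $r_m\asymp 2^{-\beta m/2}$ (the energy our convex combinations can realize at that level). The packing $\ve$-entropy of a ball of radius $r_m$ in $2^m$ dimensions at scale $\ve=r_m/2$ is $\asymp 2^m$, and the covering entropy of the whole hull at scale $\ve$ should be controlled (this is where the boundedness and compact support of $\psi$ enter, giving uniform $L^\infty$ bounds and hence controlling the number of ``active'' parameters — one sums a geometric-type series over scales as in Lemma~\ref{lem:sp1}). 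Then one chooses $m=m_n$ so that $2^{m_n}\asymp n\ve_n^2$ with $\ve_n\asymp 2^{-\beta m_n/2}$; solving $2^{m}\asymp n\,2^{-\beta m}$ gives $2^{m_n}\asymp n^{1/(1+\beta)}$ and $\ve_n^2\asymp 2^{-\beta m_n}\asymp n^{-\beta/(1+\beta)}$, which is exactly the claimed rate. Feeding $V(\ve_n)\le n\ve_n^2/(2\sigma^2)$ and $M(\delta_n)\ge 2n\ve_n^2/\sigma^2+2\log2$ with $\delta_n\asymp\ve_n$ into Theorem~\ref{yang_barron} yields the lower bound $\inf_{\hat f}\sup R\ge\delta_n^2/8\gtrsim n^{-\beta/(1+\beta)}$.

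For the extension to $\mathcal{K}_\Psi^p$, note that $\mathcal{K}_\Psi^p\supseteq\mathcal{J}_\psi^p$ (take $n_s$ copies with $A_j=I$, $b_j=0$, and all but one summand zero, or simply observe $\mathcal{J}_\psi^p$ embeds by the remark following Definition~\ref{def:k}), so the larger class has an at-least-as-large convex hull and the same lower bound transfers immediately. The affine-versus-minimax gap is a non-issue: for a bounded convex class the additive term $\phi(x;X^n)$ in an affine estimator can be absorbed, and in any case one invokes the closed-convex-hull version of Theorem~\ref{thm:l2} proved in the appendix (Section~\ref{convproof}); alternatively one simply notes that an affine estimator applied to the constant-zero function still incurs its irreducible bias, so the affine minimax risk over a symmetric convex body is bounded below by the ordinary one up to a constant.

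The main obstacle I anticipate is the precise verification that $\overline{\conv}(\mathcal{J}_\psi^p)$ genuinely contains a full-dimensional $L^2$-ball of the stated radius at each level $m$ while still respecting the $\beta$-tail-compactness constraint $\sum_{k\ge m}a_{k,\ell}^2\le C_2 2^{-\beta m}$ — one must check that convex combinations of single-coefficient generators do not violate this at any coarser or finer scale, which forces the generators at level $m$ to have their sole nonzero coefficient at level $m$ of size at most $\sqrt{C_2}\,2^{-\beta m/2}$ (not $\sim1$), thereby fixing $r_m\asymp 2^{-\beta m/2}$; getting these constants and the interplay with the weak-$\ell^p$ bound $C_1$ exactly right (and confirming the resulting ball is nontrivial for all large $m$) is the delicate part, though morally it is the same computation as in \acite{zhang2002wavelet} and in Lemma~\ref{lem:sp1}. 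A secondary technical point is handling the boundary/renormalization of wavelets near the edges of $[0,1]$ and the compact-support hypothesis, which ensures only $O(1)$ wavelets at each level are affected and does not change the rate.
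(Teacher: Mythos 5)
Your reduction breaks at its pivotal step: you replace the linear minimax risk over $\mathcal{J}_\psi^p$ by the \emph{ordinary} minimax risk over $\overline{\conv}(\mathcal{J}_\psi^p)$ and then need the hull to contain, at each level $m$, a full Euclidean ball of radius $\asymp 2^{-\beta m/2}$ in the $2^m$-dimensional span of $\{\psi_{m,\ell}\}_\ell$. That containment is false in the regime that matters. The level-$m$ $\ell^1$-norm of the wavelet coefficients is a convex functional, so its supremum over the convex hull equals its supremum over $\mathcal{J}_\psi^p$ itself; combining the weak-$\ell^p$ bound with the tail constraint $\sum_{k\ge m}a_{k,\ell}^2\le C_2 2^{-\beta m}$ gives (with $\alpha=1/p-1/2$; for $p<1$ by splitting off the $N\asymp(C_1/R)^{1/\alpha}$ largest entries, for $1\le p<2$ by bounding the support function in the direction $2^{-m/2}(1,\ldots,1)$) a bound of order $2^{-\frac{\beta m}{2}\left(1-\frac1{2\alpha}\right)}$, respectively $C_1 2^{-\alpha m}$, whereas a near-flat vector of $\ell^2$-norm $c\,2^{-\beta m/2}$ has $\ell^1$-norm $c\,2^{(1-\beta)m/2}$. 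Whenever $\beta<2\alpha$ --- exactly the interesting regime, since otherwise $n^{-\beta/(1+\beta)}$ is no slower than the nonlinear minimax rate $n^{-\frac{2\alpha}{2\alpha+1}}$ --- the flat vectors are excluded for large $m$, so at level $m$ the hull contains only an $\ell^1$-type body of radius $\asymp 2^{-\beta m/2}$, not a Euclidean ball; its packing entropy at a scale comparable to its radius is $\asymp m$, not $2^m$, and feeding what the hull actually contains into Theorem \ref{yang_barron} yields only a bound of order $n^{-1/2}$ (up to logs). More fundamentally, the ordinary minimax risk over $\overline{\conv}(\mathcal{J}_\psi^p)$ is itself of order roughly $\max\{n^{-1/2},\,n^{-\frac{2\alpha}{2\alpha+1}}\}$ up to logarithms in this regime, which is strictly faster than $n^{-\beta/(1+\beta)}$ when $\beta<1$; hence \emph{no} argument of the form ``linear risk over $\mathcal{J}_\psi^p$ $\ge$ ordinary minimax over its convex hull'' can prove the theorem. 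Theorem \ref{thm:l2} only equates affine risks on the class and its hull; the point of this theorem is that linear estimators are suboptimal even on the convex hull, so dropping from affine to arbitrary estimators loses the statement.

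The paper's proof (Section \ref{ap-35}) therefore does not go through Theorem \ref{thm:l2} at all but exploits the linear form directly, as in Zhang et al.'s argument. Writing $\hat f=\sum_i Y_i\phi_i(\cdot;X^n)$, the risk splits into a bias term plus the variance $\sigma^2\sum_i\mathrm{E}\bigl[\|\phi_i(\cdot;X^n)\|_{L^2}^2\bigr]$; with $m\asymp n^{1/(1+\beta)}$ dyadic intervals, a pigeonhole step produces one interval $[k/m,(k+1)/m)$ on which the localized variance is at most $R^*/(\sigma^2 m)$, and the adversarial function is a single fine-scale bump $f^\circ=F m^{-\beta/2}m^{1/2}\psi(m\cdot-k)$ with $F=\min\{C_1,\sqrt{C_2}\}$, which belongs to $\mathcal{J}_\psi^p$ and has $\|f^\circ\|_{L^2}\asymp m^{-\beta/2}$. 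Since with probability at least $1-2^{-n^{1-\gamma}}$ only $O(n/m)$ design points fall near its support (Lemma \ref{lem:kantan}), Cauchy--Schwarz shows the fitted part $\sum_i f^\circ(X_i)\phi_i$ has small $L^2$-mass on that interval unless $R^*$ is already large, forcing $R^*\gtrsim m^{-\beta}\asymp n^{-\beta/(1+\beta)}$; the extension to $\mathcal{K}_\Psi^p$ is then the easy inclusion you noted. To salvage your plan you would need a linear-estimator-specific bias--variance and localization argument of this kind, not an entropy computation over the convex hull.
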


\begin{rem}\label{rem:nolinear}
	From this result,
	we see that the minimax-optimal rate
	for linear estimators can be {\it arbitrarily slow} even with the same 
	sparsity $p$, i.e., with a bounded covering entropy (by Lemma \ref{lem:sp1}).
	The nearly optimal rates attained
	by deep learning given in Section \ref{chap:5}
	are unfortunately limited to the case $\beta>1$ (because of the assumption of boundedness),
	but this still serves as evidence for the non-effectiveness of linear methods
	in estimating sparse classes.
\end{rem}

\section{Learning ability of deep ReLU neural networks}\label{chap:5}


\subsection{Mathematical formulation of deep ReLU neural networks}

For mathematical treatments of neural networks,
we have referenced some recent papers on approximation theory
and estimation theory 
\cite{suzuki2018adaptivity,schmidt2017nonparametric,yarotsky2017error,bolcskei2017optimal,keiper2017dgd}.
In the following,
we define neural networks mathematically
and evaluate their covering entropies.

\begin{dfn}
	Let $\rho:\R\to\R$.
	For $L, S, D\in\Z_{>0}$ and $B\ge1$ (with $D\ge d$), define
	$\N(L, S, D, B)$ as the set of all functions $f:\R^d\to\R$ of the form
	\[
		f=W_{L+1}\circ\rho(W_L\cdot-v_L)\circ\cdots\circ\rho(W_1\cdot-v_1),
	\]
	satisfying
	\begin{align*}
		W_1\in\R^{D\times d},\  W_2,\ldots,W_L\in\R^{D\times D},\ 
		W_{L+1}\in\R^{1\times D},\ 
		v_1\in\R^d,\ v_2,\ldots, v_L\in\R^D
	\end{align*}
	and
	\begin{align*}
		\|v_i\|_\infty,\|W_i\|_\infty\le B,\ 
		\sum_{i=1}^{L+1}\|W_i\|_0+\sum_{i=1}^L\|v_i\|_0\le S,
	\end{align*}
	where
	$\rho$ is operated elementwise,
	and 
	$L$, $S$, and $D$ denote the number of hidden layers,
	the sparsity, and the dimensionality of the layers, respectively.
	Also, for $F>0$, we consider a function class
	\begin{align*}
		\N_F=\N_F(L, S, D, B)
		:=
		\{
			\sgn(f)\min\{|f|, F\}
			\mid
			f\in \N(L, S, D, B)
		\}.
	\end{align*}
\end{dfn}

\subsection{Complexity of neural network models}

Hereinafter, we use the ReLU activation function $\rho(x)=\max\{x, 0\}$.
Notice that $\N_F$ can be realized easily using ReLU activation
after an element of $\N$ is computed.
To evaluate the generalization error of deep learning,
we need bounds for the complexity of neural network models.
The following lemma gives the required bound as a function of parameters $L, S, D, B$.

\begin{lem}{\rm\cite{schmidt2017nonparametric,suzuki2018adaptivity}}\label{cov_eval}
	For any $0<\delta<1$,
	the $\delta$-covering entropy with respect to $\|\cdot\|_{L^\infty}$ of $\N(L,S,D,B)$
	(limiting the domain to $[0, 1]^d$)
	can be bounded as
	\[
		V_{(\N(L, S, D, B), \|\cdot\|_{L^\infty})}(\delta)
		\le 2S(L+1)\log\left(\frac{B(L+1)(D+1)}\delta\right).
	\]
\end{lem}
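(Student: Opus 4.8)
The plan is to bound the covering entropy by a standard counting argument over the discrete set of "activated" parameter configurations, combined with a Lipschitz-type perturbation estimate for how the network output changes when the weights are perturbed. First I would reduce to counting: a network in $\N(L, S, D, B)$ is determined by a choice of which at most $S$ among the $\sum_i \|W_i\|_0 + \sum_i \|v_i\|_0$ possible entries are nonzero, together with the values of those entries, all of which lie in $[-B, B]$. The positions can be encoded by choosing, for each nonzero entry, one of at most $(L+1)(D+1)^2$ slots (a crude bound on the total number of matrix/vector coordinates across all layers), so the number of sparsity patterns is at most $((L+1)(D+1)^2)^S$, and $\log$ of this is $\le 2S(L+1)\log((L+1)(D+1))$ up to the handling of the value discretization.

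Next I would discretize the nonzero values on a grid of width $\eta$ inside $[-B, B]$, giving at most $(2B/\eta)^S$ choices, and then show that rounding each weight to the grid changes $f$ uniformly on $[0,1]^d$ by at most $\delta$ for a suitable $\eta$ comparable to $\delta$ divided by a polynomial factor in $B, L, D$. The key perturbation lemma is that for ReLU networks with weights bounded by $B$ and width $D$, the map from weights to the function (in $\|\cdot\|_{L^\infty}$ on $[0,1]^d$) is Lipschitz with constant polynomial in $B, D$ and exponential only in $L$ — concretely, one proves by induction on the layers that the output of the $j$-th layer is bounded by $(BD)^j$ on the unit cube and that a perturbation of size $\eta$ in the weights of layers $1$ through $j$ produces a change of at most $j \eta (BD)^{j}$ or so in the $j$-th layer output, using that ReLU is $1$-Lipschitz and that $\|Wx - v\|$ is controlled by $\|W\|_\infty, \|v\|_\infty$ and $\|x\|$. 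Choosing $\eta$ so that this bound is $\le \delta$, i.e. $\eta \approx \delta / (L (BD)^{L+1})$, and taking $\log$, the $\log(1/\eta)$ term contributes the $\log(B(L+1)(D+1)/\delta)$ factor (after absorbing the $(BD)^{L}$ into the logarithm, which turns into $L \log(BD)$, matched by the $2S(L+1)$ prefactor).

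Combining: the total number of $\delta$-covering centers is at most (number of patterns) $\times$ (number of value-grids) $= ((L+1)(D+1)^2)^S \cdot (2B/\eta)^S$, and taking logarithms and collecting the polynomial-in-$(B,L,D)$ factors inside one logarithm yields the stated bound $2S(L+1)\log(B(L+1)(D+1)/\delta)$. The main obstacle I expect is getting the perturbation/Lipschitz estimate clean enough that the exponential-in-$L$ factor lands precisely inside the logarithm with the claimed constants rather than producing an extra additive $L^2$ or a worse constant; this requires being careful with the inductive bookkeeping of layerwise output norms and perturbation propagation, and with the truncation to $\N_F$ (which only helps, since clipping is $1$-Lipschitz). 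Since the statement is quoted from \acite{schmidt2017nonparametric} and \acite{suzuki2018adaptivity}, I would either cite their argument directly or reproduce this counting-plus-perturbation scheme in the appendix.
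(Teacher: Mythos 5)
Your proposal is correct and follows essentially the same route as the paper's appendix proof: a layerwise perturbation (Lipschitz-in-weights) estimate of the form $\varepsilon(L+1)B^{L}(D+1)^{L+1}$, combined with counting sparsity patterns and discretizing the nonzero weights on a grid of width $\varepsilon\sim\delta/((L+1)B^{L}(D+1)^{L+1})$, then collecting everything inside the logarithm. The bookkeeping you flag as the main obstacle does work out to the stated constant $2S(L+1)$, exactly as in the paper.
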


The proof for this lemma is given in the appendix (Section \ref{ap-4}).
We next introduce a lemma which evaluates the approximation ability
of a neural network given the approximation ability of subnetworks.
This lemma is also stated in \acite{bolcskei2017optimal} in another form.
Thanks to this lemma, we can only consider the approximation of the wavelet basis,
where we can exploit many existing studies.

\begin{lem}\label{app_eval}
	Let $\Phi\subset L^2([0, 1]^d)$ satisfy $\|\phi\|_{L^2}=1$ for each $\phi\in\Phi$.
	Suppose for any $\phi\in\Phi$ there exists a function $g\in\N(L, S, D, B)$ such that
	$\|g-\phi\|_{L^2}\le\ve$.
	Then for any $f^\circ\in\mathcal{I}^0_\Phi(n_s, C)$,
	there exists a function
	\[
		f\in\N(L+2, n_s(S+2Dd+d^2+d+1), n_sD, \max\{B, C\})
	\]
	such that
	$\|f-f^\circ\|_{L^2}\le C^{3/2}n_s\ve$
	holds.
\end{lem}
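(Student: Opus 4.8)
The plan is to build the approximating network $f$ explicitly by concatenating approximators for each summand $c_i\phi_i(A_i\cdot - b_i)$ and then adding the results. First I would handle a single term: given $\phi_i\in\Phi$, the hypothesis provides $g_i\in\N(L,S,D,B)$ with $\|g_i-\phi_i\|_{L^2}\le\ve$. The affine input map $x\mapsto A_ix-b_i$ can be absorbed into the first weight matrix $W_1$ of $g_i$ at the cost of enlarging entries to at most $\max\{B,C\}$ (since $\|A_i\|_\infty,\|b_i\|_\infty\le C$) and adding at most $Dd+D$ nonzero parameters; scalar multiplication by $c_i$ with $|c_i|\le C$ is folded into the output weight $W_{L+1}$, again within the bound $\max\{B,C\}$. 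So $x\mapsto c_i g_i(A_ix-b_i)$ lies in a network of depth $L$, width $D$, and controlled sparsity.

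Next I would estimate the $L^2$ error of this single term. Writing $\phi_i^{A,b}(x):=\phi_i(A_ix-b_i)$, a change of variables gives $\|g_i(A_i\cdot-b_i)-\phi_i^{A,b}\|_{L^2([0,1]^d)}^2 \le |\det A_i|^{-1}\|g_i-\phi_i\|_{L^2}^2 \le C\ve^2$ (using the $|\det A_i|^{-1}\le C$ bound and that the image of $[0,1]^d$ under the affine map is contained in a set on which we may control the integral — this is a routine point I would not belabor). Multiplying by $|c_i|\le C$ yields an error of at most $C\cdot\sqrt{C}\,\ve = C^{3/2}\ve$ per term. Summing over $i=1,\dots,n_s$ and using the triangle inequality in $L^2$ gives the claimed bound $\|f-f^\circ\|_{L^2}\le C^{3/2}n_s\ve$.

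Then I would assemble the full network by parallelizing the $n_s$ subnetworks: run them on disjoint sets of neurons (width becomes $n_sD$), which requires two extra layers — one to duplicate/route the $d$-dimensional input into the $n_s$ parallel branches (the first new layer, using the identity $x=\rho(x)-\rho(-x)$ with ReLU, costing roughly $d$ extra depth-overhead but packaged as a single layer with $\le n_s(2Dd+d^2+d)$-type parameter count to realize all branch input maps) and one final layer to sum the $n_s$ scalar outputs. This accounts for the depth $L+2$, the width $n_sD$, and the sparsity budget $n_s(S+2Dd+d^2+d+1)$ in the statement, with weight bound $\max\{B,C\}$. I would tally the parameters carefully: each branch contributes its $\le S$ internal parameters plus the affine-input parameters ($\le Dd+D$, but written in the looser form $2Dd+d^2+d+1$ to also cover the input-routing layer and the scalar combine), giving $n_s(S+2Dd+d^2+d+1)$ total.

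The main obstacle I anticipate is the bookkeeping for the parameter count and the layer-merging, not any deep idea: one must check that composing the affine map with $W_1$ does not blow up sparsity beyond the stated budget, that the input-duplication layer can indeed be implemented with ReLU within the allotted sparsity, and that "stacking in parallel" genuinely keeps each branch's weight magnitudes bounded by $\max\{B,C\}$ (so that, e.g., absorbing $c_i$ into $W_{L+1}$ does not interact badly with the final summation layer — one may need the summation weights to be exactly $1$, which is fine). The $L^2$-error change-of-variables step also needs a small amount of care because $\phi_i$ is only defined on $[0,1]^d$ and is extended by $0$ outside, so the region where $A_i\cdot-b_i$ leaves $[0,1]^d$ must be argued to contribute nothing (or to be handled by the truncation $\N_F$, though the statement is phrased for $\N$, so I would just use that $\phi_i\equiv 0$ there and that $g_i$ approximates it in $L^2$ over the relevant pullback region). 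None of these is conceptually hard; the work is in making the constants and the architecture parameters line up exactly with the displayed formula.
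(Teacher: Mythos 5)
Your proposal is correct and follows essentially the same route as the paper's proof: build the $n_s$ approximating subnetworks in parallel (width $n_sD$, two additional layers for the affine input maps and the final signed combination), and bound the error by the triangle inequality together with the change-of-variables factor $|\det A_i|^{-1/2}\le C^{1/2}$ and $|c_i|\le C$, giving $C^{3/2}n_s\ve$ exactly as in the paper. One small correction: the affine map $x\mapsto A_ix-b_i$ should be realized in the new first layer (where its weights are bounded by $C\le\max\{B,C\}$, which is also what the $d^2+d$ term in the sparsity budget reflects), not absorbed into $W_1$ as you first suggest, since entries of $W_1A_i$ can be as large as $dBC$ and would violate the stated weight bound $\max\{B,C\}$.
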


\begin{proof}
	The approximation of
	$f^\circ(x)=\sum_{i=1}^{n_s}c_i\phi_i(A_ix-b_i)\in\mathcal{I}_\Phi^0(n_s, C)$
	can be constructed as shown in Fig. \ref{fig:nn1}
	(we use the ReLU activation function, and so we compute
	$\max\{\tilde{\phi}_i, 0\}$ and $\max\{-\tilde{\phi}_i, 0\}$
	and combine them afterward),
	where each $\tilde{\phi}_i$ approximates $\phi_i$ with an $L^2$-error of at least $\ve$.
	In this construction,
	\begin{align*}
		\|\tilde{f}-f^\circ\|_{L^2}
		\le\sum_{i=1}^{n_s}|c_i|
		\|\phi_i(A_i\cdot-b_i)-\tilde{\phi}_i(A_i\cdot-b_i)\|_{L^2}
		\le \sum_{i=1}^{n_s}C|\det A_i|^{-1/2}\|\tilde{\phi}_i-\phi_i\|_{L^2}
		\le C^{3/2}n_s\ve
	\end{align*}
	holds.
\end{proof}

\begin{rem}\label{rem:def_ap}
	These lemmas evaluate the covering entropy and approximation ability of neural networks.
	Given these two informations,
	we can exploit Theorem \ref{gen_eval} to estimate the generalization error of deep learning by a fixed size of neural network.
	However, we should know how large the parameters $L, S, D, B$ get as $\ve$ goes to zero (or the sample size $n$ goes to infinity).
	As one can easily see, a shallow ReLU network can approximate piece-wise constant functions easily
	(more precisely, see e.g., Fig. \ref{fig:nn2} and the proof of Corollary \ref{cor:trivial}).
	Moreover, deep ReLU networks can approximate polynomials very efficiently \cite{yarotsky2017error}.
	Combining these, for example, we can state that
		\begin{quote}
			there exists an architecture $\N(L, S, D, B)$
			with $L, S, D = \mathrm{O}(\log(1/\ve))$ and $B=\mathrm{O}(1/\ve)$
			that can approximate piece-wise polynomials with $\mathrm{O}(\ve)$ $L^2$-error.
		\end{quote}
	Rigorously speaking, we of course have to restrict the function class (the number of non-smooth points, the degree of polynomials,
	the magnitude of coefficients, etc).
	However, as the main concern of this paper is statistical viewpoints,
	we do not get deeper.
	The important thing is that the above size of network architecture is reasonable
	to approximate ``cheap" functions.
\end{rem}

Motivated by the above remark,
we define the function class $\mathrm{AP}$,
which can be approximated by ``light'' networks, as follows.

\begin{dfn}
	For $C_1, C_2>0$, define $\mathrm{AP}(C_1, C_2)$ as 
	the set of all functions $\phi\in L^2([0, 1]^d)$ satisfying that,
	for each $0<\ve<1/2$,
	there exist parameters $L_\ve, S_\ve, D_\ve, B_\ve>0$ such that 
	\begin{itemize}
		\item
			$L_\ve, S_\ve, D_\ve\le C_1\log(1/\ve)$ and $B_\ve\le C_2/\ve$ hold;
		\item
			there exists a $\tilde\phi \in\N(L_\ve, S_\ve, D_\ve, B_\ve)$
			such that $\|\tilde\phi - \phi\|_{L^2}\le\ve$.
	\end{itemize}
\end{dfn}


\subsection{Generalization ability for an extreme case ($\mathcal{I}_\Phi^0$)}\label{revsec1}

\begin{figure}
	\begin{minipage}{0.49\hsize}
		\centering
		\includegraphics[width=0.98\hsize]{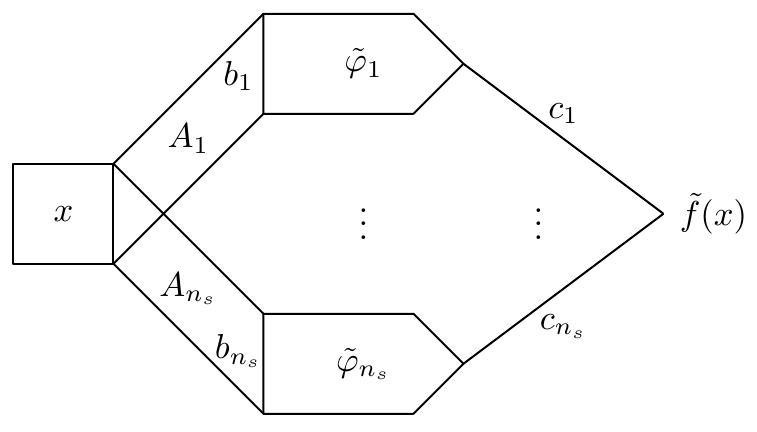}
		\caption{Construction of a larger neural network}
		\label{fig:nn1}
	\end{minipage}
	\begin{minipage}{0.49\hsize}
		\centering
		\includegraphics[width=0.8\hsize]{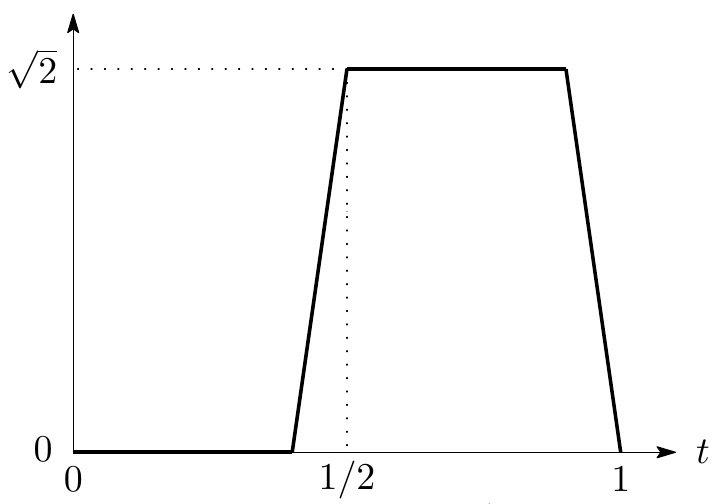}
		\vspace{1mm}
		\caption{An approximation of $\sqrt{2}\cdot1_{[1/2, 1]}$}
		\label{fig:nn2}
	\end{minipage}
\end{figure}

The learning ability of neural networks over $\mathcal{I}_\Phi^0$ is shown in the following.
This is the most extreme case
in terms of the difference between the performance of deep learning and linear methods.

\begin{thm}\label{thm:first_order}
	Let $\Phi\subset L^2([0, 1]^d)$
	satisfy $\|\phi\|_{L^2}=1$
	for each $\phi\in L^2([0, 1]^d)$ and $\sup_{\phi\in\Phi}\|\phi\|_{L^\infty}<\infty$.
	Suppose also $\Phi\subset \mathrm{AP}(C_1, C_2)$ holds for some constants $C_1, C_2>0$.
	Then, for each $n_s, C>0$,
	there exist constants $F, C_3>0$ dependent only on $n_s, C$
	(independent of $n$) such that the empirical risk minimizer $\hat{f}$ over $\N_F^{(n)}$
	satisfies
	\[
		\sup_{f^\circ\in\mathcal{I}_\Phi^0(n_s, C)}R(\hat{f}, f^\circ)\le C_3\frac{(\log n)^3}n
	\]
	for $n\ge2$, where $\N_F^{(n)}$ denotes
	\begin{align*}
		\N_F\left(L_{1/n}+2,
		n_s(S_{1/n}+2D_{1/n}d+d^2+d+1), n_sD_{1/n}, \max\left\{B_{1/n}, C\right\}\right).
	\end{align*}
\end{thm}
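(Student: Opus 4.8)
The plan is to combine the approximation lemma (Lemma~\ref{app_eval}) with the complexity bound (Lemma~\ref{cov_eval}) and the generalization bound (Theorem~\ref{gen_eval}), optimizing the free parameter $\delta$ and tracking how the network size depends on $n$ through the choice $\ve = 1/n$. First I would fix $f^\circ \in \mathcal{I}_\Phi^0(n_s, C)$. Since $\Phi \subset \mathrm{AP}(C_1, C_2)$, for each $0 < \ve < 1/2$ every $\phi \in \Phi$ is approximated within $L^2$-error $\ve$ by a network in $\N(L_\ve, S_\ve, D_\ve, B_\ve)$ with $L_\ve, S_\ve, D_\ve \le C_1 \log(1/\ve)$ and $B_\ve \le C_2/\ve$. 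Applying Lemma~\ref{app_eval} then places some $f$ approximating $f^\circ$ with $\|f - f^\circ\|_{L^2} \le C^{3/2} n_s \ve$ inside the enlarged architecture $\N(L_\ve + 2,\, n_s(S_\ve + 2D_\ve d + d^2 + d + 1),\, n_s D_\ve,\, \max\{B_\ve, C\})$. Taking $\ve = 1/n$, this is exactly the architecture $\N_F^{(n)}$ described in the statement (before truncation), and its parameters satisfy $L, D = \mathrm{O}(\log n)$, $S = \mathrm{O}(\log n)$, $B = \mathrm{O}(n)$, with the implied constants depending only on $n_s, C, C_1, C_2, d$.

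Next I would set the truncation level $F$. Since $\sup_{\phi \in \Phi}\|\phi\|_{L^\infty} =: F_0 < \infty$ and an element of $\mathcal{I}_\Phi^0(n_s, C)$ is a sum of $n_s$ terms each of the form $c_i \phi_i(A_i \cdot - b_i)$ with $|c_i| \le C$, every $f^\circ \in \mathcal{I}_\Phi^0(n_s, C)$ satisfies $\|f^\circ\|_{L^\infty} \le n_s C F_0$; so choosing $F := n_s C F_0$ guarantees the truncation $\N_F$ does not increase the approximation error (truncating at a level above $\|f^\circ\|_{L^\infty}$ can only bring the approximant closer in $L^2$). Then $\inf_{f \in \N_F^{(n)}} \|f - f^\circ\|_{L^2}^2 \le C^3 n_s^2 / n^2 = \mathrm{O}(n^{-2})$, which is negligible compared to the target rate $n^{-1}(\log n)^3$.

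Then I would plug into Theorem~\ref{gen_eval} with $\F = \N_F^{(n)}$. By Lemma~\ref{cov_eval}, $V_{(\N_F^{(n)}, \|\cdot\|_{L^\infty})}(\delta) \le V_{(\N(L,S,D,B), \|\cdot\|_{L^\infty})}(\delta) \le 2S(L+1)\log\!\big(B(L+1)(D+1)/\delta\big)$ (truncation is $1$-Lipschitz so it does not increase the covering entropy). With $L, D, S = \mathrm{O}(\log n)$ and $B = \mathrm{O}(n)$, this is $\mathrm{O}\big((\log n)^2 \log(n/\delta)\big)$. Choosing $\delta = 1/n$ makes the residual term $(F + \sigma)\delta = \mathrm{O}(1/n)$ and gives $V = \mathrm{O}\big((\log n)^3\big)$, whence the bound of Theorem~\ref{gen_eval} reads
\[
	R(\hat{f}, f^\circ) \le 4\,\mathrm{O}(n^{-2}) + C\left(\frac{(F^2 + \sigma^2)\,\mathrm{O}((\log n)^3)}{n} + (F + \sigma)\,\mathrm{O}(n^{-1})\right) = \mathrm{O}\!\left(\frac{(\log n)^3}{n}\right),
\]
uniformly over $f^\circ \in \mathcal{I}_\Phi^0(n_s, C)$, which is the claim; absorbing everything into a single constant $C_3$ depending only on $n_s, C$ (and the fixed $\sigma, d, C_1, C_2, F_0$) finishes the argument.

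The main obstacle, such as it is, is bookkeeping rather than conceptual: I need to check that Lemma~\ref{app_eval}'s hypotheses genuinely apply — in particular that a single architecture $\N(L_\ve, S_\ve, D_\ve, B_\ve)$ works simultaneously for all $\phi \in \Phi$ (which is exactly what $\Phi \subset \mathrm{AP}(C_1, C_2)$ provides, since the bounds $C_1 \log(1/\ve)$, $C_2/\ve$ are uniform in $\phi$) — and that the small technical condition $V_{(\F, \|\cdot\|_{L^\infty})}(\delta) \ge 1$ in Theorem~\ref{gen_eval} holds for the chosen $\delta = 1/n$ and large $n$, which is immediate since the network is nontrivial. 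One should also confirm the $\delta$ inside the log does not spoil the rate: $\log(B(L+1)(D+1)/\delta) = \log(\mathrm{O}(n \cdot \log n \cdot \log n \cdot n)) = \mathrm{O}(\log n)$, so $S(L+1) \cdot \mathrm{O}(\log n) = \mathrm{O}((\log n)^3)$ as needed. No step requires an idea beyond assembling the three cited results with the parameter choices $\ve = \delta = 1/n$.
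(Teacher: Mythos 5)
Your proposal is correct and follows essentially the same route as the paper's proof: truncation level $F=n_sC\sup_{\phi\in\Phi}\|\phi\|_{L^\infty}$, Lemma \ref{app_eval} with $\ve=1/n$ for the approximation term, Lemma \ref{cov_eval} (with the observation that truncation does not increase the entropy) giving $\mathrm{O}((\log n)^3)$ covering entropy at $\delta=1/n$, and Theorem \ref{gen_eval} to combine them. Your bookkeeping of the squared approximation error ($\mathrm{O}(n^{-2})$) is in fact slightly sharper than the bound the paper records, but the conclusion and constants-dependence are the same.
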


\begin{rem}\label{network_size_1}
	This theorem implies that
	the empirical risk minimizer over $\N(L, S, D, B)$ with size
	\[
		L, S, D=\mathrm{O}(\log n),
		\quad
		B=\mathrm{O}(n),
	\]
	almost attains the minimax-optimal rate.
	Therefore,
	from the viewpoint of real-world application,
	a reasonable size of network
	can behave optimally.
\end{rem}

\begin{proof}[Proof of Theorem \ref{thm:first_order}]
	Let $M:=\sup_{\phi\in\Phi}\|\phi\|_{L^\infty}$.
	If we define $F:=n_sCM$,
	each $f^\circ\in\mathcal{I}_\Phi^0(n_s, C)$ satisfies $\|f\|_{L^\infty}\le F$.
	Indeed, $f^\circ$ has some expression
	$f^\circ=\sum_{i=1}^{n_s}c_i\phi_i(A_i\cdot-b_i)$, and so we have
	\[
		\|f^\circ\|_{L^\infty}
		\le\sum_{i=1}^{n_s}|c_i|\|\phi_i\|_{L^\infty}\le n_sCM=F.
	\]
	Hence, for $f\in\N(L, S, D, B)$ and $f^\circ\in\mathcal{I}_\Phi^0(n_s, C)$,
	$\tilde{f}:=\sgn(f)\min\{|f|, F\}$ satisfies
	\begin{align*}
		\|\tilde{f}-f^\circ\|_{L^2}^2
		&=\int_{[0, 1]^d}(\tilde{f}(x)-f^\circ(x))^2\dd x \\
		&\le\int_{[0, 1]^d}(f(x)-f^\circ(x))^2 \dd x
		\tag{$\because$ $f^\circ(x)\in[-F, F]$}\\
		&=\|f-f^\circ\|_{L^2}^2.
	\end{align*}
	Also, notice that $\N_F$'s covering entropy is not greater than that of $\N$,
	and so we have, by Lemma \ref{cov_eval} and the assumption of the assertion,
	\[
		V_{(\N_F^{(n)},\|\cdot\|_{L^\infty})}\left(\frac1n\right)
		\le C_0(\log n)^3
	\]
	for some constant $C_0>0$.
	Then, by Theorem \ref{gen_eval} and Lemma \ref{app_eval},
	\begin{align*}
		\sup_{f^\circ\in\mathcal{I}_\Phi^0(n_s, C)}R(\hat{f}, f^\circ)
		\le \frac{4C^{3/2}n_s}n+C'\left(C_0(F^2+\sigma^2)
		\frac{(\log n)^3}n+\frac{F+\sigma}n\right)
		\le C_3\frac{(\log n)^3}n
	\end{align*}
	holds for some $C_3>0$.
\end{proof}

\begin{cor}\label{cor:trivial}
	Let $d=1$.
	For $J_k(C)$ in Definition \ref{def:jk},
	there exist a constant $F>0$ and a sequence of neural networks $(\N^{(n)})_{n=2}^\infty$
	such that the empirical risk minimizer $\hat{f}$ satisfies
	\[
		\sup_{f^\circ\in J_k(C)}R(\hat{f}, f^\circ)
		\le C_3\frac{(\log n)^3}n
	\]
	for some constant $C_3>0$ independent of $n$ and each $n\ge2$. 
\end{cor}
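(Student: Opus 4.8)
The plan is to deduce Corollary \ref{cor:trivial} directly from Theorem \ref{thm:first_order} by realizing $J_k(C)$ as a subset of one of the $\ell^0$-bounded affine classes $\mathcal{I}^0_\Phi$. First I would take the single base function $\phi:=1_{[0,1]}$, extended by zero outside $[0,1]$; it satisfies $\|\phi\|_{L^2}=1$, and on the domain $[0,1]$ it is simply the constant function $1$. Set $\Phi:=\{\phi\}$. Since $\phi|_{[0,1]}$ is constant, a ReLU network of size $\mathrm{O}(1)$ reproduces it \emph{exactly}, so $\phi\in\mathrm{AP}(C_1,C_2)$ for suitable absolute constants $C_1,C_2$; moreover $\sup_{\phi\in\Phi}\|\phi\|_{L^\infty}=1<\infty$. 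Hence $\Phi$ meets all hypotheses of Theorem \ref{thm:first_order}.

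The core step is the inclusion $J_k(C)\subseteq\mathcal{I}^0_\Phi(k+1,C')$ with $C':=\max\{C,1\}$. Given $f=a_0+\sum_{i=1}^k a_i 1_{[t_i,1]}\in J_k(C)$, I would observe that for any $t\in(0,1]$ one has $\phi(x-t)=1_{[0,1]}(x-t)=1_{[t,\,t+1]}(x)$, which restricted to $[0,1]$ coincides with $1_{[t,1]}$ (the endpoint case $t=1$ differing only on a null set, hence irrelevant in $L^2$); and $a_0=a_0\,\phi(x-0)$ on $[0,1]$. Thus $f=\sum_{j=1}^{k+1}c_j\,\phi(A_j\cdot-b_j)$ with every $A_j=1$ (so $|\det A_j|^{-1}=\|A_j\|_\infty=1\le C'$), every $b_j\in\{0,t_1,\ldots,t_k\}\subset[0,1]$ (so $\|b_j\|_\infty\le1\le C'$), and $|c_1|=|a_0|\le C\le C'$, $|c_{i+1}|=|a_i|\le\sum_{i=1}^k|a_i|\le C\le C'$. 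This is precisely the membership condition of Definition \ref{dfn:p=0}, so $J_k(C)\subseteq\mathcal{I}^0_\Phi(k+1,C')$.

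Finally I would apply Theorem \ref{thm:first_order} with $n_s=k+1$ and constant bound $C'$: it supplies $F>0$ and the sequence of architectures $\N_F^{(n)}$ (taking the role of $\N^{(n)}$) whose empirical risk minimizer $\hat f$ satisfies $\sup_{f^\circ\in\mathcal{I}^0_\Phi(k+1,C')}R(\hat f,f^\circ)\le C_3(\log n)^3/n$ for $n\ge2$, with $F,C_3$ depending only on $k$ and $C$. Restricting the supremum to the smaller set $J_k(C)$ gives the claimed bound. The only delicate point is the affine bookkeeping — in particular that jump locations arbitrarily close to the endpoints remain representable while \emph{both} $\|A_j\|_\infty$ and $|\det A_j|^{-1}$ stay bounded — and the choice $\phi=1_{[0,1]}$ (extended by zero) together with $A_j=1$ is exactly what makes this automatic; one could equally well use the base function $\sqrt2\cdot1_{[1/2,1]}$ of Figure \ref{fig:nn2}, at the cost of splitting each $1_{[t_i,1]}$ into at most two pieces and taking $n_s=2k+1$.
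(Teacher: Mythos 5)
There is a genuine gap, and it sits exactly at the step you treat as automatic: the claim that $\phi=1_{[0,1]}$ belongs to $\mathrm{AP}$ because ``a constant ReLU network reproduces it exactly on $[0,1]$.'' The approximation required by Theorem \ref{thm:first_order} is consumed inside Lemma \ref{app_eval}, where the subnetwork $\tilde\phi$ is evaluated at the affinely transformed argument $A_jx-b_j$; with your choice $A_j=1$, $b_j=t_j$ this argument ranges over $[-t_j,1-t_j]\not\subseteq[0,1]$. On $[-t_j,0)$ the true (zero-extended) $\phi$ vanishes while your witness $\tilde\phi\equiv1$ equals $1$, so $\|\phi(\cdot-t_j)-\tilde\phi(\cdot-t_j)\|_{L^2([0,1])}^2=t_j$, which is of order one. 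Concretely, the network assembled as in Fig.\ \ref{fig:nn1} from the constant subnetwork is just the constant $\sum_j c_j$, which does not approximate a generic element of $J_k(C)$ at all; the bias term $\inf_{f\in\N_F^{(n)}}\|f-f^\circ\|_{L^2}^2$ in Theorem \ref{gen_eval} then stays bounded away from zero (the architecture produced by your $\mathrm{AP}$ witness has constant size and constant weight bound, hence uniformly Lipschitz realizations, which cannot track a jump), and the $\tilde{\mathrm{O}}(n^{-1})$ bound fails. In other words, if the $\mathrm{AP}$ condition were read as $L^2([0,1]^d)$-approximation only, Theorem \ref{thm:first_order} itself would be false for your $\Phi$; the condition must be, and in the paper's usage is, approximation of the zero-extended $\phi$ on the whole region swept by the affine images. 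So you cannot invoke the theorem with the constant-$1$ witness.

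This is precisely why the paper's proof does not stop at the inclusion argument (your inclusion $J_k(C)\subseteq\mathcal{I}^0_\Phi(n_s,C')$ and the affine bookkeeping are fine and are implicitly part of the paper's argument too): its entire content is the explicit four-ReLU ``trapezoid''
approximation of $\sqrt2\cdot1_{[1/2,1]}$ in Fig.\ \ref{fig:nn2}, which ramps back down to zero near the right endpoint and vanishes identically outside a neighborhood of $[1/2,1]$, so that it stays $\ve$-close to the zero-extended basis function after any admissible translation, with $L,S,D=\mathrm{O}(\log(1/\ve))$ and weights compatible with the $\mathrm{AP}$ bounds. Your parenthetical fallback (take $\sqrt2\cdot1_{[1/2,1]}$ and split each $1_{[t_i,1]}$ into at most two translates, enlarging $n_s$) is essentially the paper's route, but it still requires exhibiting and size-checking that boundary-respecting approximator, which your proposal only points to via the figure rather than supplies. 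To repair the proof, replace the constant witness by such a compactly supported ReLU bump (for your $\phi=1_{[0,1]}$ one would need a bump that is $\approx1$ on $[\ve,1-\ve]$ and zero outside $[0,1]$, which again forces the trapezoid construction) and verify the $\mathrm{AP}$ parameter bounds before applying Theorem \ref{thm:first_order}.
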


\begin{proof}
	By Theorem \ref{thm:first_order},
	it suffices to show that
	$\phi=\sqrt{2}\cdot1_{[1/2, 1]}$ can be approximated within $\ve$-error in $L^2$
	by a neural network satisfying the condition of Theorem \ref{thm:first_order}.
	This can be actually realized by a shallow network,
	as
	\begin{align*}
		\frac{4\sqrt{2}}\ve\rho\left(t-\frac{1-\ve/2}{2}\right)
		-\frac{4\sqrt{2}}\ve\rho\left(t-\frac12\right)
		-\frac{4\sqrt{2}}\ve\rho\left(t-\left(1-\frac\ve2\right)\right)
		+\frac{4\sqrt{2}}\ve\rho(t-1)
	\end{align*}
	(Fig. \ref{fig:nn2})
	satisfies the desired condition.
\end{proof}

\begin{rem}
	By Corollary \ref{cor:jk}, Theorem \ref{thm:weak_lower_bound}, and Corollary \ref{cor:trivial},
	$J_k(C)$ demonstrates an extreme situation,
	wherein neural network learning attains the optimal rate up to $\log$ factors whereas
	linear methods are suboptimal.
\end{rem}

This result can easily be expanded to the case of $d=2$
(if we properly define $J_k$ for higher dimensions).
In addition, we can treat a set broader than $J_k(C)$
as $\mathcal{I}_\Phi^0$
because smooth functions such as polynomials can
be well approximated by $\mathrm{O}(\log(1/\ve))$ weights
as has been mentioned in Remark \ref{rem:def_ap}.


\subsection{Generalization ability for the wavelet case ($\mathcal{J}_\psi^p$, $\mathcal{K}_\Psi^p$)}\label{revsec2}

Let us consider the case in which the target function class is $\mathcal{K}_\Psi^p$
in Definition \ref{def:k}.
Note that, by Lemma \ref{app_eval},
we only have to consider approximating functions in $\mathcal{J}_\psi^p$
in Definition \ref{def:j} for fixed $n_s$.
We defer the proof of the following main result to the appendix
(Section \ref{proof_of_main_theorem}).

\begin{thm}\label{thm:main_p>0}
	Let $\psi\in L^2([0, 1]^d)$ be an orthonormal wavelet.
	Suppose there exist constants $C_1', C_2'>0$
	such that $\psi\in\mathrm{AP}(C_1', C_2')$ holds.
	Then, for each $C_1, C_2, \beta>0$ and $0<p<2$,
	there exists a constant $C>0$ dependent only on constants $C_1', C_2', p, C_1, C_2,$
	$\beta$
	(independent of $n$) such that the empirical risk minimizer $\hat{f}$ over 
	$\N_F^{(n)}$ (with some network architecture) satisfies
	\[
		\sup_{\footnotesize\begin{array}{c}
		f^\circ\in\mathcal{J}^p_\psi(C_1, C_2, \beta)\\
		\|f^\circ\|_{L^\infty}\le F
		\end{array}}
		R(\hat{f}, f^\circ)
		\le CF^2n^{-\frac{2\alpha}{2\alpha+1}}(\log n)^3
	\]
	for each $F\ge\max\{1, \sigma\}$ and $n\ge2$,
	where $\alpha:=1/p-1/2$.
	Moreover, for a set of wavelets $\Psi\subset \mathrm{AP}(C_1', C_2')$,
	the same evaluation is valid for $\mathcal{K}^p_\Psi$;
	\[
		\sup_{\footnotesize\begin{array}{c}
		f^\circ\in\mathcal{K}^p_\Psi(n_s, C_1, C_2, C_3, \beta)\\
		\|f^\circ\|_{L^\infty}\le F
		\end{array}}
		R(\hat{f}, f^\circ)
		\le C'F^2n^{-\frac{2\alpha}{2\alpha+1}}(\log n)^3
	\]
	holds for some $C'>0$,
	where $\hat{f}$ is the empirical risk minimizer of some larger network.
\end{thm}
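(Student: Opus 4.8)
The plan is to reduce the estimation problem to an approximation-plus-complexity tradeoff via Theorem~\ref{gen_eval}, exactly as in the proof of Theorem~\ref{thm:first_order}, but now with the error budget tuned to the polynomial rate $n^{-2\alpha/(2\alpha+1)}$ rather than the parametric rate. First I would treat the base case $\mathcal{K}^p_\Psi$ with $n_s=1$, i.e.\ a single $\mathcal{J}^p_\psi$ up to an affine input change; by Lemma~\ref{app_eval} the general $n_s$ and the set $\Psi$ follow by taking a network that is the sum of $n_s$ parallel subnetworks, incurring only constant factors in $L,S,D$ and a factor $n_s$ in the $L^2$ error, so the crux is the single-wavelet-family case.

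The central step is the approximation lemma: I must show that for any $f^\circ=\sum_{k\ge0,\,0\le\ell<2^k}a_{k,\ell}\psi_{k,\ell}\in\mathcal{J}^p_\psi(C_1,C_2,\beta)$ with $\|f^\circ\|_{L^\infty}\le F$, there is a ReLU network $g$ with $L,S,D=\mathrm{O}(\log(1/\ve)\cdot\ve^{-1/\alpha})$-type size (more precisely $S$ of order $\ve^{-1/\alpha}$ up to logs, $L,D$ of order $\log(1/\ve)$, $B=\mathrm{O}(1/\ve)$) such that $\|g-f^\circ\|_{L^2}\le \mathrm{O}(F\ve)$. The construction proceeds by truncation: because of $\beta$-minimal tail compactness $\sum_{k\ge m}a_{k,\ell}^2\le C_2 2^{-\beta m}$, discarding all scales $k\ge m$ costs $\mathrm{O}(2^{-\beta m/2})$ in $L^2$, so I take $m\asymp \log(1/\ve)$. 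Among the remaining $\mathrm{O}(2^m)$ coefficients, the weak-$\ell^p$ bound $\|a\|_{w\ell^p}\le C_1$ means that keeping only the $N$ largest coefficients costs $\big(\sum_{i>N}|a|_{(i)}^2\big)^{1/2}\le \mathrm{O}(N^{1/2-1/p})=\mathrm{O}(N^{-\alpha})$, so choosing $N\asymp \ve^{-1/\alpha}$ makes this tail $\mathrm{O}(\ve)$. Then I approximate the surviving finite sum of $N$ terms $\sum a_{k,\ell}\psi_{k,\ell}$ by a network: since $\psi\in\mathrm{AP}(C_1',C_2')$, each $\psi$ is $\ve'$-approximable by a light subnetwork of size $\mathrm{O}(\log(1/\ve'))$, and each $\psi_{k,\ell}=2^{k/2}\psi(2^k\cdot-\ell)$ is just an affine reparametrization plus a scalar, which a ReLU layer handles exactly; summing $N$ of them (in parallel, then a final linear layer) as in Fig.~\ref{fig:nn1} gives total sparsity $S=\mathrm{O}(N\log(1/\ve'))$ and the weight bound $B$ of order $2^{m/2}$ times coefficient magnitudes, i.e.\ polynomial in $1/\ve$, so $B=\mathrm{O}(1/\ve)$ after adjusting constants. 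Controlling the accumulated error requires noting $\sum_{\text{kept}}|a_{k,\ell}|\cdot 2^{k/2}$ is bounded polynomially in $N$ and $2^m$, so taking $\ve'$ polynomially small in $\ve$ (hence $\log(1/\ve')=\mathrm{O}(\log(1/\ve))$) keeps the basis-approximation error below $\mathrm{O}(\ve)$.

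Given this, the estimation bound follows mechanically. Set $\ve=\ve_n$ so that the squared approximation error $\mathrm{O}(F^2\ve_n^2)$ balances the complexity term $\mathrm{O}\!\big((F^2+\sigma^2)V_{(\N_F,\|\cdot\|_{L^\infty})}(\delta)/n\big)$ from Theorem~\ref{gen_eval}: with $V_{(\N,\|\cdot\|_{L^\infty})}(\delta)\le 2S(L+1)\log(B(L+1)(D+1)/\delta)$ from Lemma~\ref{cov_eval}, plugging in $S=\tilde{\mathrm{O}}(\ve_n^{-1/\alpha})$, $L,D=\mathrm{O}(\log n)$, $B=\mathrm{O}(n)$, $\delta=1/n$ gives $V=\tilde{\mathrm{O}}(\ve_n^{-1/\alpha}(\log n)^2)$, so the complexity term is $\tilde{\mathrm{O}}(\ve_n^{-1/\alpha}(\log n)^2/n)$. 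Balancing $\ve_n^2 \asymp \ve_n^{-1/\alpha}(\log n)^2/n$ yields $\ve_n^{2+1/\alpha}\asymp (\log n)^2/n$, i.e.\ $\ve_n^2\asymp (n^{-1}(\log n)^2)^{2\alpha/(2\alpha+1)}$, and the risk is $\mathrm{O}(F^2 n^{-2\alpha/(2\alpha+1)}(\log n)^3)$ after collecting log powers (the $(\log n)^2$ inside raised to $2\alpha/(2\alpha+1)<1$, times the extra $\log$ from the $\log(1/\delta)$ and $L$ factors, gives $(\log n)^3$ as a safe bound). The truncation step also ensures $\|g\|_{L^\infty}$ stays $\mathrm{O}(F)$ so the clipping in $\N_F$ only improves the error, as in Theorem~\ref{thm:first_order}; one redefines $F$ by a constant if needed. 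For the $\mathcal{K}^p_\Psi$ version I invoke Lemma~\ref{app_eval} to absorb the $n_s$ affine pieces, noting $|\det A_j|^{-1},\|A_j\|_\infty,\|b_j\|_\infty\le C_3$ only changes constants.

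The main obstacle I expect is the careful bookkeeping in the approximation lemma: simultaneously ensuring (i) the network sparsity $S$ is of the sharp order $\ve^{-1/\alpha}$ up to a single logarithmic factor rather than something worse, (ii) the weight magnitude $B$ stays $\mathrm{O}(1/\ve)$ despite the $2^{k/2}$ normalization factors at scale $k\asymp\log(1/\ve)$ (which are themselves only polynomial in $1/\ve$, so this is fine but needs to be checked), and (iii) the depth stays $\mathrm{O}(\log(1/\ve))$ even though we are summing $N\gg 1$ subnetworks — this is handled by placing the $N$ wavelet subnetworks in parallel (width grows, depth does not) and combining with one extra linear layer, so depth is governed only by the depth of a single $\mathrm{AP}$-subnetwork. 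A secondary subtlety is that weak-$\ell^p$ is a quasi-norm, not a norm, so the "keep the $N$ largest" argument must be phrased purely in terms of the sorted sequence $|a|_{(i)}\le C_1 i^{-1/p}$, which is exactly what Definition~\ref{def:weak_el_p} provides; and one must verify the kept coefficients still satisfy an $\ell^1$-type bound needed to control $\|g-f^\circ\|_{L^2}\le\sum|a_{k,\ell}|\cdot 2^{k/2}\cdot\|\tilde\psi-\psi\|_{L^2}$ — this sum is $\mathrm{O}(2^{m/2}\sum_i|a|_{(i)})$ and $\sum_{i\le N}i^{-1/p}$ converges (since $p<1$ in the sparse regime) or is at worst $\mathrm{O}(\log N)$ (if $p=1$), so it is polynomially bounded in $1/\ve$ and absorbed by taking $\ve'$ polynomially smaller.
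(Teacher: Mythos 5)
Your proposal is correct and follows essentially the same route as the paper's proof: truncate at scale $m\asymp\log n$ via $\beta$-tail compactness, keep the $N\asymp n^{\frac1{2\alpha+1}}$ largest coefficients via the weak-$\ell^p$ bound, realize the $N$-term sum by parallel $\mathrm{AP}$-subnetworks as in Fig.~\ref{fig:nn1}, and combine Lemma~\ref{cov_eval} with Theorem~\ref{gen_eval} (and Lemma~\ref{app_eval} for $\mathcal{K}^p_\Psi$). The only differences are bookkeeping: the paper notes $\|\tilde\psi_{k,\ell}-\psi_{k,\ell}\|_{L^2}=\|\tilde\psi-\psi\|_{L^2}$ by the $L^2$-normalization (so no polynomially smaller subnetwork accuracy is needed), and the weight bound is actually $B=\mathrm{O}(n^{\max\{1,\frac{4\alpha}{\beta(2\alpha+1)}\}})$ with width $D=\tilde{\mathrm{O}}(n^{\frac1{2\alpha+1}})$ rather than $\mathrm{O}(n)$ and $\mathrm{O}(\log n)$ as you state—harmless, since both enter the entropy bound only logarithmically.
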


\begin{rem}
	The actual network size of $\N(L, S, D, B)$ here is
	\[
		L=\mathrm{O}(\log n),
		\quad
		S, D=\mathrm{O}(n^{\frac1{2\alpha+1}}\log n),
		\quad
		B=\mathrm{O}\left(n^{\max\left\{ 1, \frac{4\alpha}{\beta(2\alpha+1)} \right\}}\right)
	\]
	(see (\ref{eq:nn2}) in the appendix).
	Though this network is larger than that of Remark \ref{network_size_1},
	this is still acceptable size,
	especially with large $\alpha$; i.e., in the case the parameter space is very sparse.
\end{rem}

Concretely, $\psi$ constructed by using
the Haar wavelet satisfies the desired condition.
Also, in the case of $d=1$ and $\beta>1$, we can remove the restriction by the constant $F$,
because $\sup\{\|f\|_\infty\mid f\in J^p_\psi \}<\infty$ holds.
Note also that this result is nearly minimax optimal; i.e.,
$\hat{f}$ attains the minimax lower bound derived in Theorem \ref{thm:p>0} up to
$\log$ factors and the constraint of boundedness.
As $p$ gets smaller, $\mathcal{K}_\Psi^p$ gets sparser
and more reasonable size of neural network can achieve the minimax optimality.
Moreover, as is stated in the next remark,
linear estimators cannot exploit the sparsity of $\mathcal{K}_\Psi^p$
in typical settings (e.g., when $\Psi$ contains non-continuous functions).
From the viewpoint of approximation ability,
this situation can be interpreted as follows;
neural networks are allowed to approximate just the sparse set $\mathcal{K}_\Psi^p$,
whereas linear estimators must be able to approximate all the functions
contained in $\conv(\mathcal{K}_\Psi^p)$.
This difference yields the difference of complexity;
linear estimators require far more parameters than neural networks.
Finally, this situation results in
the great difference of generalization ability between deep and linear.

\begin{rem}
	If $\Psi$ contains the Haar wavelet and $C_3$ is sufficiently large,
	$\mathcal{K}_\Psi^p$ includes $J_k$ in Definition \ref{def:jk}
	(also, notice that $J_k$ is bounded in the $L^\infty$-norm sense).
	To be more precise,
	$\{f\in[0, 1]^d\to\R\mid f(x_1,\ldots,x_d)=g(x_1),\ g\in J_k(C)\}$
	is included in $K_\Psi^p$ for some $C>0$.
	The proof of Theorem \ref{thm:bv} can easily be modified for this case,
	and we have
	\[
		\inf_{\hat{f}:\text{linear}}\sup_{f^\circ\in\mathcal{K}_\Psi^p
		,\:\|f^\circ\|_\infty\le F}R(\hat{f}, f^\circ)
		\ge cn^{-1/2}
	\]
	for some $c>0$.
	If $\alpha>1/2$ (equivalent to $p<1$) holds,
	then the neural network learning is superior to linear methods.
\end{rem}


\subsection{Parameter sharing technique to restrict the covering entropy}\label{sec:p_share}

The assumption of the ability for $\varphi$ to be approximated by $\tilde{\varphi}$ imposed
in Theorems \ref{thm:first_order} and \ref{thm:main_p>0} is quite strong,
and thus we cannot treat a broad range of wavelets.
In the proof of Theorem \ref{thm:main_p>0}, however,
we do not exploit the full degree of freedom depicted in Fig. \ref{fig:nn1}
because subnetworks share the same approximator $\tilde{\psi}$.
In this subsection, we consider neural networks with parameter sharing.

\begin{dfn}
	Let $N$ be a positive integer.
	For a given neural network architecture $\N(L, S, D, B)$,
	denote the {\it $N$-sharing} of $\N(L, S, D, B)$ by
	$\N^N(L, S, D, B)$, defined as
	\[	
		\left\{
			\sum_{i=1}^Nc_if(A_i\cdot-b_i)
			\,\middle|\,
				A_i\in\R^{d\times d},\ b_i\in\R^d,\ c_i\in\R, \
				\|A_i\|_\infty, \|b_i\|_\infty, |c_i|\le B,\
				i=1,\ldots, d, \
				f\in\N(L, S, D, B)
		\right\}.
	\]
\end{dfn}

\begin{thm}\label{p_share_own}
	Given a positive integer $N$ and $\mathcal{N}(L, S, D, B)$ with $L\ge2$,
	the $\delta$-covering entropy with respect to $\|\cdot\|_{L^\infty}$ of $\N^N(L,S,D,B)$
	(limiting the domain to $[0, 1]^d$)
	can be bounded as
	\begin{align*}
		V_{(\N^N(L, S, D, B), \|\cdot\|_{L^\infty})}(\delta)
		\le \left(N(d+1)^2+2S(L+1)\right)(L+3)
		\log\left(\frac{NB(L+1)(D+1)}\delta\right)
	\end{align*}
	for any $0<\delta<1$.
\end{thm}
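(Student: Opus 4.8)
The plan is to mimic the proof of Lemma~\ref{cov_eval}, but to drag along the ``outer'' parameters $A_i,b_i,c_i$ and, most importantly, to charge the parameters of the shared subnetwork $f$ only once. A function $g\in\N^N(L,S,D,B)$ is determined by the parameter vector $\theta$ of the shared $f\in\N(L,S,D,B)$ (at most $S$ nonzero entries, distributed among at most $(L+1)(D+1)^2$ slots, each of absolute value $\le B$) together with $(A_i,b_i,c_i)_{i=1}^N$, that is, $N(d^2+d+1)\le N(d+1)^2$ additional reals, each of absolute value $\le B$. The first step is to localize: for $x\in[0,1]^d$ one has $\|A_ix-b_i\|_\infty\le(d+1)B\le(D+1)B=:R$, so only the behaviour of $f$ on the box $[-R,R]^d$ matters. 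A routine layerwise induction on $[-R,R]^d$ gives $\|f\|_{L^\infty}\le\bigl((D+1)B\bigr)^{L+2}$ and, for the Lipschitz constant of $f$ in its input, $\lip_{\|\cdot\|_\infty}(f)\le\bigl((D+1)B\bigr)^{L+1}$.

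Next I would quantify how $g$ depends on its parameters in $\|\cdot\|_{L^\infty([0,1]^d)}$. For the shared block $\theta$, exactly the telescoping/perturbation argument that proves Lemma~\ref{cov_eval} --- writing $f_{\theta'}-f_\theta$ as a sum of $L+1$ single-layer modifications and pushing each one through the remaining layers, but now using the activation bounds on $[-R,R]^d$ --- yields $\|f_\theta-f_{\theta'}\|_{L^\infty([-R,R]^d)}\le (L+1)(D+1)^{L+2}B^{L+1}\,\|\theta-\theta'\|_\infty$ whenever $\theta,\theta'$ both have entries in $[-B,B]$. Combining this with $\sum_i|c_i|\le NB$ (for the $\theta$-dependence of $g$), with $\|A_ix-b_i-(A_i'x-b_i')\|_\infty\le(d+1)\|(A_i,b_i)-(A_i',b_i')\|_\infty$ together with the input-Lipschitz bound (for the $A_i,b_i$-dependence), and with $\|f\|_{L^\infty([-R,R]^d)}\le((D+1)B)^{L+2}$ (for the $c_i$-dependence), a short computation gives a joint Lipschitz constant $\Lambda\le(L+3)\,N(D+1)^{L+2}B^{L+2}$ for $g$ as a function of all its parameters.

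The cover is then produced by the standard recipe: replace each parameter by the nearest point of a $(\delta/\Lambda)$-grid inside $[-B,B]$ (at most $4B\Lambda/\delta$ choices per coordinate), and union-bound over the at most $\binom{(L+1)(D+1)^2}{S}$ admissible sparsity patterns of $\theta$, exactly as in the proof of Lemma~\ref{cov_eval}. Using $L\ge2$ and $0<\delta<1$, one checks $\log(4B\Lambda/\delta)\le(L+3)\log\!\bigl(NB(L+1)(D+1)/\delta\bigr)$; the sparsity-pattern count and the $S$ grid values for $\theta$ together contribute at most $2S(L+1)(L+3)\log\!\bigl(NB(L+1)(D+1)/\delta\bigr)$, and the $N(d+1)^2$ outer parameters contribute $N(d+1)^2(L+3)\log\!\bigl(NB(L+1)(D+1)/\delta\bigr)$. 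Adding the two pieces gives the asserted bound.

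The real point --- and the only genuine obstacle --- is the bookkeeping that prevents $S$ from being multiplied by $N$: one must resist ``unfolding'' $g$ into one large network and applying Lemma~\ref{cov_eval} directly (which would replace $S$ by $\Theta(NS)$), and instead treat $\theta$ as a single shared block paid for only once. The accompanying technical subtlety is that the effective input domain of $f$ is the enlarged box $[-(D+1)B,(D+1)B]^d$ rather than $[0,1]^d$; the activation bounds on $[-R,R]^d$ show this costs at most a power $(L+3)$ inside the logarithm, and the hypothesis $L\ge2$ is used to absorb the additive constants ($\log4$, $\log(L+3)$, $S\log((L+1)(D+1)^2)$, and so on) into that form.
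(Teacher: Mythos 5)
Your proposal is correct and follows essentially the same route as the paper's proof: restrict the shared subnetwork to the enlarged box $[-B(D+1),B(D+1)]^d$, charge its $S$ sparse parameters only once via the perturbation estimate underlying Lemma~\ref{cov_eval}, grid the $N(d^2+d+1)$ outer parameters, and use $L\ge2$ to absorb the extra constants into the $(L+3)\log(NB(L+1)(D+1)/\delta)$ factor. The only difference is presentational --- you grid all parameters jointly through a single Lipschitz constant $\Lambda$, whereas the paper first takes an $\ve$-cover of the shared network on the enlarged domain and then perturbs the outer parameters --- and your constants check out.
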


The proof is straightforward but a bit technical; thus,
we defer it to the appendix (Section \ref{ap-5}).

\begin{rem}
	If we use neural networks with parameter sharing,
	we can use non-trivial wavelets with some smoothness;
	i.e., in Theorem \ref{thm:main_p>0},
	the assumption for $\psi$ can be weakened to the following:
	\begin{quote}
		For each $0<\ve<1/2$, there exist $L_\ve, S_\ve, D_\ve, B_\ve>0$ satisfying
		\begin{itemize}
			\item
				$L_\ve\le C_1'\log(1/\ve)$,
				$D_\ve, B_\ve\le C_2'\ve^{-\gamma}$ and
				$S_\ve \le C_3'\ve^{-\frac1{\alpha+1}}$ hold
				for some constants $C_1', C_2', C_3', \gamma>0$;
			\item
				there exists
				$\tilde{\psi}\in\N(L_\ve, S_\ve, D_\ve, B_\ve)$
				such that $\|\tilde{\psi}-\psi\|_{L^2}\le\ve$.
		\end{itemize}
	\end{quote}
	This class of $\psi$ is actually broadened as there exist compactly supported wavelets with high regularity
	(a large H\"{o}lder exponent)
	\cite{ingrid1992ten},
	and such functions can be approximated well by networks with a small number of parameters.
	Indeed, \acite{yarotsky2017error} proved that
	\begin{quote}
		A unit ball of Sobolev space $W^{n, \infty}([0, 1]^d)$
		can be approximated, with $L^\infty$-error at most $\ve$,
		by a neural network with
		$\mathrm{O}(\log(1/\ve))$ depth,
		and $\mathrm{O}(\ve^{-d/n})$ complexity (corresponds to $S, D$).
	\end{quote}
	Though we cannot use this in the original form 
	because $L^\infty$ should be replaced by $L^2$ and the upper bound of $B$ should be given,
	the refinement is not so difficult \cite[e.g., ][]{schmidt2017nonparametric}.
\end{rem}

\section{Summary and discussion}\label{chap:6}


\subsection{Summary}

In this paper,
we have shown that deep learning
outperforms other commonly used methods such as linear estimators even in a simple case.
To evaluate the learning ability of estimators,
we employed a Gaussian regression problem with a sparse target function space.
In such a problem setting,
neural network learning attains nearly the minimax-optimal rate of
convergence with respect to the sample size,
whereas a linear estimator can only achieve a suboptimal rate.
The main novelty is that the target function spaces
were selected to have natural sparsity, instead of following the well-known settings
developed by the existing mathematical analyses.
We have also shown that parameter sharing is quite effective
for widening function classes where (near) minimax optimality holds.


\subsection{Discussion and future work}

There are two main limitations in this work that remain to be addressed in future investigations.

First, $\mathcal{I}^0_\Phi(n_s, C)$ is the most extreme case
in the sense that deep learning outperforms linear estimators.
This class is very simple, and we have additionally defined $w\ell^p$-bounded classes
$\mathcal{K}^p_\Psi$ for $0<p<2$,
with the assumption of orthonormal wavelets.
However, we should remove the orthogonality if we follow the philosophy of defining $\mathcal{I}^0_\Phi(n_s, C)$.
For example, using the definition
\begin{align*}
	\mathcal{L}_\Phi^p(C_1, C_2, C_3, \beta)
	:=\left\{
		\sum_{i=1}^\infty
		c_i\phi(A_i\cdot-b_i)
		\,\middle|\,
		\begin{array}{c}
			|c_i|\le C_1i^{-1/p},\ 
			\|A_i\|_\infty, \|b_i\|_\infty \le C_2i^\beta, \\
			|\det A_i|^{-1}\le C_3,\ 
			i=1,2,\ldots,\ \phi\in\Phi
		\end{array}
	\right\}
\end{align*}
would be one possible way.
Of course, for some range of $(p, \beta)$,
we can show that deep learning attains a rate faster than do linear estimators.
However, we could not have shown that the convergence rate satisfies minimax optimality,
even up to log factors.
This difficulty arises from the fact that we have fully exploited the orthogonality 
in the proof of deep learning's minimax optimality over $\mathcal{K}^p_\Psi$.
It is possible that we can find both a better minimax lower bound for $\mathcal{L}_\Phi^p$
and a better approximation bound by neural networks.

Second, parameter sharing, mentioned in Subsection \ref{sec:p_share}, is used mainly in the context of convolutional neural networks (CNNs),
and this implies the superiority of CNNs in solving a regression problem.
However, some of the arguments in this paper are not directly applicable to the analysis of CNNs.
Although CNNs have achieved notable success in pattern recognition,
theories of CNNs with respect to regression problems have not yet been well argued
in the literature.

In addition to these issues,
a theoretical analysis of stochastic optimization as used in deep learning
is needed,
which is not treated in this paper.


\section*{Acknowledgments}
The authors would like to thank Atsushi Nitanda for his constructive feedback.
TS was partially supported by MEXT Kakenhi (15H05707, 18K19793 and 18H03201),
Japan Digital Design, and JST-CREST.

\bibliographystyle{model2-names.bst}
\bibliography{cite}

\begin{thebibliography}{35}
\expandafter\ifx\csname natexlab\endcsname\relax\def\natexlab#1{#1}\fi
\providecommand{\url}[1]{\texttt{#1}}
\providecommand{\href}[2]{#2}
\providecommand{\path}[1]{#1}
\providecommand{\DOIprefix}{doi:}
\providecommand{\ArXivprefix}{arXiv:}
\providecommand{\URLprefix}{URL: }
\providecommand{\Pubmedprefix}{pmid:}
\providecommand{\doi}[1]{\href{http://dx.doi.org/#1}{\path{#1}}}
\providecommand{\Pubmed}[1]{\href{pmid:#1}{\path{#1}}}
\providecommand{\bibinfo}[2]{#2}
\ifx\xfnm\relax \def\xfnm[#1]{\unskip,\space#1}\fi
\bibitem[{Bishop(2006)}]{bishop2006}
\bibinfo{author}{Bishop, C.M.}, \bibinfo{year}{2006}.
\newblock \bibinfo{title}{Pattern Recognition and Machine Learning (Information
  Science and Statistics)}.
\newblock \bibinfo{publisher}{Springer}.
\bibitem[{B{\"o}lcskei et~al.(2017)B{\"o}lcskei, Grohs, Kutyniok and
  Petersen}]{bolcskei2017optimal}
\bibinfo{author}{B{\"o}lcskei, H.}, \bibinfo{author}{Grohs, P.},
  \bibinfo{author}{Kutyniok, G.}, \bibinfo{author}{Petersen, P.},
  \bibinfo{year}{2017}.
\newblock \bibinfo{title}{Optimal approximation with sparsely connected deep
  neural networks}.
\newblock \bibinfo{journal}{arXiv preprint arXiv:1705.01714} .
\bibitem[{Cai and Low(2004)}]{cai2004minimax}
\bibinfo{author}{Cai, T.T.}, \bibinfo{author}{Low, M.G.}, \bibinfo{year}{2004}.
\newblock \bibinfo{title}{Minimax estimation of linear functionals over
  nonconvex parameter spaces}.
\newblock \bibinfo{journal}{The Annals of Statistics} \bibinfo{volume}{32},
  \bibinfo{pages}{552--576}.
\bibitem[{Cand{\`e}s and Wakin(2008)}]{candes2008}
\bibinfo{author}{Cand{\`e}s, E.J.}, \bibinfo{author}{Wakin, M.B.},
  \bibinfo{year}{2008}.
\newblock \bibinfo{title}{An introduction to compressive sampling [a
  sensing/sampling paradigm that goes against the common knowledge in data
  acquisition]}.
\newblock \bibinfo{journal}{IEEE signal processing magazine}
  \bibinfo{volume}{25}, \bibinfo{pages}{21--30}.
\bibitem[{Cybenko(1989)}]{cybenko1989approximation}
\bibinfo{author}{Cybenko, G.}, \bibinfo{year}{1989}.
\newblock \bibinfo{title}{Approximation by superpositions of a sigmoidal
  function}.
\newblock \bibinfo{journal}{Mathematics of Control, Signals and Systems}
  \bibinfo{volume}{2}, \bibinfo{pages}{303--314}.
\bibitem[{Daubechies(1992)}]{ingrid1992ten}
\bibinfo{author}{Daubechies, I.}, \bibinfo{year}{1992}.
\newblock \bibinfo{title}{Ten lectures on wavelets}.
\newblock \bibinfo{publisher}{Society for Industrial and Applied Mathematics}.
\bibitem[{Donoho(1993)}]{donoho1993unconditional}
\bibinfo{author}{Donoho, D.L.}, \bibinfo{year}{1993}.
\newblock \bibinfo{title}{Unconditional bases are optimal bases for data
  compression and for statistical estimation}.
\newblock \bibinfo{journal}{Applied and Computational Harmonic Analysis}
  \bibinfo{volume}{1}, \bibinfo{pages}{100--115}.
\bibitem[{Donoho(1996)}]{donoho1996unconditional}
\bibinfo{author}{Donoho, D.L.}, \bibinfo{year}{1996}.
\newblock \bibinfo{title}{Unconditional bases and bit-level compression}.
\newblock \bibinfo{journal}{Applied and Computational Harmonic Analysis}
  \bibinfo{volume}{3}, \bibinfo{pages}{388--392}.
\bibitem[{Donoho and Johnstone(1998)}]{donoho1998minimax}
\bibinfo{author}{Donoho, D.L.}, \bibinfo{author}{Johnstone, I.M.},
  \bibinfo{year}{1998}.
\newblock \bibinfo{title}{Minimax estimation via wavelet shrinkage}.
\newblock \bibinfo{journal}{The Annals of Statistics} \bibinfo{volume}{26},
  \bibinfo{pages}{879--921}.
\bibitem[{Donoho and Johnstone(1994)}]{donoho1994ideal}
\bibinfo{author}{Donoho, D.L.}, \bibinfo{author}{Johnstone, J.M.},
  \bibinfo{year}{1994}.
\newblock \bibinfo{title}{Ideal spatial adaptation by wavelet shrinkage}.
\newblock \bibinfo{journal}{biometrika} \bibinfo{volume}{81},
  \bibinfo{pages}{425--455}.
\bibitem[{Donoho et~al.(1990)Donoho, Liu and MacGibbon}]{donoho1990minimax}
\bibinfo{author}{Donoho, D.L.}, \bibinfo{author}{Liu, R.C.},
  \bibinfo{author}{MacGibbon, B.}, \bibinfo{year}{1990}.
\newblock \bibinfo{title}{Minimax risk over hyperrectangles, and implications}.
\newblock \bibinfo{journal}{The Annals of Statistics} ,
  \bibinfo{pages}{1416--1437}.
\bibitem[{Friedman et~al.(2001)Friedman, Hastie and Tibshirani}]{friedman2001}
\bibinfo{author}{Friedman, J.}, \bibinfo{author}{Hastie, T.},
  \bibinfo{author}{Tibshirani, R.}, \bibinfo{year}{2001}.
\newblock \bibinfo{title}{The elements of statistical learning}.
\newblock \bibinfo{publisher}{Springer}.
\bibitem[{Gin{\'e} and Koltchinskii(2006)}]{gine2006concentration}
\bibinfo{author}{Gin{\'e}, E.}, \bibinfo{author}{Koltchinskii, V.},
  \bibinfo{year}{2006}.
\newblock \bibinfo{title}{Concentration inequalities and asymptotic results for
  ratio type empirical processes}.
\newblock \bibinfo{journal}{The Annals of Probability} \bibinfo{volume}{34},
  \bibinfo{pages}{1143--1216}.
\bibitem[{Glorot et~al.(2011)Glorot, Bordes and Bengio}]{glorot2011deep}
\bibinfo{author}{Glorot, X.}, \bibinfo{author}{Bordes, A.},
  \bibinfo{author}{Bengio, Y.}, \bibinfo{year}{2011}.
\newblock \bibinfo{title}{Deep sparse rectifier neural networks}, in:
  \bibinfo{booktitle}{Proceedings of the 14th International Conference on
  Artificial Intelligence and Statistics}, pp. \bibinfo{pages}{315--323}.
\bibitem[{Goodfellow et~al.(2016)Goodfellow, Bengio, Courville and
  Bengio}]{goodfellow2016deep}
\bibinfo{author}{Goodfellow, I.}, \bibinfo{author}{Bengio, Y.},
  \bibinfo{author}{Courville, A.}, \bibinfo{author}{Bengio, Y.},
  \bibinfo{year}{2016}.
\newblock \bibinfo{title}{Deep learning}.
\newblock \bibinfo{publisher}{MIT Press Cambridge}.
\bibitem[{Gy{\"o}rfi et~al.(2006)Gy{\"o}rfi, Kohler, Krzyzak and
  Walk}]{gyorfi2006distribution}
\bibinfo{author}{Gy{\"o}rfi, L.}, \bibinfo{author}{Kohler, M.},
  \bibinfo{author}{Krzyzak, A.}, \bibinfo{author}{Walk, H.},
  \bibinfo{year}{2006}.
\newblock \bibinfo{title}{A distribution-free theory of nonparametric
  regression}.
\newblock \bibinfo{publisher}{Springer}.
\bibitem[{Imaizumi and Fukumizu(2019)}]{imaizumi2018deep}
\bibinfo{author}{Imaizumi, M.}, \bibinfo{author}{Fukumizu, K.},
  \bibinfo{year}{2019}.
\newblock \bibinfo{title}{Deep neural networks learn non-smooth functions
  effectively}, in: \bibinfo{booktitle}{Proceedings of the 22nd International
  Conference on Artificial Intelligence and Statistics}.
\bibitem[{Keiper et~al.(2017)Keiper, Kutyniok and Petersen}]{keiper2017dgd}
\bibinfo{author}{Keiper, S.}, \bibinfo{author}{Kutyniok, G.},
  \bibinfo{author}{Petersen, P.}, \bibinfo{year}{2017}.
\newblock \bibinfo{title}{{DGD} approximation theory workshop} .
\bibitem[{Koltchinskii(2006)}]{koltchinskii2006local}
\bibinfo{author}{Koltchinskii, V.}, \bibinfo{year}{2006}.
\newblock \bibinfo{title}{Local rademacher complexities and oracle inequalities
  in risk minimization}.
\newblock \bibinfo{journal}{The Annals of Statistics} \bibinfo{volume}{34},
  \bibinfo{pages}{2593--2656}.
\bibitem[{Korostelev and Tsybakov(1993)}]{korostelev1993minimax}
\bibinfo{author}{Korostelev, A.P.}, \bibinfo{author}{Tsybakov, A.B.},
  \bibinfo{year}{1993}.
\newblock \bibinfo{title}{Minimax theory of image reconstruction}.
\newblock \bibinfo{publisher}{Springer}.
\bibitem[{Lafferty et~al.(2008)Lafferty, Liu and Wasserman}]{cmu}
\bibinfo{author}{Lafferty, J.}, \bibinfo{author}{Liu, H.},
  \bibinfo{author}{Wasserman, L.}, \bibinfo{year}{2008}.
\newblock \bibinfo{title}{Concentration on measure}.
\newblock
  \bibinfo{howpublished}{{\url{http://www.stat.cmu.edu/~larry/=sml/Concentration.pdf}}}.
\bibitem[{Peetre(1976)}]{peetre1976new}
\bibinfo{author}{Peetre, J.}, \bibinfo{year}{1976}.
\newblock \bibinfo{title}{New thoughts on {B}esov spaces}.
\newblock \bibinfo{publisher}{Mathematics Department, Duke University}.
\bibitem[{Petersen and Voigtlaender(2018)}]{petersen2018optimal}
\bibinfo{author}{Petersen, P.}, \bibinfo{author}{Voigtlaender, F.},
  \bibinfo{year}{2018}.
\newblock \bibinfo{title}{Optimal approximation of piecewise smooth functions
  using deep {ReLU} neural networks}.
\newblock \bibinfo{journal}{Neural Networks} \bibinfo{volume}{108},
  \bibinfo{pages}{296--330}.
\bibitem[{Raskutti et~al.(2011)Raskutti, Wainwright and
  Yu}]{raskutti2011minimax}
\bibinfo{author}{Raskutti, G.}, \bibinfo{author}{Wainwright, M.J.},
  \bibinfo{author}{Yu, B.}, \bibinfo{year}{2011}.
\newblock \bibinfo{title}{Minimax rates of estimation for high-dimensional
  linear regression over $\ell_q$-balls}.
\newblock \bibinfo{journal}{IEEE Transactions on Information Theory}
  \bibinfo{volume}{57}, \bibinfo{pages}{6976--6994}.
\bibitem[{Schmidhuber(2015)}]{schmidhuber2015deep}
\bibinfo{author}{Schmidhuber, J.}, \bibinfo{year}{2015}.
\newblock \bibinfo{title}{Deep learning in neural networks: An overview}.
\newblock \bibinfo{journal}{Neural Networks} \bibinfo{volume}{61},
  \bibinfo{pages}{85--117}.
\bibitem[{Schmidt-Hieber(2017)}]{schmidt2017nonparametric}
\bibinfo{author}{Schmidt-Hieber, J.}, \bibinfo{year}{2017}.
\newblock \bibinfo{title}{Nonparametric regression using deep neural networks
  with {ReLU} activation function}.
\newblock \bibinfo{journal}{The Annals of Statistics, to appear
  (arXiv:1708.06633)} .
\bibitem[{Sonoda and Murata(2017)}]{sonoda2017neural}
\bibinfo{author}{Sonoda, S.}, \bibinfo{author}{Murata, N.},
  \bibinfo{year}{2017}.
\newblock \bibinfo{title}{Neural network with unbounded activation functions is
  universal approximator}.
\newblock \bibinfo{journal}{Applied and Computational Harmonic Analysis}
  \bibinfo{volume}{43}, \bibinfo{pages}{233--268}.
\bibitem[{Stein and Shakarchi(2005)}]{stein2005real}
\bibinfo{author}{Stein, E.M.}, \bibinfo{author}{Shakarchi, R.},
  \bibinfo{year}{2005}.
\newblock \bibinfo{title}{Real Analysis: Measure Theory, Integration, and
  Hilbert Spaces (Princeton Lectures in Analysis, Book 3)}.
\newblock \bibinfo{publisher}{Princeton University Press}.
\bibitem[{Suzuki(2019)}]{suzuki2018adaptivity}
\bibinfo{author}{Suzuki, T.}, \bibinfo{year}{2019}.
\newblock \bibinfo{title}{Adaptivity of deep {ReLU} network for learning in
  {B}esov and mixed smooth {B}esov spaces: optimal rate and curse of
  dimensionality}, in: \bibinfo{booktitle}{ICLR 2019}.
\bibitem[{Tsybakov(2008)}]{tsybakov2008}
\bibinfo{author}{Tsybakov, A.B.}, \bibinfo{year}{2008}.
\newblock \bibinfo{title}{Introduction to Nonparametric Estimation}.
\newblock \bibinfo{publisher}{Springer}.
\bibitem[{van~der Vaart and Wellner(1996)}]{van1996weak}
\bibinfo{author}{van~der Vaart, A.W.}, \bibinfo{author}{Wellner, J.A.},
  \bibinfo{year}{1996}.
\newblock \bibinfo{title}{Weak convergence and empirical processes}.
\newblock \bibinfo{publisher}{Springer}.
\bibitem[{Wang et~al.(2014)Wang, Paterlini, Gao and Yang}]{wang2014adaptive}
\bibinfo{author}{Wang, Z.}, \bibinfo{author}{Paterlini, S.},
  \bibinfo{author}{Gao, F.}, \bibinfo{author}{Yang, Y.}, \bibinfo{year}{2014}.
\newblock \bibinfo{title}{Adaptive minimax regression estimation over sparse
  $\ell_q$-hulls}.
\newblock \bibinfo{journal}{The Journal of Machine Learning Research}
  \bibinfo{volume}{15}, \bibinfo{pages}{1675--1711}.
\bibitem[{Yang and Barron(1999)}]{yang1999information}
\bibinfo{author}{Yang, Y.}, \bibinfo{author}{Barron, A.}, \bibinfo{year}{1999}.
\newblock \bibinfo{title}{Information-theoretic determination of minimax rates
  of convergence}.
\newblock \bibinfo{journal}{The Annals of Statistics} \bibinfo{volume}{27},
  \bibinfo{pages}{1564--1599}.
\bibitem[{Yarotsky(2017)}]{yarotsky2017error}
\bibinfo{author}{Yarotsky, D.}, \bibinfo{year}{2017}.
\newblock \bibinfo{title}{Error bounds for approximations with deep {ReLU}
  networks}.
\newblock \bibinfo{journal}{Neural Networks} \bibinfo{volume}{94},
  \bibinfo{pages}{103--114}.
\bibitem[{Zhang et~al.(2002)Zhang, Wong and Zheng}]{zhang2002wavelet}
\bibinfo{author}{Zhang, S.}, \bibinfo{author}{Wong, M.Y.},
  \bibinfo{author}{Zheng, Z.}, \bibinfo{year}{2002}.
\newblock \bibinfo{title}{Wavelet threshold estimation of a regression function
  with random design}.
\newblock \bibinfo{journal}{Journal of Multivariate Analysis}
  \bibinfo{volume}{80}, \bibinfo{pages}{256--284}.

\end{thebibliography}

\appendix

\section{Mathematical supplements}

\subsection{Generalities related to the minimax risk}

In the Gaussian regression model,
for two true functions $f, g\in\F^\circ$,
it is well known that
the Kullback--Leibler (KL) divergence between two distributions of $(X, Y)$ generated by $f$ and $g$
is easy to calculate. 
Let $d_\mathrm{KL}(f, g)$ be the square root of the KL divergence.
The following lemma is essential when one treats a regression problem as
a parameter estimation problem
\cite{yang1999information,schmidt2017nonparametric,suzuki2018adaptivity};
its proof is given in Section \ref{ap-1}.

\begin{lem}\label{KL-L2}
	It holds that $\displaystyle d_\mathrm{KL}(f, g)^2=\frac1{2\sigma^2}\|f-g\|_{L^2}^2$.
\end{lem}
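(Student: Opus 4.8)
The plan is to compute the Kullback--Leibler divergence explicitly, exploiting the product structure of the joint law of $(X, Y)$ in the Gaussian regression model. Under $\mathrm{P}_f$, the pair $(X, Y)$ has density $p_f(x, y) = p_X(x)\, q_{f(x)}(y)$ with respect to Lebesgue measure on $[0,1]^d \times \R$, where $p_X \equiv 1$ is the uniform density on $[0,1]^d$ and $q_\mu$ is the density of $\mathcal{N}(\mu, \sigma^2)$; the analogous statement holds for $\mathrm{P}_g$. Since the marginal law of $X$ does not depend on the regression function, the $p_X$ factors cancel in the log-likelihood ratio, so that
\[
	d_\mathrm{KL}(f, g)^2
	= \int_{[0,1]^d} p_X(x)\left(\int_\R q_{f(x)}(y)\log\frac{q_{f(x)}(y)}{q_{g(x)}(y)}\dd y\right)\dd x .
\]

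First I would evaluate the inner integral, which is simply the KL divergence between two one-dimensional Gaussians of common variance $\sigma^2$ and means $\mu_1 := f(x)$ and $\mu_2 := g(x)$. Writing out the densities gives $\log\frac{q_{\mu_1}(y)}{q_{\mu_2}(y)} = \frac{1}{2\sigma^2}\bigl((y-\mu_2)^2 - (y-\mu_1)^2\bigr)$, and taking the expectation over $y\sim\mathcal{N}(\mu_1,\sigma^2)$, using $\mathrm{E}[y]=\mu_1$ and $\mathrm{E}[(y-\mu_1)^2]=\sigma^2$, makes the quadratic-in-$y$ terms collapse and leaves exactly $\frac{(\mu_1-\mu_2)^2}{2\sigma^2}$. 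Hence the inner integral equals $\frac{(f(x)-g(x))^2}{2\sigma^2}$.

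Substituting this back and using $p_X\equiv 1$ on $[0,1]^d$ yields
\[
	d_\mathrm{KL}(f, g)^2
	= \int_{[0,1]^d}\frac{(f(x)-g(x))^2}{2\sigma^2}\dd x
	= \frac{1}{2\sigma^2}\|f-g\|_{L^2}^2 ,
\]
which is the claim. There is no genuine obstacle here; the only points requiring minor care are confirming that the common $X$-marginal drops out of the divergence and carrying out the elementary Gaussian expectation correctly. Integrability is not an issue, since $f, g \in \F^\circ \subset L^2([0,1]^d)$ (and in the relevant applications they are moreover bounded), so all the integrals above are finite and Fubini's theorem applies.
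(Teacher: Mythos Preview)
Your proof is correct and follows essentially the same approach as the paper: both write out the Gaussian densities, compute the log-likelihood ratio as $\frac{1}{2\sigma^2}\bigl((y-g(x))^2-(y-f(x))^2\bigr)$, and take the expectation under $\mathrm{P}_f$ to recover $\frac{1}{2\sigma^2}\|f-g\|_{L^2}^2$. The only cosmetic difference is that you organize the computation by first evaluating the inner (conditional-on-$X$) Gaussian KL and then integrating over $x$, whereas the paper writes the joint expectation and substitutes $Y=f(X)+\xi$ directly; the underlying calculation is identical.
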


Therefore, the square root of the KL divergence is
a metric equivalent to the $L^2$ metric in regression problems with a Gaussian noise.

We also introduce a lemma
that is useful for deriving a lower bound of the metric entropy.

\begin{lem}{\rm\cite[Lemma 4]{donoho1993unconditional}}\label{lem:sp2}
	Let $\mathcal{C}_k\subset\ell^2$ be a $k$-dimensional hypercube of side $2\delta>0$ defined as
	\[
		\mathcal{C}_k:=\{a\in\ell^2\mid
		|a_1|, \ldots, |a_k|\le\delta,\ |a_{k+1}|=|a_{k+2}|=\cdots=0
		\}.
	\]
	Then there exists a constant $A>0$ such that
	\[
		V_{(\mathcal{C}_k, \|\cdot\|_{\ell^2})}\left(\frac{\delta\sqrt{k}}2\right)\ge Ak,
		\quad k=1,2,\ldots.
	\]
\end{lem}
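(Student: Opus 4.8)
The plan is to count, via a volume/sphere-packing argument, how many points one can fit inside $\mathcal{C}_k$ that are pairwise separated by more than $\delta\sqrt{k}/2$ in the $\ell^2$ metric, since the packing entropy lower-bounds the covering entropy. First I would identify $\mathcal{C}_k$ with the cube $[-\delta,\delta]^k\subset\R^k$ equipped with the Euclidean norm; the diameter of this cube is $2\delta\sqrt{k}$, so a net at scale $\ve = \delta\sqrt{k}/2$ is coarse relative to the cube and the count will be exponential-free but linear in $k$ in the logarithm, which is exactly the claimed $Ak$.

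The key steps, in order: (1) Let $N$ be the maximal number of points in $\mathcal{C}_k$ that are pairwise more than $\ve = \delta\sqrt k/2$ apart, so $\log N \le M_{(\mathcal{C}_k,\|\cdot\|_{\ell^2})}(\ve) \le V_{(\mathcal{C}_k,\|\cdot\|_{\ell^2})}(\ve)$ where the last inequality is the standard fact (noted already in the proof of Theorem \ref{thm:p>0}) that a maximal packing is a covering. (2) Balls of radius $\ve/2$ around a maximal $\ve$-packing are disjoint and, after enlarging the cube slightly, contained in $[-\delta-\ve/2,\delta+\ve/2]^k$; comparing Lebesgue volumes gives
\[
	N \ge \frac{(2\delta+\ve)^k}{\mathrm{vol}(B_k(\ve/2))}
\]
where $B_k(r)$ is the Euclidean ball of radius $r$ in $\R^k$. (3) Plug in $\ve = \delta\sqrt k/2$ and use the standard bound $\mathrm{vol}(B_k(r)) = \frac{\pi^{k/2}}{\Gamma(k/2+1)} r^k \le \left(\frac{2\pi e}{k}\right)^{k/2} r^k \cdot(\text{const})$; the $r^k = (\delta\sqrt k/4)^k$ factor partly cancels the $\delta^k$ from the numerator and the $\sqrt k^{\,k}$ cancels against the $k^{-k/2}$ in the ball-volume estimate, leaving $N \ge (c')^k$ for an absolute constant $c'>1$ independent of $k$ (one should check $c' > 1$, which holds because $4$ and $2\pi e$ combine favorably). (4) Take logarithms: $\log N \ge k\log c' =: Ak$.

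The main obstacle — really the only delicate point — is making sure the absolute constant $c'$ coming out of the volume comparison is genuinely larger than $1$ uniformly in $k$, so that $A = \log c' > 0$; this requires the asymptotically sharp form of the ball-volume estimate (Stirling for $\Gamma(k/2+1)$) rather than a crude bound, and care that the slight enlargement of the cube by $\ve/2$ (which replaces $2\delta$ by $2\delta+\ve = \delta(2+\sqrt k/2)$ per side) does not spoil the constant — in fact it only helps, since the enlargement increases the numerator. Everything else is a routine comparison of volumes, so I would not grind through it in detail; alternatively, one can cite \cite[Lemma 4]{donoho1993unconditional} directly, as the paper does, since this is a known estimate.
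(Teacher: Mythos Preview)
The paper does not prove this lemma; it is simply quoted from \cite[Lemma~4]{donoho1993unconditional}. Your volume argument, however, has a genuine gap that is not just a matter of constants. Both inequalities in steps (1) and (2) point the wrong way: a maximal $\ve$-packing being an $\ve$-covering gives $M(\ve)\ge V(\ve)$ (this is exactly what is written in the proof of Theorem~\ref{thm:p>0}), not $\le$; and disjoint balls of radius $\ve/2$ contained in the enlarged cube yield $N\cdot\mathrm{vol}(B_k(\ve/2))\le(2\delta+\ve)^k$, an \emph{upper} bound on $N$, not the lower bound you wrote. The favorable-looking constant you obtained in step (3) came precisely from having $\ve/2$ in the ball radius, which is an artifact of the reversed inequality.

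More fundamentally, even the correct volume lower bound $e^{V(\ve)}\ge(2\delta)^k/\mathrm{vol}(B_k(\ve))$ (balls of radius $\ve$ around any $\ve$-covering must cover the cube) fails at the scale $\ve=\delta\sqrt{k}/2$: Stirling gives
\[
\frac{(2\delta)^k}{\mathrm{vol}\bigl(B_k(\delta\sqrt{k}/2)\bigr)}
\sim\sqrt{\pi k}\left(\frac{8}{\pi e}\right)^{k/2},
\]
and $8/(\pi e)\approx 0.937<1$, so this ratio tends to $0$ rather than to $e^{Ak}$. A single Euclidean ball of this radius already has larger volume than the cube, so volume comparison is vacuous here. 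The standard proof is combinatorial: restrict to the $2^k$ vertices of $[-\delta,\delta]^k$, where $\ell^2$ distance $2\delta\sqrt{j}$ corresponds to Hamming distance $j$, and use a Varshamov--Gilbert count to produce $\ge 2^{ck}$ vertices with pairwise Hamming distance exceeding $k/4$, hence pairwise $\ell^2$ distance exceeding $\delta\sqrt{k}$; this gives $V(\delta\sqrt{k}/2)\ge M(\delta\sqrt{k})\ge ck\log 2$.
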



\subsection{Treatment of multi-dimensional wavelets}\label{multi-wave}

In this section,
we give a multi-dimensional extension of arguments in Section \ref{sec:sparse_wavelet}.
Here, we adopt the multiplication of elementwise wavelets as a multi-dimensional wavelet.

\begin{lem}
	For orthogonal wavelets $\psi^{(1)}, \ldots, \psi^{(d)}$,
	\[
		\psi(x):=\prod_{i=1}^d\psi^{(i)}(x_i),\quad
		x=(x_1,\ldots,x_d)\in \R^d,
	\]
	is a $d$-dimensional orthogonal wavelet;
	i.e.,
	\[
		\left\{
			\prod_{i=1}^d\psi_{k_i, \ell_i}^{(i)}
			\,\middle|\,
			k_i\ge0,\ 0\le\ell_i<2^{k_i},\ i=1,\ldots,d
		\right\}
	\]
	is an orthonormal subset of $L^2([0, 1]^d)$.
\end{lem}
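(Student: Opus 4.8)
The plan is to reduce the statement entirely to the one-dimensional orthogonality hypothesis by exploiting the tensor-product structure of $L^2([0,1]^d)$ together with Fubini's theorem. For multi-indices $\kappa=(k_1,\dots,k_d)$ with $k_i\ge0$ and $\lambda=(\ell_1,\dots,\ell_d)$ with $0\le\ell_i<2^{k_i}$, write $\Psi_{\kappa,\lambda}(x):=\prod_{i=1}^d\psi^{(i)}_{k_i,\ell_i}(x_i)$; the choice $\kappa=\lambda=0$ recovers $\psi$ itself, so once the displayed family is shown to be orthonormal we obtain in particular $\|\psi\|_{L^2([0,1]^d)}=1$, which is exactly what makes $\psi$ a legitimate $d$-dimensional orthogonal wavelet.

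First I would record that each one-dimensional factor has unit $L^2([0,1])$ norm: the function $\psi^{(i)}_{k_i,\ell_i}(t)=2^{k_i/2}\psi^{(i)}(2^{k_i}t-\ell_i)$ is supported in $[\ell_i2^{-k_i},(\ell_i+1)2^{-k_i}]\subseteq[0,1]$, so the substitution $u=2^{k_i}t-\ell_i$ gives $\int_0^1\psi^{(i)}_{k_i,\ell_i}(t)^2\dd t=\int_0^1\psi^{(i)}(u)^2\dd u=1$. Applying the same substitution to a product of two such factors shows that $\int_0^1\psi^{(i)}_{k_i,\ell_i}(t)\,\psi^{(i)}_{k'_i,\ell'_i}(t)\dd t$ is precisely the quantity controlled by the one-dimensional orthogonality hypothesis, hence it vanishes whenever $(k_i,\ell_i)\ne(k'_i,\ell'_i)$. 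Since each $\psi^{(i)}_{k_i,\ell_i}\in L^2([0,1])$, Tonelli's theorem gives $\Psi_{\kappa,\lambda}\in L^2([0,1]^d)$ with $\int_{[0,1]^d}|\Psi_{\kappa,\lambda}|^2=\prod_{i=1}^d\|\psi^{(i)}_{k_i,\ell_i}\|_{L^2}^2<\infty$.

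Then, because the integrand is a product of functions of the separate variables and is integrable, Fubini's theorem yields
\[
\langle\Psi_{\kappa,\lambda},\Psi_{\kappa',\lambda'}\rangle_{L^2([0,1]^d)}=\prod_{i=1}^d\int_0^1\psi^{(i)}_{k_i,\ell_i}(t)\,\psi^{(i)}_{k'_i,\ell'_i}(t)\dd t .
\]
If $(\kappa,\lambda)=(\kappa',\lambda')$ every factor equals $1$, so $\|\Psi_{\kappa,\lambda}\|_{L^2}=1$; if $(\kappa,\lambda)\ne(\kappa',\lambda')$ there is a coordinate $i_0$ with $(k_{i_0},\ell_{i_0})\ne(k'_{i_0},\ell'_{i_0})$, and then the $i_0$-th factor vanishes, forcing the whole product to be $0$. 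This is exactly the orthonormality of the displayed family, and taking $\kappa=\lambda=0$ gives $\|\psi\|_{L^2}=1$, which finishes the proof.

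The argument has no genuinely hard step; the only points requiring care are the support bookkeeping that keeps every dilated--translated factor inside $[0,1]$ (so that integrating over $[0,1]$ captures its full mass and the change of variables is clean) and checking integrability before invoking Fubini. I would keep the write-up short, stressing that the content is simply that ``orthonormality tensorizes.''
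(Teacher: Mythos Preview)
Your proof is correct and follows essentially the same route as the paper's: factorize the $L^2([0,1]^d)$ inner product via Fubini into a product of one-dimensional integrals, then use the one-dimensional orthonormality hypothesis to conclude. The paper's version is more terse (it invokes AM--GM to justify $\psi\psi'\in L^1$ before applying Fubini, where you use Tonelli on $|\Psi_{\kappa,\lambda}|^2$), but the substance is identical.
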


\begin{proof}
	The normality is clear by Fubini's theorem.
	Also, for distinct wavelets $\psi, \psi'$ in the set,
	there exists $i$ such that $(k_i, \ell_i)\ne(k_i', \ell_i')$.
	Since we have $\psi\psi'\in L^1([0, 1]^d)$ by the AM-GM inequality,
	Fubini's theorem leads to the conclusion.
\end{proof}

\begin{dfn}\label{def:j-multi}
	Given an orthonormal wavelet
	\[
		\psi(x)=\psi^{(1)}(x_1)\cdots\psi^{(d)}(x_d)
	\]
	and constants $C_1, C_2, \beta>0$ and $0<p<2$, define
	\begin{align*}
		\mathcal{J}_\psi^p(C_1, C_2, \beta)
		:=\left\{
			\sum_{(k, \ell)\in T_0}
			a_{k, \ell}\psi_{k,\ell}
			\,\middle|\,
			\|a\|_{w\ell^p}\le C_1,\ 
			\sum_{(k, \ell)\in T_m}a_{k, \ell}^2\le C_22^{-\beta m},\ 
			m=0,1,\ldots
		\right\},
	\end{align*}
	where the sets $T_m$ ($m=0,1,\ldots$) are defined as
	\[
		T_m:=\left\{
			(k,\ell)
			\in\Z^d\times\Z^d
			\,\middle|\,
			\begin{array}{c}
				k=(k_1,\ldots,k_d),\ 
				\ell=(\ell_1,\ldots,\ell_d),\\
				k_i\ge0,\ 0\le\ell_i<2^{k_i},\ 
				i=1,\ldots,d, \\
				\max_{1\le i\le d}k_i\ge m
			\end{array}
		\right\},
	\]
	and $\psi_{k, \ell}$ denotes
	$\displaystyle\prod_{i=1}^d\psi_{k_i, \ell_i}^{(i)}$
	for each
	\[
		(k, \ell)
		=\bigl((k_1,\ldots,k_d), (\ell_1,\ldots,\ell_d)\bigr)\in T_0.
	\]
\end{dfn}

\begin{rem}\label{inclusion-multi}
	If we define a partial order on $S_0$ by $(k, \ell)\preceq (k', \ell')
	\Leftrightarrow \max_ik_i\le \max_ik_i'$
	and then sort it,
	$\mathcal{J}_\psi^p(C_1,C_2,\beta)$ is revealed to be $\beta/d$-minimally tail compact.
	Thus, $\mathcal{I}_\phi^p\subset\mathcal{J}_\psi^p$ holds with some modification of constants.
\end{rem}


\section{Proofs}

\subsection{Proof of Theorem \ref{gen_eval}}\label{ap-2}
	(mainly following the original proof) 
	First, we evaluate the value of
	\begin{align*}
		D:=\left\lvert\mathrm{E}\left[\frac1n\sum_{i=1}^n
			(\hat{f}(X_i)-f^\circ(X_i))^2
		\right]-R(\hat{f}, f^\circ)\right\rvert.
	\end{align*}
	Let $X_1', \ldots, X_n'$ be i.i.d. random variables generated to be independent of
	$(X_i, Y_i)_{i=1}^n$.
	Then we have
	\begin{align*}
		R(\hat{f}, f^\circ)=\frac1n\sum_{i=1}^n\mathrm{E}\left[
			(\hat{f}(X_i')-f^\circ(X_i'))^2
		\right],
	\end{align*}
	and so we obtain
	\begin{align*}
		D&=\left|
			\mathrm{E}\left[
			\frac1n\sum_{i=1}^n\left((\hat{f}(X_i)-f^\circ(X_i))^2
			-(\hat{f}(X_i')-f^\circ(X_i'))^2\right)
			\right]
		\right|\\
		&\le\frac1n\mathrm{E}
		\left[\left|
			\sum_{i=1}^n
			\left((\hat{f}(X_i)-f^\circ(X_i))^2
			-(\hat{f}(X_i')-f^\circ(X_i'))^2\right)
			\right|\right].
	\end{align*}
	Here, let $G_\delta=\{f_1, \ldots, f_N\}$ be a $\delta$-covering of $\F$
	with the minimum cardinality in the $L^\infty$ metric.
	Notice that $\log N \ge 1$.
	If we define $g_j(x, x'):=(f_j(x)-f^\circ(x))^2-(f_j(x')-f^\circ(x'))^2$ and
	a random variable $J$ taking values in $\{1,\ldots, N\}$ such that
	$\|\hat{f}-f_J\|_{L^\infty}\le\delta$,
	we have
	\begin{align}
		D\le\frac1n\mathrm{E}\left[
			\left|
				\sum_{i=1}^n g_J(X_i, X_i')
			\right|
		\right]+8F\delta.\label{eq:g1}
	\end{align}
	In the above evaluation, we have used the inequality
	\begin{align*}
		\left|(\hat{f}(x)-f^\circ(x))^2-(f_J(x)-f^\circ(x))^2\right|
		=\left|\hat{f}(x)-f_J(x)\right|\left|\hat{f}(x)+f_J(x)-2f^\circ(x)\right|
		\le 4F\delta.
	\end{align*}
	Define constants $r_j:=\max\{A, \|f_j-f^\circ\|_{L^2}\}$ ($j=1,\ldots, N$) and
	a random variable
	\[
		T:=\max_{1\le j\le N}\left|\sum_{i=1}^n\frac{g_j(X_i, X_i')}{r_j}\right|,
	\]
	where $A>0$ is a deterministic quantity fixed afterward.
	Then, because of (\ref{eq:g1}), we have
	\begin{align}
		D\le\frac1n\mathrm{E}[r_JT]+8F\delta
		\le\frac1n\sqrt{\mathrm{E}[r_J^2]\mathrm{E}[T^2]}+8F\delta
		\le \frac12\mathrm{E}[r_J^2]+\frac1{2n^2}\mathrm{E}[T^2]+8F\delta
		\label{eq:g2}
	\end{align}
	by the Cauchy--Schwarz inequality and the AM-GM inequality.
	Here, by the definition of $J$, $\mathrm{E}[r_J^2]$ can be evaluated as follows:
	\begin{align}
		\mathrm{E}[r_J^2]
		\le A^2+\mathrm{E}\left[
			\|f_J-f^\circ\|_{L^2}^2
		\right]
		\le A^2+\mathrm{E}\left[\|\hat{f}-f^\circ\|_{L^2}^2\right]+4F\delta
		=R(\hat{f}, f^\circ)+A^2+4F\delta.\label{eq:g3}
	\end{align}
	Because of the independence of the defined random variables, 
	\begin{align*}
		\mathrm{E}\left[
			\left(
				\sum_{i=1}^n\frac{g_j(X_i, X_i')}{r_j}
			\right)^2
		\right]
		&=\sum_{i=1}^n
		\mathrm{E}\left[\left(
			\frac{g_j(X_i, X_i')}{r_j}
		\right)^2\right]\\
		&=\sum_{i=1}^n
		\biggl(\mathrm{E}\left[
			\frac{(f_j(X_i)-f^\circ(X_i))^4}{r_j^2}
		\right]+\mathrm{E}\left[
			\frac{(f_j(X_i')-f^\circ(X_i'))^4}{r_j^2}
		\right]
		\biggr)\\
		&\le  2F^2n
	\end{align*}
	holds, where we have used the fact that each $g_j(X_i, X_i')$ is centered.
	Then, using Bernstein's inequality, we have, in terms of 
	$r:=\min_{1\le j\le N} r_j$,
	\begin{align*}
		\mathrm{P}(T^2\ge t)
		=\mathrm{P}(T\ge\sqrt{t})
		\le2N\exp\left(
			-\frac{t}{2F^2\left(2n+\frac{\sqrt{t}}{3r}\right)}
		\right),\quad t\ge0.
	\end{align*}
	Let us evaluate $\mathrm{E}[T^2]$.
	For arbitrary $t_0>0$,
	it holds that
	\begin{align*}
		\mathrm{E}[T^2]&=\int_0^\infty P(T^2\ge t)\dd t\\
		&\le t_0+\int_{t_0}^\infty P(T^2\ge t) \dd t\\
		&\le t_0+2N\int_{t_0}^\infty \exp\left(
			-\frac{t}{8F^2n}
		\right)\dd t+2N\int_{t_0}^\infty \exp\left(
		-\frac{3r\sqrt{t}}{4F^2} \right)\dd t.
	\end{align*}
	We compute the values of these two integrals in terms of $t_0$:
	\begin{align*}
		\int_{t_0}^\infty\exp\left(-\frac{t}{8F^2n}\right)\dd t
		&=\left[-8F^2n\exp\left(-\frac{t}{8F^2n}\right)\right]_{t_0}^\infty
		=8F^2n\exp\left(-\frac{t_0}{8F^2n}\right),\\
		\int_{t_0}^\infty \exp\left(
		-\frac{3r\sqrt{t}}{4F^2} \right)\dd t
		&=\int_{t_0}^\infty\exp(-a\sqrt{t})\dd t\quad 
		\tag{$a:=3r/4F^2$}\\
		&=\left[
			-\frac{2(a\sqrt{t}+1)}{a^2}\exp(-a\sqrt{t})
		\right]_{t_0}^\infty\\
		&=\frac{8F^2\sqrt{t_0}}{3r}\exp\left(
		-\frac{3r\sqrt{t_0}}{4F^2} \right)
		+\frac{32F^2}{9r^2}\exp\left(
		-\frac{3r\sqrt{t_0}}{4F^2} \right).
	\end{align*}
	Now we determine $A=\sqrt{t_0}/6n$.
	Since we have $r\ge A=\sqrt{t_0}/6n$,
	\begin{align*}
		\mathrm{E}[T^2]
		&\le
		t_0+2N\left(
			8F^2n+16F^2n+\frac{128F^2n^2}{t_0}
		\right)\exp\left(-\frac{t_0}{8F^2n}\right)\\
		&\le
		t_0+16NF^2n\left(3+\frac{16n}{t_0}\right)\exp\left(-\frac{t_0}{8F^2n}\right)
	\end{align*}
	holds.
	Letting $t_0=8F^2n\log N$, the above evaluation can be rewritten as
	\begin{align}
		\mathrm{E}[T^2]
		\le 8F^2n\left(\log N+6+\frac{2}{F^2\log N}\right).\label{eq:g4}
	\end{align}
	Finally, we combine (\ref{eq:g2}), (\ref{eq:g3}), (\ref{eq:g4}), and
	$\displaystyle A^2=\frac{2F^2\log N}{9n}$ to obtain
	\begin{align*}
		D&\le \left( \frac12R(\hat{f}, f^\circ)+\frac12A^2+2F\delta \right)
		+\frac{4F^2}n\left(\log N+6+\frac2{F^2\log N}\right)
		+8F\delta\\
		&\le \frac12R(\hat{f}, f^\circ)+\frac{F^2}n\left(\frac{37}9\log N+32\right) +10F\delta,
	\end{align*}
	where we have used the fact that $\log N\ge 1$.
	Thus, we obtain the evaluation
	\begin{align}
		R(\hat{f}, f^\circ)
		\le
		2\mathrm{E}\left[\frac1n\sum_{i=1}^n
			(\hat{f}(X_i)-f^\circ(X_i))^2
		\right]
		+\frac{2F^2}n\left(\frac{37}9\log N+32\right) +20F\delta.
		\label{eq:g5}
	\end{align}
	
	Next, we evaluate the quantity
	\begin{align}
		\hat{R}:=\mathrm{E}\left[\frac1n\sum_{i=1}^n
			(\hat{f}(X_i)-f^\circ(X_i))^2
		\right].
		\label{eq:g_rhat}
	\end{align}
	Since $\hat{f}$ is an empirical risk minimizer,
	for arbitrary $f\in\F$,
	\[
		\mathrm{E}
		\left[\frac1n\sum_{i=1}^n
			(\hat{f}(X_i)-Y_i)^2
		\right]
		\le\mathrm{E}
		\left[\frac1n\sum_{i=1}^n
			(f(X_i)-Y_i)^2
		\right]
	\]
	holds.
	As $Y_i=f^\circ(X_i)+\xi_i$,
	we have
	\begin{align*}
		&\mathrm{E}\left[
			(f(X_i)-Y_i)^2
		\right]-
		\mathrm{E}\left[
			(\hat{f}(X_i)-Y_i)^2
		\right]\\
		&=\mathrm{E}\left[
			(f(X_i)-f^\circ(X_i))^2
		\right]
		-2\mathrm{E}\left[\xi_if(X_i)\right]-\mathrm{E}\left[
			(\hat{f}(X_i)-f^\circ(X_i))^2
		\right]
		+2\mathrm{E}\left[\xi_i\hat{f}(X_i)\right]\\
		&=\left(
			\|f-f^\circ\|_{L^2}^2
			+2\mathrm{E}\left[\xi_i\hat{f}(X_i)\right]
		\right)-\mathrm{E}\left[
			(\hat{f}(X_i)-f^\circ(X_i))^2
		\right].
	\end{align*}
	Here we have used the fact that
	\[
		\mathrm{E}[\xi_if(X_i)]
		=\mathrm{E}[\xi_i]\mathrm{E}[f(X_i)]=0
	\]
	holds because of the independence between $\xi_i$ and $X_i$ and the fact that both $\xi_i$ and
	$f(X_i)$ have a finite $L^1$ norm.
	Thus we have
	\begin{align}
		\hat{R}
		\le \|f-f^\circ\|_{L^2}^2+\mathrm{E}
		\left[\frac2n\sum_{i=1}^n\xi_i\hat{f}(X_i)\right].
		\label{eq:g6}
	\end{align}
	Let us evaluate the second term on the right-hand side.
	\begin{align}
		\left|\mathrm{E}\left[
			\frac2n\sum_{i=1}^n\xi_i\hat{f}(X_i)
		\right]\right|
		&=
		\left|\mathrm{E}\left[
			\frac2n\sum_{i=1}^n\xi_i(\hat{f}(X_i)-f^\circ(X_i))
		\right]\right|\nonumber\\
		&\le\frac{2\delta}n\mathrm{E}\left[
			\sum_{i=1}^n|\xi_i|
		\right]+
		\left|\mathrm{E}\left[
			\frac2n\sum_{i=1}^n\xi_i(f_J(X_i)-f^\circ(X_i))
		\right]\right|.
		\label{eq:g7}
	\end{align}
	Here, the first term is upper-bounded by applying the Cauchy--Schwarz inequality:
	\begin{align}
		\frac{2\delta}n\mathrm{E}\left[\sum_{i=1}^n|\xi_i|\right]
		\le\frac{2\delta}n\mathrm{E}\left[n^{1/2}\left(\sum_{i=1}^n\xi_i^2\right)^{1/2}\right]
		\le\frac{2\delta}{\sqrt{n}}\mathrm{E}\left[\sum_{i=1}^n\xi_i^2\right]^{1/2}
		=2\sigma\delta.
		\label{eq:g8}
	\end{align}
	Let $\ve_j$ $(j=1,\ldots, N)$ be random variables defined as
	\[
		\ve_j:=\frac{\sum_{i=1}^n\xi_i(f_j(X_i)-f^\circ(X_i))
		}{\bigl(\sum_{i=1}^n(f_j(X_i)-f^\circ(X_i))^2\bigr)^{1/2}},
	\]
	where $\ve_j:=0$ if the denominator equals $0$.
	Notice that each $\ve_j$ follows a centered Gaussian distribution with variance $\sigma^2$
	(conditional on $X_1,\ldots,X_n$).
	Now we have, using the Cauchy--Schwarz inequality and the AM-GM inequality,
	\begin{align}
		\left|\mathrm{E}\left[
			\frac2n\sum_{i=1}^n\xi_i(f_J(X_i)-f^\circ(X_i))
		\right]\right|
		&=
		\frac2n\left|\mathrm{E}\left[
			\left(\sum_{i=1}^n(f_J(X_i)-f^\circ(X_i))^2\right)^{1/2}\ve_J
		\right]\right|\nonumber\\
		&\le\frac2{\sqrt{n}}\mathrm{E}\left[
			\frac1n\sum_{i=1}^n(f_J(X_i)-f^\circ(X_i))^2
		\right]^{1/2}\!\!\!\mathrm{E}\left[\max_{1\le j\le N}\ve_j^2\right]^{1/2}\nonumber\\
		&\le\frac2{\sqrt{n}}\sqrt{\hat{R}+4F\delta}\
		\mathrm{E}\left[\max_{1\le j\le N}\ve_j^2\right]^{1/2}\nonumber\\
		&\le\frac12(\hat{R}+4F\delta)+\frac2n
		\mathrm{E}\left[\max_{1\le j\le N}\ve_j^2\right].
		\label{eq:g9}
	\end{align}
	By a similar argument as given in the proof of \acite[Theorem 7.47]{cmu},
	for any $0<t<1/2\sigma^2$,
	\begin{align*}
		\exp\left(
			t\mathrm{E}\left[\max_{1\le j\le N}\ve_j^2\right]
		\right)
		&\le \mathrm{E}\left[
			\max_{1\le j\le N}\exp\left(t \ve_j^2\right)
		\right]
		\tag{by Jensen's inequality}
		\\
		&\le
		N \mathrm{E}\left[
			\exp\left(t\ve_1^2\right)
		\right]\\
		&=\frac{N}{\sqrt{2\pi\sigma^2}}\int_{-\infty}^\infty
		e^{tx^2}e^{-\frac{x^2}{2\sigma^2}}\dd x
		=\frac{N}{\sqrt{1-2\sigma^2 t}}
	\end{align*}
	holds. Therefore we have, by determining $t=1/4\sigma^2$,
	\begin{align}
		\mathrm{E}\left[\max_{1\le j\le N}\ve_j^2\right]
		\le 4\sigma^2\log(\sqrt{2}N)\le 4\sigma^2(\log N+1).
		\label{eq:g10}
	\end{align}
	Now we combine (\ref{eq:g6})--(\ref{eq:g10}) to obtain
	\[
		\hat{R}\le\|f-f^\circ\|_{L^2}^2+2\sigma\delta+\frac12(\hat{R}+4F\delta)
		+\frac{8\sigma^2}n(\log N+1),
	\]
	and so
	\begin{align}
		\hat{R}\le2\|f-f^\circ\|_{L^2}^2+4(\sigma+F)\delta+\frac{16\sigma^2}n(\log N+1)
		\label{eq:g11}
	\end{align}
	holds.
	
	Finally, since $f$ is an arbitrary element of $\F$,
	we combine (\ref{eq:g5}), (\ref{eq:g_rhat}), and (\ref{eq:g11}) to have
	\begin{align*}
		R(\hat{f}, f^\circ)
		&\le 4\inf_{f\in\F}\|f-f^\circ\|_{L^2}^2
		+
		\frac1n\left(
			\left(
				\frac{37}9F^2+32\sigma^2
			\right)\log N
			+32(F^2+\sigma^2)
		\right)+(18F+8\sigma)\delta,
	\end{align*}
	and this leads to the conclusion.


\subsection{Proof of Theorem \ref{thm:l2}}\label{convproof}

We first prove the following lemma.

\begin{lem}\label{Fatou}
	For any affine estimator $\hat{f}$ and a sequence
	$f^\circ_1, f^\circ_2,\ldots\in L^2([0, 1]^d)$
	convergent to $f_\infty^\circ\in L^2([0, 1]^d)$ almost everywhere,
	\[
		R(\hat{f}, f_\infty^\circ)\le\sup_{m\ge1}R(\hat{f}, f_m^\circ)
	\]
	holds.
\end{lem}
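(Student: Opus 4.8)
The plan is to expand the $L^2$ risk as a double integral (over the sample randomness and over $[0,1]^d$) and apply Fatou's lemma in the space variable. Recall that an affine estimator has the form $\hat f(x)=\phi(x;X^n)+\sum_{i=1}^n Y_i\phi_i(x;X^n)$, where here $Y_i=f^\circ(X_i)+\xi_i$ depends on the chosen true function $f^\circ$. First I would fix the randomness $(X^n,\xi^n)$ and write, for each true function $g$,
\[
\left(\hat f(x)\Big|_{f^\circ=g}-g(x)\right)^2
=\left(\phi(x;X^n)+\sum_{i=1}^n\xi_i\phi_i(x;X^n)+\sum_{i=1}^n g(X_i)\phi_i(x;X^n)-g(x)\right)^2.
\]
Denote the right-hand side by $h_g(x)$; as a function of the real numbers $g(X_1),\dots,g(X_n)$ and the value $g(x)$, this is a (continuous) quadratic, so pointwise convergence $f_m^\circ\to f_\infty^\circ$ almost everywhere gives $h_{f_m^\circ}(x)\to h_{f_\infty^\circ}(x)$ for a.e.\ $x$ (using that $f_m^\circ(X_i)\to f_\infty^\circ(X_i)$ for each fixed $i$, which holds almost surely in $X^n$ since the convergence is a.e.\ and $X_i$ has a density).

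Next I would apply Fatou's lemma in $x$, then take expectation over $(X^n,\xi^n)$ and use Fubini (the integrands are nonnegative, so this is justified, exactly as in the proof of Theorem \ref{thm:l2}):
\[
R(\hat f, f_\infty^\circ)
=\mathrm{E}\!\left[\int_{[0,1]^d} h_{f_\infty^\circ}(x)\,\dd x\right]
\le \mathrm{E}\!\left[\liminf_{m\to\infty}\int_{[0,1]^d} h_{f_m^\circ}(x)\,\dd x\right]
\le \liminf_{m\to\infty}\mathrm{E}\!\left[\int_{[0,1]^d} h_{f_m^\circ}(x)\,\dd x\right]
=\liminf_{m\to\infty} R(\hat f, f_m^\circ),
\]
where the second inequality is Fatou again, now in the probability space. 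Since $\liminf_{m} R(\hat f, f_m^\circ)\le \sup_{m\ge1} R(\hat f, f_m^\circ)$, the claim follows.

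The main obstacle is bookkeeping the measure-theoretic fine print rather than any real difficulty: one must check that the a.e.\ convergence $f_m^\circ\to f_\infty^\circ$ on $[0,1]^d$ transfers to a.s.\ convergence of $f_m^\circ(X_i)\to f_\infty^\circ(X_i)$ (immediate, since $X_i$ is uniform, hence absolutely continuous w.r.t.\ Lebesgue measure, so a null set in $x$ is a null event for $X_i$), and that all the integrability conditions in the definition of an affine estimator — namely $\mathrm{E}[\|\phi_i(\cdot;X^n)\|_{L^2}^2]<\infty$ and the analogous condition on $\phi$ — make each $R(\hat f, f_m^\circ)$ and $R(\hat f, f_\infty^\circ)$ well-defined (possibly $+\infty$, in which case the inequality is trivial). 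No uniform integrability or domination is needed precisely because we only want a one-sided (lower-semicontinuity) estimate, which is exactly what Fatou delivers.
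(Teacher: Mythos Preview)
Your proposal is correct and follows essentially the same approach as the paper's proof, which simply invokes Fatou's lemma together with the Fubini identity (\ref{eq:l1}) and the explicit affine expansion (\ref{eq:l2}). You have filled in exactly the measure-theoretic details (a.e.\ convergence at $x$ and a.s.\ convergence at the design points $X_i$, two applications of Fatou) that the paper leaves implicit in its one-line proof.
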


\begin{proof}
	The conclusion follows immediately from Fatou's lemma and Eqs.\ (\ref{eq:l1}) and (\ref{eq:l2}).
\end{proof}

\begin{rem}
	In the proof of Lemma \ref{Fatou},
	we have not used the linearity of $\hat{f}$.
	Indeed, if $\hat{f}(X)-f^\circ_m(X)$ is convergent to $\hat{f}(X)-f^\circ_\infty(X)$
	with probability $1$, then we have the same conclusion (where $X$ is a uniformly distributed
	random variable independent of other observed random variables).
	Hence, Lemma \ref{Fatou} is applicable to
	a broader class of estimators, such as estimators continuous with respect to observed data
	in some metric.
\end{rem}

The assertion of Theorem \ref{thm:l2} is now clear from Eq. (\ref{l_add}), Lemma \ref{Fatou},
and the fact that a sequence convergent to a function in $L^2$ has
a subsequence that is convergent to the same function almost everywhere.


\subsection{Proof of Lemma \ref{lem:ll}}\label{rev_lem:ll}

The following lemma is a well-known property of functions of bounded total variation.

\begin{lem}{\rm\cite{stein2005real}}\label{cor:tv1}
	For each function $f:[0, 1]\to\R$ with $TV^f([0,1])<\infty$,
	there exist increasing $f^+, f^-:[0,1]\to\R$ such that
	\begin{align*}
		f=f^+-f^-+f(0),\quad
		f^+(0)=f^-(0)=0,\quad
		f^+(t)+f^-(t)=TV^f([0, t]), \quad 0\le t\le 1,
	\end{align*}
	holds.
\end{lem}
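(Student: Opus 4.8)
The plan is to write the decomposition down explicitly in terms of the \emph{total-variation function} $v(t):=\mathrm{TV}^f([0,t])$, $t\in[0,1]$, and then to verify the three asserted identities together with the monotonicity of the two pieces by elementary estimates. First note that $v$ is finite and well behaved: appending the point $1$ to any partition of $[0,t]$ yields a partition of $[0,1]$ whose variation sum is at least as large, so $v(t)\le\mathrm{TV}^f([0,1])<\infty$; the same refinement remark (appending $t$ to a partition of $[0,s]$) shows $v$ is non-decreasing, and clearly $v(0)=0$.

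The one genuinely non-routine ingredient, and the step I expect to be the main obstacle, is the additivity of total variation over adjacent intervals: for $0\le s\le t\le1$,
\[
	v(t)=v(s)+\mathrm{TV}^f([s,t]).
\]
The inequality ``$\ge$'' follows by concatenating an arbitrary partition of $[0,s]$ with an arbitrary partition of $[s,t]$ and taking suprema independently over the two; the inequality ``$\le$'' follows because inserting the point $s$ into an arbitrary partition of $[0,t]$ cannot decrease the associated variation sum (triangle inequality), while it splits that partition into one of $[0,s]$ and one of $[s,t]$.

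Granting this, I would set $f^+(t):=\frac12\bigl(v(t)+f(t)-f(0)\bigr)$ and $f^-(t):=\frac12\bigl(v(t)-f(t)+f(0)\bigr)$. Then $f^+-f^-=f-f(0)$, i.e., $f=f^+-f^-+f(0)$; also $f^+(0)=f^-(0)=\frac12 v(0)=0$; and $f^+(t)+f^-(t)=v(t)=\mathrm{TV}^f([0,t])$, which are exactly the three required equalities. For the monotonicity, for $0\le s\le t\le1$ the additivity gives
\[
	f^+(t)-f^+(s)=\frac12\bigl(\mathrm{TV}^f([s,t])+(f(t)-f(s))\bigr),\qquad
	f^-(t)-f^-(s)=\frac12\bigl(\mathrm{TV}^f([s,t])-(f(t)-f(s))\bigr),
\]
and since the two-point partition $\{s,t\}$ already shows $|f(t)-f(s)|\le\mathrm{TV}^f([s,t])$, both right-hand sides are non-negative; hence $f^+$ and $f^-$ are non-decreasing, which completes the argument.
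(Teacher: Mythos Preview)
Your argument is correct: the explicit definitions $f^\pm(t)=\tfrac12\bigl(v(t)\pm(f(t)-f(0))\bigr)$ together with the additivity $v(t)=v(s)+\mathrm{TV}^f([s,t])$ give all three identities, and the two-point estimate $|f(t)-f(s)|\le\mathrm{TV}^f([s,t])$ secures monotonicity. Note, however, that the paper does not supply its own proof of this lemma; it simply cites it as a standard fact from \cite{stein2005real} and moves on, so there is nothing to compare against beyond observing that your proof is the classical Jordan-decomposition argument one finds in that reference.
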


By using this lemma, Lemma \ref{lem:ll} can be proven as follows.

	Since $J_1(C)\subset J_k(C)$ holds for each $k$,
	it suffices to show the assertion for $k=1$.
	By the definition of convex hull, we have
	\begin{align*}
		\conv(J_1(C))
		=\left\{
			a_0+\sum_{i=1}^ka_i1_{[t_i, 1]}
			\,\middle|\,
			t_i\in(0, 1],\ |a_0|\le C,\ 
			\sum_{i=1}^k|a_i|\le C,\ k\ge1
		\right\}
		=\bigcup_{k=1}^\infty J_k(C).
	\end{align*}
	It is obvious that $J_k(C)\subset BV(C)$ for each $k$.
	Thus, we have only to show that for each $f\in BV(C)$ and $\ve>0$,
	there exist some $k\ge1$ and $f_k\in J_k(C)$ such that
	$\|f_k-f\|_{L^2}\le\ve$ holds.
	
	Let $f\in BV(C)$, and take $f^+$ and $f^-$ satisfying the condition of
	Lemma \ref{cor:tv1}.
	Then $f$ can be written as $f=a_0+f^+-f^-$, where $a_0:=f(0)$ is a constant.
	Let $g:[0, 1]\to\R$ be an increasing function satisfying $g(0)=0$,
	and define
	\[
		g_k:=\sum_{i=1}^{k}\left(g\left(\frac{i}{k}\right)
		-g\left(\frac{i-1}{k}\right)\right)1_{[i/k, 1]}.
	\]
	Then we have $g_k\in J_k(g(1))$ and $g_k(t)=g(i/k)$ for $t\in[i/k, (i+1)/k)$, and so
	\begin{align*}
		\int_0^1(g_k(t)-g(t))^2\dd t
		\le\sum_{i=0}^{k-1}\frac1k
		\left(g\left(\frac{i+1}k\right)-g\left(\frac{i}k\right)\right)^2
		\le \frac{g(1)}k\sum_{i=0}^{k-1}
		\left(g\left(\frac{i+1}k\right)-g\left(\frac{i}k\right)\right)
		=\frac{g(1)^2}k
	\end{align*}
	holds because $g$ is increasing.
	Take $f^+_k, f^-_k$ similarly, and $f_k:=a_0+f^+_k-f^-_k$ satisfies
	\[
		f_k\in J_{2k}(C),
		\quad
		\|f_k-f\|_{L^2}^2\le \frac{2C^2}k,
	\]
	and the proof is complete.
	We can of course take $f_k$ directly only from $f$,
	but we have chosen an easier argument.


\subsection{Proof of Theorem \ref{thm:weak_lower_bound}}\label{ap-3}
	Since a smaller set makes the minimax risk smaller,
	it suffices to consider
	\[
		\mathcal{I}:=\{c\phi\mid |c|\le C\}\subset\mathcal{I}_\Phi^0.
	\]
	For simplicity, assume $C\ge1$
	(otherwise, take the functions and constants appearing below to be smaller at that rate).
	Define for each $n=1,2,\ldots$
	\[
		f^+_n:=\frac1{2\sqrt{n}}\phi, \quad
		f^-_n:=-\frac1{2\sqrt{n}}\phi.
	\]
	Then, by applying the argument appearing in \acite{yang1999information},
	we have
	\begin{align}
		\inf_{\hat{f}}\sup_{f\in\mathcal{I}}
		\mathrm{P}_f\left(\|f-\hat{f}\|_{L^2}\ge\frac1{2\sqrt{n}}\right)
		&\ge \inf_{\hat{f}}\sup_{f\in\{f^+_n, f^-_n\}}
		\mathrm{P}_f\left(\|f-\hat{f}\|_{L^2}\ge\frac1{2\sqrt{n}}\right)\nonumber\\
		&\ge\inf_{\hat{f}}\sup_{f\in\{f^+_n, f^-_n\}}
		\mathrm{P}_f(\tilde{f}\ne f),
		\label{eq:sp6}
	\end{align}
	where $\tilde{f}$ is the closer of the two ($f^+_n, f^-_n$) to $\hat{f}$.
	Following the argument in \acite[Proposition 2.1]{tsybakov2008}, we have
	for any $t>0$
	\begin{align}
		\mathrm{P}_{f^+_n}\left(\tilde{f}\ne f^+_n\right)
		&=\mathrm{E}_{f^-_n}\left[
			1_{\{ \tilde{f}=f^-_n \}}(Z^n)
		\frac{\dd\mathrm{P}_{f^+_n}}{\dd\mathrm{P}_{f^-_n}}(Z^n)\right]\nonumber\\
		&\ge t\mathrm{P}_{f^-_n}\left(
			\tilde{f}=f^-_n,\ \frac{\dd\mathrm{P}_{f^+_n}}{\dd\mathrm{P}_{f^-_n}}(Z^n)\ge t\
		\right)\nonumber\\
		&\ge t\left(
			\mathrm{P}_{f^-_n}(\tilde{f}=f^-_n)
			-\mathrm{P}_{f^-_n}\left(
			\frac{\dd\mathrm{P}_{f^+_n}}{\dd\mathrm{P}_{f^-_n}}(Z^n)<t
		\right)\right).
		\label{eq:sp7}
	\end{align}
	Here, $Z^n$ denotes the i.i.d. sequence $(X_i, Y_i)_{i=1}^n$, and
	$\dd\mathrm{P}_{f^+_n}/\dd\mathrm{P}_{f^-_n}$ represents the Radon--Nikodym derivative.
	Then we have, by (\ref{eq:sp6}) and (\ref{eq:sp7}),
	\begin{align}
		\inf_{\hat{f}}\sup_{f\in\mathcal{I}}\mathrm{P}_f\left(
		\|f-\hat{f}\|_{L^2}\ge\frac1{2\sqrt{n}}\right)
		&\ge\frac1{1+t}\left( t\mathrm{P}_{f^-_n}\left(\tilde{f}\ne f^-_n\right)
			+\mathrm{P}_{f^+_n}\left(\tilde{f}\ne f^+_n\right)
		\right)\nonumber\\
		&\ge\frac{t}{1+t}\left(1-\mathrm{P}_{f^-_n}\left(
			\frac{\dd\mathrm{P}_{f^+_n}}{\dd\mathrm{P}_{f^-_n}}(Z^n)<t
		\right)\right)\nonumber\\
		&=\frac{t}{1+t}\mathrm{P}_{f^-_n}\left(
			\frac{\dd\mathrm{P}_{f^+_n}}{\dd\mathrm{P}_{f^-_n}}(Z^n)\ge t
		\right).
		\label{eq:sp8}
	\end{align}
	
	When $f^\circ=f^-_n$ holds,
	the Radon--Nikodym derivative appearing in (\ref{eq:sp8}) can be explicitly written as
	\begin{align}
		\frac{\dd\mathrm{P}_{f^+_n}}{\dd\mathrm{P}_{f^-_n}}(Z^n)
		&=\exp\left(
			\frac1{2\sigma^2}\sum_{i=1}^n(Y_i-f^-_n(X_i))^2
			-\frac1{2\sigma^2}\sum_{i=1}^n(Y_i-f^+_n(X_i))^2
		\right)\nonumber\\
		&=\exp\left(
		\frac1{2\sigma^2}\sum_{i=1}^n\bigl(\xi_i^2-(\xi_i+f^-_n(X_i)-f^+_n(X_i))^2\bigr)
		\right)\nonumber\\
		&=\exp\left(
		\frac2{2\sigma^2\sqrt{n}}\sum_{i=1}^n\xi_i\phi(X_i)
		-\frac1{2\sigma^2n}\sum_{i=1}^n\phi(X_i)^2
		\right).
		\label{eq:sp9}
	\end{align}
	Let us consider the right-hand side of (\ref{eq:sp9}).
	First, $\sum_{i=1}^n\xi_i\phi_1(X_i)$ is a sum of independent symmetric random variables,
	and so the sum itself is also symmetric ($X$ is {\it symmetric} if $-X$ has
	the same distribution as $X$), and thus we have
	$\mathrm{P}\left(\sum_{i=1}^n\xi_i\phi_1(X_i)\ge0\right)\ge1/2$.
	Second, for any $s>0$, by Markov's inequality,
	\[
		\mathrm{P}\left(\frac1{2\sigma^2n}\sum_{i=1}^n\phi(X_i)^2\ge s\right)
		\le \frac1s\mathrm{E}\left[
			\frac1{2\sigma^2n}\sum_{i=1}^n\phi(X_i)^2
		\right]
		=\frac1{2\sigma^2s}
	\]
	holds.
	We determine $s=2/\sigma^2$ to obtain the evaluation
	\[
		\mathrm{P}\left(-\frac1{2\sigma^2n}\sum_{i=1}^n\phi_1(X_i)^2\ge -\frac2{\sigma^2}\right)
		\ge \frac34.
	\]
	Finally, we have
	\begin{align}
		\mathrm{P}
		\left(
		\frac2{2\sigma^2\sqrt{n}}\sum_{i=1}^n\xi_i\phi_1(X_i)
		-\frac1{2\sigma^2n}\sum_{i=1}^n\phi(X_i)^2\ge -\frac2{\sigma^2}
		\right)\ge\frac14.
		\label{eq:sp10}
	\end{align}
	
	By (\ref{eq:sp8})--(\ref{eq:sp10}) and letting $t=e^{-2/\sigma^2}$,
	we have
	\begin{align*}
		\inf_{\hat{f}}\sup_{f\in\mathcal{I}}\mathrm{P}_f\left(
		\|f-\hat{f}\|_{L^2}\ge\frac1{2\sqrt{n}}\right)
		\ge\frac{e^{-2/\sigma^2}}{1+e^{-2/\sigma^2}}\cdot\frac14
		\ge\frac18e^{-2/\sigma^2},
	\end{align*}
	and so we finally obtain the evaluation
	\[
		\inf_{\hat{f}}\sup_{f\in\mathcal{I}}\mathrm{E}_f\left[
		\|f-\hat{f}\|_{L^2}^2\right]\ge\frac{e^{-2/\sigma^2}}{32n},
	\]
	and this is the desired result.


\subsection{Proof of Lemma \ref{lem:sp1}}\label{lem:rev_up_low}

	First, we prove the lower-bound part,
	partially using the proofs in \acite{donoho1996unconditional}.
	When we consider the covering entropy,
	we have only to consider the coefficients, as $\phi=(\phi_i)_{i=1}^\infty$
	is an orthonormal set.
	Thus, define $\mathcal{A}\subset\ell^2$ as
	\begin{align}
		\mathcal{A}:=
		\left\{
			a\in\ell^2
			\,\middle|\,
			\|a\|_{w\ell^p}\le C_1,\ 
			\sum_{i=m+1}^\infty a_i^2\le C_2m^{-\beta},\ 
			m=1,2,\ldots
		\right\}.
		\label{eq:spa}
	\end{align}
	Then it suffices to evaluate $V(\ve):=V_{(\mathcal{A}, \|\cdot\|_{\ell^2})}(\ve)$.
	
	For each $k=1,2,\ldots$, let $a^{(k)}\in\ell^2$ be defined as
	\[
		a^{(k)}:=(
			\underbrace{C_1k^{-1/p}, \ldots, C_1k^{-1/p}}_{k\ \text{times}},
			0, 0, \ldots).
	\]
	Then $\|a^{(k)}\|_{w\ell^p}=C_1$ holds.
	Let us consider the second condition.
	For $1\le m\le k$,
	\begin{align}
		m^\beta\sum_{i=m+1}^\infty (a^{(k)}_i)^2
		=m^\beta(k-m)(C_1k^{-1/p})^2
		\label{eq:sp2}
	\end{align}
	holds.
	Since $x^\beta(k-x)$ is maximized over $x\in[0, k]$ at $x=\frac{\beta}{1+\beta}k$,
	the left-hand side of (\ref{eq:sp2}) is bounded independent of $m$ as
	\begin{align*}
		\sup_{m\ge1}m^\beta\sum_{i=m+1}^\infty (a^{(k)}_i)^2
		\le
		C_1^2\frac{\beta^\beta}{(1+\beta)^{1+\beta}}k^{1+\beta-2/p}
		\le
		C_1^2\frac{\beta^\beta}{(1+\beta)^{1+\beta}},
	\end{align*}
	where we have used the assumption $\beta\le2\alpha$ for the latter inequality.
	If we define a constant
	\[
		C:=\min\left\{1, \frac{C_2^{1/2}(1+\beta)^{(1+\beta)/2}}{C_1\beta^{\beta/2}}\right\},
	\]
	then each $Ca^{(k)}$ is an element of $\mathcal{A}$.
	For each $k$, let us consider a hyperrectangle defined as
	\[
		\mathcal{A}_k:=\{a\in\ell^2\mid |a_i|\le a^{(k)}_i,\ i=1,2,\ldots\}.
	\]
	Obviously, each $\mathcal{A}_k$ is a subset of $\mathcal{A}$ (this actually is based on
	the fact that $\phi$ is an unconditional basis of $\mathcal{I}_\phi^p$),
	and so we have $V(\ve)\ge V_{(\mathcal{A}_k, \|\cdot\|_{\ell^2})}(\ve)$.
	For each pair of distinct vertices of $\mathcal{A}_k$ (which has $2^k$ vertices),
	the $\ell^2$ distance between the two is at least $CC_1k^{-1/p}$,
	and so, by setting $\delta=k^{-1/p}$ in Lemma \ref{lem:sp2},
	we have, for $A$ appearing in the lemma,
	\[
		V\left(\frac{CC_1}2k^{1/2-1/p}\right)\ge
		V_{(\mathcal{A}_k, \|\cdot\|_{\ell^2})}\left(\frac{CC_1}2k^{1/2-1/p}\right)
		\ge Ak
	\]
	for each $k$.
	If we write $C'=CC_1/2$,
	then we have $V(C'k^{-\alpha})\ge Ak$.
	Therefore, for $\ve\in[C'2^{-(j+1)\alpha}, C'2^{-j\alpha}]$,
	\begin{align*}
		V(\ve)\ge V(C'2^{-j\alpha})
		\ge
		2^jA
		=
		2^{j+1}\frac{A}2\ge\frac{A}2\left(\frac\ve{C'}\right)^{-1/\alpha}
		=c\ve^{-1/\alpha}
	\end{align*}
	holds, where $c:=AC'^{1/\alpha}/2$.
	This evaluation holds only for $0<\ve\le C'$;
	however,
	we have $V(\ve)\ge1$ for $\ve>C'$, and hence $V(\ve)\ge C'^{1/\alpha}\ve^{-1/\alpha}$ holds.
	Thus, we have reached the desired result.

	Then, we deal with the upper-bound part.
	By the same logic as in the proof of lower-bound, it suffices to evaluate
	the metric entropy of $\mathcal{A}$ in (\ref{eq:spa}).
	Let $V(\ve):=V_{(\mathcal{A}, \|\cdot\|_{\ell^2})}(\ve)$ similarly.
	For an arbitrary element $a\in\mathcal{A}$,
	let $i_j$ be the index of the term of $a$ having the $j$-th largest absolute value;
	i.e.,
	\[
		|a_{i_1}|\ge|a_{i_2}|\ge\cdots,
	\]
	which is a permutation of $(|a_i|)_{i=1}^\infty$.
	By (\ref{eq:sp1}) and the definition of $\mathcal{A}$,
	\[
		\sum_{j=k+1}^\infty a_{i_j}^2
		\le\sum_{j=k+1}^\infty C_1j^{-2/p}
		\le\int_k^\infty C_1x^{-2/p}\dd x
		=\frac{C_1}{2\alpha}k^{-2\alpha}
	\]
	holds for each $k$.
	Also, by the second condition of $\mathcal{A}$,
	\[
		\sum_{i\ge k^{2\alpha/\beta}+1}a_i^2
		\le C_2k^{-\beta\cdot \frac{2\alpha}\beta} = C_2k^{-2\alpha}
	\]
	holds.
	Thus, if we define
	$\tilde{a}:=(a_1, \ldots, a_{\lceil k^{2\alpha/\beta}\rceil}, 0, 0, \ldots)$
	and $\tilde{i}_1, \tilde{i}_2, \ldots$ similarly,
	\[
		b:=(b_i)_{i=1}^\infty, \quad
		b_i:=\begin{cases}
			\tilde{a}_i\ (=a_i)& \text{if}\ i\in\{\tilde{i}_1,\ldots, \tilde{i}_k\}\\
			0 & \text{otherwise}
		\end{cases},
	\]
	satisfies $\|a-b\|_{\ell^2}\le\sqrt{C_1/2\alpha+C_2}\cdot k^{-\alpha}$.
	Then its quantization
	\[
		\tilde{b}:=\left(
			\sgn(b_i)\frac{\lfloor
				k^{1/2+\alpha}|b_i|\rfloor}{k^{1/2+\alpha}}
		\right)_{i=1}^\infty
	\]
	satisfies $\|b-\tilde{b}\|_{\ell^2}\le k^{-\alpha}$ as $b$ has at most $k$ nonzero terms.
	Since $|b_i|\le C_1$,
	the number of values possibly taken by $\tilde{b}_i$ is at most
	$2C_1k^{1/2+\alpha}+1$.
	Hence, the logarithm of the number of such $\tilde{b}$ values can be upper-bounded by
	\begin{align}
		\log\left(\binom{\lceil k^{2\alpha/\beta}\rceil}{k}
		(2C_1k^{1/2+\alpha}+1)^k\right)
		&\le \frac{2\alpha}\beta k\log k + \left(\frac12+\alpha\right)k\log k+ k\log(2C_1+1)
		\nonumber\\
		&\le C_0k(\log k+1),
		\label{eq:sp-3}
	\end{align}
	where $C_0>0$ is a constant.
	Since $\|a-\tilde{b}\|_{\ell^2}\le(1+\sqrt{C_1/2\alpha+C_2})k^{-\alpha}$ holds,
	if we take $k\sim\ve^{-1/\alpha}$ in the same way as used in the proof of lower-bound,
	then we reach the conclusion.

\begin{rem}
	In the case in which $\beta\ge2\alpha$ holds,
	we can obtain a more accurate bound by using Stirling's approximation.
	However,
	we can see that such a case no longer requires the concept of weak $\ell^p$ norms,
	or else its conditions are too strong.
	Therefore, we have not treated this case.
\end{rem}


\subsection{Proof of Theorem \ref{linear_not_efficient}}\label{ap-35}
	This proof is a refinement of the proof of Theorem 1 in \acite{zhang2002wavelet}.
	First, we prove the following lemma.
	\begin{lem}\label{lem:kantan}
		Let $\gamma\in (0, 1)$, and let $m$ be a positive integer such that
		$m\le n^\gamma \le 2m$.
		For $k=0,\ldots, m-1$,
		let $A_k$ be the number of $X_1, \ldots, X_n$ contained in $[k/m, (k+1)/m)$.
		Then there exists a constant $c=c(\gamma)>0$ such that
		\[
			\mathrm{P}\left(
				\max_{0\le k\le m-1}A_k\ge \frac{cn}{m}
			\right)\le 2^{-n^{1-\gamma}}
		\]
		holds for any $m, n$ satisfying the condition.
	\end{lem}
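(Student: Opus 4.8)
The plan is a standard Chernoff bound for a single cell count followed by a union bound over the $m$ cells, with the constant $c$ chosen large (depending on $\gamma$) so as to absorb the union-bound factor. The proof is essentially routine; the only place requiring attention is tracking constants so that the bound holds \emph{uniformly} in $n$, not merely asymptotically.

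First I would observe that, since $d=1$ and each $X_i$ is uniform on $[0,1]$, the cell $[k/m,(k+1)/m)$ has probability $1/m$, so $A_k\sim\mathrm{Bin}(n,1/m)$ with mean $\mu:=n/m$. The hypothesis $m\le n^\gamma\le 2m$ gives $n^{1-\gamma}\le\mu\le 2n^{1-\gamma}$; in particular $\mu\ge n^{1-\gamma}$ and $m\le n^\gamma\le n$, the two facts used at the end.

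Next I would bound the single-cell upper tail. Fix a constant $c>e$ (to be pinned down) and set $\ell:=\lceil cn/m\rceil$, so that $\{A_k\ge cn/m\}=\{A_k\ge\ell\}$ since $A_k$ is integer-valued. A union bound over the $\ell$-subsets of $\{1,\dots,n\}$, together with $\binom{n}{\ell}\le(ne/\ell)^\ell$, gives
\[
	\mathrm{P}(A_k\ge\ell)\le\binom{n}{\ell}\left(\frac{1}{m}\right)^\ell\le\left(\frac{ne}{\ell m}\right)^\ell\le\left(\frac{e}{c}\right)^{c\mu},
\]
where the last step uses $\ell m\ge cn$ and the fact that $e/c<1$, so raising to the larger power $\ell\ge c\mu$ only decreases the bound. (The multiplicative Chernoff bound $\mathrm{P}(A_k\ge t\mu)\le e^{-\mu(t\ln t-t+1)}$ with $t=c$ would serve equally well.)

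Finally I would union-bound over $k=0,\dots,m-1$ and choose $c$. Using $\ln m\le\ln n$, $\mu\ge n^{1-\gamma}$, and $\ln c-1>0$,
\[
	\mathrm{P}\!\left(\max_{0\le k\le m-1}A_k\ge\frac{cn}{m}\right)\le m\left(\frac{e}{c}\right)^{c\mu}\le\exp\!\bigl(\ln n-c(\ln c-1)\,n^{1-\gamma}\bigr).
\]
It therefore suffices that $\ln n\le\bigl(c(\ln c-1)-\ln 2\bigr)n^{1-\gamma}$ for every admissible $n$, i.e.\ every integer $n\ge1$. This is the one point that needs care — it must hold uniformly in $n$ — and I would handle it thus: since $c(\ln c-1)\to\infty$, pick $c=c(\gamma)$ with $K:=c(\ln c-1)-\ln 2\ge\frac{1}{1-\gamma}$; then $h(x):=Kx^{1-\gamma}-\ln x$ has $h(1)=K\ge0$ and $h'(x)=x^{-1}\bigl((1-\gamma)Kx^{1-\gamma}-1\bigr)\ge x^{-1}\bigl((1-\gamma)K-1\bigr)\ge0$ for $x\ge1$, so $h\ge0$ on $[1,\infty)$. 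Hence $\ln n\le Kn^{1-\gamma}$ for all $n\ge1$, which makes the right-hand side of the last display at most $\exp(-n^{1-\gamma}\ln 2)=2^{-n^{1-\gamma}}$, as claimed.
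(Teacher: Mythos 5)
Your proof is correct and follows essentially the same route as the paper: a Chernoff-type tail bound for the single cell count (you use the binomial-coefficient/subset-counting form where the paper uses the moment generating function at $t=\log 2$), followed by a union bound over the $m$ cells and a choice of $c=c(\gamma)$ large enough that the factor $m\le n^\gamma$ is absorbed uniformly in $n$ (you via the monotonicity of $Kx^{1-\gamma}-\ln x$, the paper via $\max_{n\ge1}\log n/n^{1-\gamma}<\infty$). No gaps.
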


	\begin{proof}
		For a fixed $k$, $A_k$ can be written as
		$A_k=\sum_{j=1}^n\eta_j$, where $(\eta_j)_{j=1}^n$ is an i.i.d. sequence
		with $\mathrm{P}(\eta_j=0)=1/m$ and $\mathrm{P}(\eta_j=1)=1-1/m$.
		Then, by Chernoff's inequality,
		we have for $t>0$
		\begin{align*}
			\mathrm{P}\left(A_k\ge\frac{cn}m\right)
			\le e^{-cnt/m}\mathrm{E}\left[e^{A_kt}\right]
			=e^{-cnt/m}\mathrm{E}\left[e^{\eta_1t}\right]^n
			=e^{-cnt/m}\left(1-\frac1m+\frac1me^t\right)^n.
		\end{align*}
		Setting $t=\log2$ and assuming $c>\log2$,
		we obtain
		\[
			\mathrm{P}\left(A_k\ge \frac{cn}m\right)
			\le 2^{-cn/m}\left(1+\frac1m\right)^n
			\le (2^{-c}e)^{n/m}\le (2^{-c}e)^{2n^{1-\gamma}},
		\]
		where we have used the fact that $(1+1/x)^x$ is increasing on $x>0$.
		Then we finally have
		\[
			\mathrm{P}\left(
				\max_{0\le k\le m-1}A_k\ge \frac{cn}m
			\right)\le
			m(2^{-c}e)^{2n^{1-\gamma}}\le n^\gamma(2^{-c}e)^{2n^{1-\gamma}}
			=2^{-n^{1-\gamma}}\cdot n^\gamma\left(2^{-(2c-1)}e^2\right)^{n^{1-\gamma}}.
		\]
		Considering the logarithm of the last term,
		it is sufficient to take $c$ as large enough to satisfy
		\[
			(2c-1)\log2\ge 2+\gamma\max_{n\ge1}\frac{\log n}{n^{1-\gamma}},
		\]
		and we obtain the conclusion.
	\end{proof}
	
	We now prove Theorem \ref{linear_not_efficient}.
	Let $\gamma=\frac1{1+\beta}$, and let
	\[
		R^*:=\inf_{\hat{f}:\rm{linear}}\sup_{f^\circ\in\mathcal{J}^p_\psi(C_1, C_2, \beta)}
		\mathrm{E}\left[\|\hat{f}-f^\circ\|_{L^2}^2\right].
	\]
	Fix a linear estimator $\hat{f}(x)=\sum_{i=1}^nY_i\phi_i(x;X^n)$.
	For any $f^\circ\in BV(C)$,
	we have by Fubini's theorem
	\begin{align}
		R^*&\ge\mathrm{E}\left[\|\hat{f}-f^\circ\|_{L^2}^2\right]\nonumber\\
		&=\mathrm{E}\left[
			\left\|\sum_{i=1}^nf^\circ(X_i)\phi_i(\cdot;X^n)-f^\circ\right\|_{L^2}^2
		\right]
		+\sigma^2\sum_{i=1}^n\mathrm{E}\left[\|\phi_i(\cdot;X^n)\|_{L^2}^2\right].
		\label{eq:zhang1}
	\end{align}
	Take a sufficiently large $n$,
	and let $m$ be a power of $2$ in $[\frac12n^{\gamma}, n^{\gamma}]$.
	Notice that it holds that $m\le n^\gamma\le 2m$.
	Then there exists an integer $0\le k<m$ such that
	\begin{align}
		\int_{k/m}^{(k+1)/m}\mathrm{E}\left[
			\sum_{i=1}^n\phi_i(x;X^n)^2
		\right]\dd x\le\frac{R^*}{\sigma^2m}.
		\label{eq:zhang2}
	\end{align}
	Let $f^\circ=Fm^{-\beta/2}\cdot m^{1/2}\psi(m\cdot -k)$,
	where $F:=\min\{C_1, C_2^{1/2}\}$.
	Then we have $f^\circ\in \mathcal{J}^p_\psi(C_1, C_2, \beta)$.
	Let $A$ denote an
	event assured to have a probability of at least $1-2^{-n^{1-\gamma}}$ in Lemma \ref{lem:kantan},
	and we obtain
	\begin{align}
		\int_{k/m}^{(k+1)/m}
		&\mathrm{E}\left[\left(\sum_{i=1}^nf^\circ(X_i)\phi_i(x;X^n)\right)^2,A\right]\dd x
		\nonumber\\
		&\le \int_{k/m}^{(k+1)/m}\mathrm{E}\left[
			\left(\sum_{i=1}^nf^\circ(X_i)^2\right)\left(\sum_{i=1}^n\phi_i(x;X^n)^2\right),A
		\right]\dd x\nonumber\\
		&\le M\cdot\frac{cn}m(Fm^{(1-\beta)/2}\|\psi\|_\infty)^2
		\int_{k/m}^{(k+1)/m}\mathrm{E}\left[
			\sum_{i=1}^n\phi_i(x;X^n)^2,A
		\right]\dd x
		\le\frac{McF^2\|\psi\|_\infty^2}{\sigma^2}\cdot\frac{R^*n}{m^{1+\beta}},
		\label{eq:zhang3}
	\end{align}
	where 
	$M$ is the number of sections $[\ell, \ell+1)$ such that
	$\ell$ is an integer and $[\ell, \ell+1)\cap \mathrm{supp}(\psi)$ is not empty.
	The last inequality has been derived by (\ref{eq:zhang2}).
	By (\ref{eq:zhang1}), (\ref{eq:zhang3}), and the triangle inequality,
	\begin{align*}
		\sqrt{R^*}&\ge\left(\int_{k/m}^{(k+1)/m}\mathrm{E}\left[
			\left(\sum_{i=1}^nf^\circ(X_i)\phi_i(x;X^n)-f^\circ(x)\right)^2, A
		\right]\dd x\right)^{1/2}\\
		&\ge
		\left(\mathrm{P}(A)\|f^\circ\|_{L^2}^2\right)^{1/2}
		-\left(
			\mathrm{E}\left[\left(\sum_{i=1}^nf^\circ(X_i)\phi_i(x;X^n)\right)^2,A\right]\dd x
		\right)^{1/2}\\
		&\ge(1-2^{-n^{1-\gamma}})^{1/2}Fm^{-\beta/2}
		-\left(\frac{McF^2\|\psi\|_\infty^2}{\sigma^2}\right)^{1/2}
		m^{-(1+\beta)/2}n^{1/2}\sqrt{R^*}.
	\end{align*}
	Since $n$ is sufficiently large, we can assume $1-2^{-n^{1-\gamma}}\ge 1/2$.
	Define a constant $G$ by
	\[
		G:=\left(\frac{McF^2\|\psi\|_\infty^2}{\sigma^2}\right)^{1/2}.
	\]
	We have $m^{1+\beta}=m^{1/\gamma}\ge(\frac12n^\gamma)^{1/\gamma}=2^{-1/\gamma}$ by assumption,
	and so we obtain
	\[
		R^*\ge\frac{(F^2/2)m^{-\beta}}{(1+Gm^{-(1+\beta)/2}n^{1/2})^2}
		\ge \frac{F^2}{2(1+2^{1/\gamma}G)^2}m^{-\beta}
		\ge \frac{F^2}{2(1+2^{1/\gamma}G)^2}n^{-\frac\beta{1+\beta}}
	\]
	as desired.


\subsection{Proof of Lemma \ref{cov_eval}}\label{ap-4}
	(mainly following \acite{suzuki2018adaptivity})
	For $f\in\N(L, S, D, B)$ expressed as
	\[
		f=W_{L+1}\circ\rho(W_L\cdot-v_L)\circ\cdots\circ\rho(W_1\cdot-v_1),
	\]
	let us define
	\begin{align*}
		\mathcal{A}_k(f):=\rho(W_{k-1}\cdot-v_{k-1})\circ\cdots\circ\rho(W_1\cdot-v_1),\quad
		\mathcal{B}_k(f):=W_{L+1}\circ\rho(W_L\cdot-v_L)\circ\cdots\circ\rho(W_k\cdot-v_k)
	\end{align*}
	for $k=1,\ldots, L+1$,
	where $\mathcal{A}_1(f)$ denotes the identity map, and $\mathcal{B}_{L+1}(f)=W_{L+1}$.
	Then $f=\mathcal{B}_{k+1}(f)\circ\rho(W_k\cdot-v_k)\circ\mathcal{A}_k(f)$ holds
	for $k=1,\ldots, L$.
	Here, notice that for each $x\in [0, 1]^d$ and $1\le k\le L+1$,
	\begin{align}
		\|\mathcal{A}_k(f)(x)\|_\infty
		&\le D\|W_{k-1}\|_\infty\|\mathcal{A}_{k-1}(f)(x)\|_\infty+\|v_{k-1}\|_\infty \nonumber\\
		&\le DB\|\mathcal{A}_{k-1}(x)\|_\infty+B\nonumber\\
		&\le B+DB^2+D^2B^3+\cdots+D^{k-2}B^{k-1}+D^{k-1}B^{k-1}\nonumber\\
		&\le B^{k-1}(D+1)^{k-1}
		\label{eq:eval_a}
	\end{align}
	holds, where we have used the assumption $B\ge1$ at the last inequality.
	Also, the Lipschitz continuity of $\mathcal{B}_k(f)$ can be derived as
	\begin{align}
		\|\mathcal{B}_k(f)(x)-\mathcal{B}_k(f)(x')\|_\infty
		\le (BD)^{L-k+2}\|x-x'\|_\infty.
		\label{eq:eval_b}
	\end{align}
	
	Let $\ve>0$.
	Suppose $f, g\in\N(L, S, D, B)$ satisfy
	\begin{align*}
		f=W_{L+1}\circ\rho(W_L\cdot-v_L)\circ\cdots\circ\rho(W_1\cdot-v_1),\quad
		g=W_{L+1}'\circ\rho(W_L'\cdot-v_L')\circ\cdots\circ\rho(W_1'\cdot-v_1')
	\end{align*}
	and $\|W_i-W'_i\|_\infty\le\ve$, $\|v_i-v_i'\|_\infty\le\ve$ for each $i$.
	Then we have by (\ref{eq:eval_a}) and (\ref{eq:eval_b})
	\begin{align*}
		|f(x)-g(x)|
		&\le 
		\sum_{k=1}^{L}
		|\mathcal{B}_{k+1}(f)\circ\rho(W_k\cdot-v_k)\circ\mathcal{A}_k(g)(x)
		-\mathcal{B}_{k+1}(f)\circ\rho(W_k'\cdot-v_k')\circ\mathcal{A}_k(g)(x)|\\
		&\quad+|(W_{L+1}-W_{L+1}')\mathcal{A}_{L+1}(g)(x)|\\
		&\le\sum_{k=1}^L(BD)^{L-k+1}\left(\ve D(B(D+1))^{k-1}+\ve\right)+\ve DB^L(D+1)^L\\
		&\le \ve(L+1)B^L(D+1)^{L+1}.
	\end{align*}
	Therefore, for a fixed sparsity pattern,
	and letting $\ve=\left((L+1)B^L(D+1)^{L+1}\right)^{-1}\delta$,
	the $\delta$-covering number is bounded by
	\[
		\left(\frac{2B}\ve\right)^S=\delta^{-S}\left(2(L+1)B^{L+1}(D+1)^{L+1}\right)^S.
	\]
	The number of such patterns is bounded by $\binom{(D+1)^{L}}{S} \le (D+1)^{LS}$,
	and so the $\delta$-covering entropy is bounded by
	\begin{align}
		&\log\left((d+1)^{LS}\delta^{-S}\left(2(L+1)B^{L+1}(D+1)^{L+1}\right)^S\right)
		\label{eq:eval_c}\\
		&=S\log\left( 2\delta^{-1}(L+1)(D+1)^{2L+1}B^{L+1}\right)
		\le 2S(L+1)\log\left(
			\frac{B(L+1)(D+1)}\delta
		\right),\label{eq:eval_d}
	\end{align}
	as desired.


\subsection{Proof of Theorem \ref{thm:main_p>0}}\label{proof_of_main_theorem}

	Let $N$ be an integer in $[n^{\frac1{2\alpha+1}}, 2n^{\frac1{2\alpha+1}}]$.
	Also, suppose that we have an integer $m$ in $[\frac{2\alpha}{\beta(2\alpha+1)}\log_2 n,$
	$\frac{4\alpha}{\beta(2\alpha+1)}\log_2 n]$ (for sufficiently large $n$).
	Fix the target function
	\[
		f^\circ
		=\sum_{(k,\ell)\in T_0}a_{k, \ell}\psi_{k,\ell}.
	\]
	Then, let $(k^1, \ell^1), \ldots, (k^N, \ell^N)\in T_0\setminus T_m$ be
	the $N$ (absolutely) largest coefficients
	$a_{k^1, \ell^1}, \ldots, a_{k^N, \ell^N}$.
	Let
	\[
		T:=\{(k^1, \ell^1), \ldots, (k^N, \ell^N)\}.
	\]
	Then we have
	\begin{align}
		\left\|\sum_{(k, \ell)\in T}a_{k, \ell}\psi_{k, \ell}-f^\circ
		\right\|_{L^2}^2
		=\sum_{(k, \ell)\in T_0\setminus T}a_{k, \ell}^2
		&\le \sum_{(k, \ell)\in T_m}a_{k, \ell}^2
		+\sum_{i=N+1}^\infty C_1 (i^{-1/p})^2\nonumber\\
		&\le C_2n^{-\frac{2\alpha}{2\alpha+1}}+C_1\int_N^\infty x^{-2/p}\dd x\nonumber\\
		&=C_2n^{-\frac{2\alpha}{2\alpha+1}}+C_1\frac{1-p}pN^{-2\alpha}
		\le \left(C_2+C_1\frac{1-p}p\right)n^{-\frac{2\alpha}{2\alpha+1}}.
		\label{eq:nn1}
	\end{align}
	
	Next, we approximate $\sum_{(k, \ell)\in T}a_{k, \ell}\psi_{k, \ell}$ by some neural network.
	Now, $(k, \ell)\in T$ implies that 
	\[
		(k, \ell)=\bigl(
		(k_1, \ldots, k_d), (\ell_1, \ldots, \ell_d)
		\bigr)
	\]satisfies $\max_{1\le i\le d}k_i<m$.
	Let $\tilde{\psi}\in\N(L_\ve, S_\ve, D_\ve, B_\ve)$
	satisfy $\|\tilde{\psi}-\psi\|_{L^2}\le\ve$.
	Since we have
	\[
		\psi_{k, \ell}(x_1, \ldots, x_d)=2^{\frac{k_1+\cdots+k_d}2}\psi\left(
			2^{k_1}t_1-\ell_1, \ldots, 2^{k_d}t_d-\ell_d
		\right),
	\]
	we can construct the approximator
	\begin{align}
		\tilde{f}\in\N(L_\ve+2, N(S_\ve+2D_\ve d+d^2+d+1),
		N D_\ve, \max\{B_\ve, 2^m\})
		\label{eq:nn2}
	\end{align}
	in a manner similar to that shown in Fig. \ref{fig:nn1}
	such that
	\begin{align}
		\left\|\tilde{f}-\sum_{(k, \ell)}a_{k, \ell}\psi_{k, \ell}\right\|_{L^2}
		\le \sum_{(k, \ell)\in T}|a_{k, \ell}|\|\tilde{\psi}_{k, \ell}-\psi_{k, \ell}\|_{L^2}
		\le C_1 N\ve
		\le 2C_1\ve n^{\frac1{2\alpha+1}}
		\label{eq:nn3}
	\end{align}
	holds.
	If we determine $\ve=1/n$ and define $\N_F$ by using the set defined in (\ref{eq:nn2}),
	we have, by Lemma \ref{cov_eval} and the assumption,
	\begin{align}
		V_{(\N_F, \|\cdot\|_{L^\infty})}\left(\frac1n\right)
		&\le 2N\left(S_{1/n}+2D_{1/n}d+d^2+d+1\right)(L_{1/n}+3)
		\log\left(
			\max\{B_{1/n}, 2^m\}(L_{1/n}+3)(ND_{1/n}+1)n
		\right)\nonumber\\
		&\le 4n^{\frac1{2\alpha+1}}\left(C_1'(1+2d)\log n+d^2+d+1\right)(C_1'\log n+3)
		\nonumber\\
		&\quad\cdot\left(
			\log\log(C_2'n)+\frac{2\alpha}{\beta(2\alpha+1)}\log n +\log(C_1'\log n +3)
			+\log (C_1'N\log n+1) + \log n
		\right)\nonumber\\
		&\le C'n^{\frac1{2\alpha+1}}(\log n)^3
		\label{eq:nn4}
	\end{align}
	for some constant $C'>0$.
	
	Combining (\ref{eq:nn1}), (\ref{eq:nn2}), (\ref{eq:nn4}), and Theorem \ref{gen_eval},
	we obtain an evaluation
	\begin{align*}
		R(\hat{f}, f^\circ)
		&\le 4\left(
			\left(C_2+C_1\frac{1-p}p\right)^{1/2}n^{-\frac{\alpha}{2\alpha+1}}
			+2C_1n^{-\frac{2\alpha}{2\alpha+1}}
		\right)^2+C''\left(
			C'(F^2+\sigma^2)n^{-\frac{2\alpha}{2\alpha+1}}(\log n)^3
			+\frac{F+\sigma}n
		\right)\\
		&\le CF^2n^{-\frac{2\alpha}{2\alpha+1}}(\log n)^3
	\end{align*}
	for some $C>0$,
	where $\hat{f}$ denotes the empirical risk minimizer over $\N_F$.


\subsection{Proof of Theorem \ref{p_share_own}}\label{ap-5}
	We consider functions expressed as $f^N=\sum_{i=1}^Nc_if(A_i\cdot-b_i)\in\N^N(L, S, D, B)$,
	where $f\in\N(L, S, D, B)$.
	Then it suffices to consider $f$ defined over $[-B(d+1), B(d+1)]^d
	\subset[-B(D+1), B(D+1)]^d$.
	This changes evaluation (\ref{eq:eval_a}) to
	$\|\mathcal{A}_k(f)(x)\|_\infty\le B^k(D+1)^k$,
	and so $B^{L+1}(D+1)^{L+1}$ in (\ref{eq:eval_c}) is replaced by $B^{L+2}(D+1)^{L+2}$
	in the evaluation of the covering entropy with the domain limited to $[-B(D+1), B(D+1)]^d$.
	However, the upper bound (\ref{eq:eval_d}) is still valid if $L\ge2$,
	and so we have a set $\N_\ve$ for each $0<\ve<1$ satisfying
	\begin{itemize}
		\item
			$\N_\ve$ is an $\ve$-covering of $\N(L, S, D, B)$ with
			respect to $\|\cdot\|_{L^\infty}$, where the domain is limited to
			$[-B(D+1), B(D+1)]^d$;
		\item
			$\log|\N_\ve|\le\displaystyle 2S(L+1)\log
			\left(\frac{B(L+1)(D+1)}\ve\right)$.
	\end{itemize}
	For $x\in[0, 1]^d$,
	we have
	$|f(A_ix-b_i)|\le B^{L+2}(D+1)^{L+2}$ by an evaluation similar to the one in (\ref{eq:eval_a}).
	Also, we have the Lipschitz continuity
	$|f(x)-f(x')|\le B^{L+1}D^{L+1}\|x-x'\|_\infty$, similar to (\ref{eq:eval_b}).
	
	Let us consider two functions in $\N^N(L, S, D, B)$ expressed as
	\begin{align*}
		f^N=\sum_{i=1}^Nc_if(A_i\cdot-b_i),\quad g^N=\sum_{i=1}^Nc_i'g(A_i'\cdot-b_i'),
		\quad
		f, g\in\N(L, S, D, B),
	\end{align*}
	such that
	$\|f-g\|_{L^\infty([-B(D+1), B(D+1)]^d)}\le\ve$ holds
	and $\|A_i-A_i'\|_\infty, \|b_i-b_i'\|_\infty, |c_i-c_i'|\le\ve$ holds for each $i$.
	Then we have for each $x\in[0, 1]^d$
	\begin{align*}
		&|f^N(x)-g^N(x)|\\
		&\le \sum_{i=1}^N|c_if(A_ix-b_i)-c_i'g(A_i'x-b_i')|\\
		&\le \sum_{i=1}^N
			|c_i-c_i'||f(A_ix-b_i)|
			+\sum_{i=1}^N|c_i'||f(A_ix-b_i)-g(A_ix-b_i)|
			+\sum_{i=1}^N|c_i'||g(A_ix-b_i)-g(A_i'x-b_i')|\\
		&\le N\ve B^{L+2}(D+1)^{L+2}+NB\ve +NB\cdot B^{L+1}D^{L+1}(\ve d+\ve)\\
		&\le 3N\ve B^{L+2}(D+1)^{L+2}.
	\end{align*}
	If we determine $\ve=(3NB^{L+2}(D+1)^{L+2})^{-1}\delta$ for $0<\delta<1$,
	the $\delta$-covering entropy
	of $\N^N(L, S, D, B)$
	is now bounded by
	\begin{align*}
		&\log\left(
			\left(\frac{2B}\ve\right)^{N(d^2+d+1)}|\N_\ve|
		\right)\\
		&= N(d^2+d+1)\log\left(
			\frac{2B}\ve
		\right)+\log|\N_\ve|\\
		&\le  N(d+1)^2\log\left(
			\frac{6NB^{L+3}(D+1)^{L+2}}\delta
		\right)
		+2S(L+1)\log\left(\frac{3NB^{L+3}(L+1)(D+1)^{L+3}}\delta\right)\\
		&\le \left(N(d+1)^2+2S(L+1)\right)(L+3)\log\left(\frac{NB(L+1)(D+1)}\delta\right),
	\end{align*}
	as desired.


\subsection{Proof of Lemma \ref{KL-L2}}\label{ap-1}
	The probability law $\mathrm{P}_f$ generated by $f$ is regarded as being on $\R^{d}\times\R$.
	Hence, its density at $z=(x, y)$ is
	\[
		p_f(z)=p_{X}(x)p_{Y\mid X}(y\mid x)
		=\frac1{\sqrt{2\pi\sigma^2}}
		\exp\left(-\frac{(y-f(x))^2}{2\sigma^2}\right).
	\]
	As $p_g$ can be calculated in the same way, we have
	\begin{align*}
		d_\mathrm{KL}(f, g)^2
		=\int_{\R^d\times\R} p_f(z) \log\frac{p_f(z)}{p_g(z)} \dd z
		=\int_{\R^d\times\R}p_f(z) \frac1{2\sigma^2}
		\bigl((y-g(x))^2-(y-f(x))^2\bigr) \dd z.
	\end{align*}
	This coincides with the expectation of
	$\frac1{2\sigma^2}\bigl((Y-g(X))^2-(Y-f(X))^2\bigr)$
	with $Y=f(X)+\xi$.
	The term in parentheses is calculated as 
	\begin{align*}
		\mathrm{E}\left[
			(Y-g(X))^2-(Y-f(X))^2
		\right]
		&=\mathrm{E}\left[(f(X)-g(X)+\xi)^2-\xi^2\right]\\
		&=\mathrm{E}\left[(f(X)-g(X))^2-2\xi(f(X)-g(X))\right]\\
		&=\mathrm{E}\left[(f(X)-g(X))^2\right]
		-2\mathrm{E}[\xi]\cdot\mathrm{E}[f(X)-g(X)]=\|f-g\|_{L^2}^2,
	\end{align*}
	where we have used the facts that each $X$ follows the uniform distribution over $[0, 1]^d$
	and that each $\xi$ is independent of $X$.
	Thus, we obtain the desired result.


\makeatletter

\def\pct{\expandafter\@gobble\string\%}

\immediate\write\@auxout{\pct\space This is a test line.\pct }

\end{document}